\documentclass{article}
\usepackage[numbers,sort&compress,square]{natbib}
 \usepackage[preprint]{neurips_2026}
\usepackage[utf8]{inputenc} 
\usepackage[T1]{fontenc}    

\usepackage{url}            
\usepackage{booktabs}       
\usepackage{multirow}
\usepackage{amsfonts}       
\usepackage{nicefrac}       
\usepackage{microtype}      
\usepackage{xcolor}         
\usepackage{graphicx}
\usepackage{amsmath}
\usepackage{amssymb}
\usepackage{amsthm}
\usepackage{mathtools}
\usepackage{etoc}
\usepackage{tabularx}
\usepackage{algorithm}
\usepackage[noend]{algpseudocode}

\theoremstyle{plain}
\newtheorem{theorem}{Theorem}[section]

\theoremstyle{definition}

\theoremstyle{remark}

\usepackage[pagebackref=true,breaklinks=true,colorlinks=true,bookmarks=false]{hyperref}
\definecolor{deepred}{HTML}{940000}
\hypersetup{linkcolor=deepred}
\hypersetup{urlcolor  = [rgb]{0.4,0.15,0.95}}
\hypersetup{citecolor=[rgb]{0.4,0.15,0.95}}
\usepackage[capitalize,noabbrev]{cleveref}

\usepackage{pifont}
\usepackage{makecell}
\usepackage{titletoc}

\usepackage[toc,page,header]{appendix}
\usepackage{minitoc}

\renewcommand \thepart{}
\renewcommand \partname{}

\makeatletter
\newcommand{\ANDtight}{%
  \end{tabular}\hfil\linebreak[4]\hfil%
  \begin{tabular}[t]{c}\bf\rule{\z@}{10\p@}\ignorespaces%
}
\makeatother

\title{AMO: Adaptive Muon Orthogonalization}

\author{
    Xinlin Zhuang\textsuperscript{1,2,3,4} \quad
    Panyi Ouyang\textsuperscript{2} \quad
    Yichen Li\textsuperscript{3,5} \quad
    Jiangming Shi\textsuperscript{6} \quad
    Yizhang Chen\textsuperscript{2} \ANDtight
    Shuman Liu\textsuperscript{2} \quad
    Ying Qian\textsuperscript{4} \quad
    Weiyang Liu\textsuperscript{1}\thanks{Corresponding authors.} \quad
    Haibo Zhang\textsuperscript{2}\footnotemark[1] \quad
    Imran Razzak\textsuperscript{3}\footnotemark[1] \ANDtight
    \normalfont
    \textsuperscript{1}The Chinese University of Hong Kong \quad
    \textsuperscript{2}Shopee \quad
    \textsuperscript{3}MBZUAI \quad
    \textsuperscript{4}East China Normal University \\
    \textsuperscript{5}Huazhong University of Science and Technology \quad
    \textsuperscript{6}Xiamen University \\
    \texttt{\footnotesize xinlinzhuang@stu.ecnu.edu.cn}
}

\begin{document}

\maketitle
\doparttoc
\faketableofcontents

\begin{abstract}

Muon has recently emerged as a competitive alternative to AdamW for large-scale pre-training, with orthogonalization via Newton–Schulz (NS) iterations as its core operation. Existing Muon variants apply a uniform NS schedule to all parameter matrices, overlooking possible differences in orthogonalization difficulty and its impact on performance.
Through a systematic empirical study, we show that this per-matrix heterogeneity is pervasive and largely determined by matrix geometry, which evolves dynamically across operator types, training stages, and network depths.
As a result, uniform NS schedules can lead to uneven orthogonalization quality across the model.
Motivated by these findings, we propose Adaptive Muon Orthogonalization (AMO), an observe-then-commit method that measures weight geometry by operator type early in training and then uses these signals to allocate the NS budget for the remainder of training. 
AMO delivers consistent improvements over uniform-schedule Muon across standard, prolonged, and continual pre-training, surpassing the strongest baseline by +0.76 on Llama3.1-1.4B and +0.51 on Qwen3-1.7B in average downstream performance of 12 evaluation tasks.

\end{abstract}

\section{Introduction}

Muon~\citep{muon} has emerged as a competitive alternative to AdamW for large-scale model pre-training~\citep{muon_tail,what_really_matters}, with its core operation being the orthogonalization via Newton-Schulz (NS) iterations~\citep{dsv4,kimi_k2,practical_efficiency,dmuon,GramMuon,adamuon,normuon}.
The quality of NS orthogonalization is determined by two design choices: the count of NS iteration steps $T$ and the coefficient sequence $\{(a_k, b_k, c_k)\}_{k=1}^T$. 
Existing works either fit a single sequence shared across all matrices and steps (Muon's optimization-fitted coefficients and PE/CANS's Chebyshev-based designs~\citep{pe,cans}), or take a first step towards adaptation by specializing NS coefficients to \textbf{matrix shape}, as in ROOT~\citep{root}. 
Yet ROOT's shape-level adaptation still treats matrices with the same shape as interchangeable, leaving a more fundamental question unresolved: \textit{do matrices truly differ in how difficult they are to orthogonalize, and if so, along which dimension?}

We conduct a systematic pilot study to investigate this question (Sec.~\ref{sec:pilot}). 
\textbf{First}, a factorial study over two model families, two optimizers, four learning-rate (LR) schedulers, and three data orderings shows that no monotone difficulty ordering improves over random mixing (Fig.~\ref{fig:data_curriculum_main}), ruling out data-side curriculum as a remedy and confirming that heterogeneity should be addressed on the optimizer side. 
\textbf{Second}, a model geometry analysis tracking momentum matrices updated by Muon reveals that heterogeneity is real and structured: a geometry-derived input statistic predicts NS quality (signal validity, Fig.~\ref{fig:geometry_main}(a)); operator types exhibit persistent, well-separated difficulty trajectories (operator-type structure, Fig.~\ref{fig:geometry_main}(b)); per-layer profiles within an operator type evolve similarly and differ only marginally in magnitude (weak layer magnitude, Fig.~\ref{fig:geometry_main}(c)); and even those marginal layer differences reverse during training (unstable layer ordering, Fig.~\ref{fig:geometry_main}(d)). 
Together, these findings establish that the natural unit of adaptation is the \textbf{operator type}, not the individual layer or matrix, and that the control signal must be measured online during training.

\begin{figure}[t]
    \centering
    \includegraphics[width=1.0\linewidth]{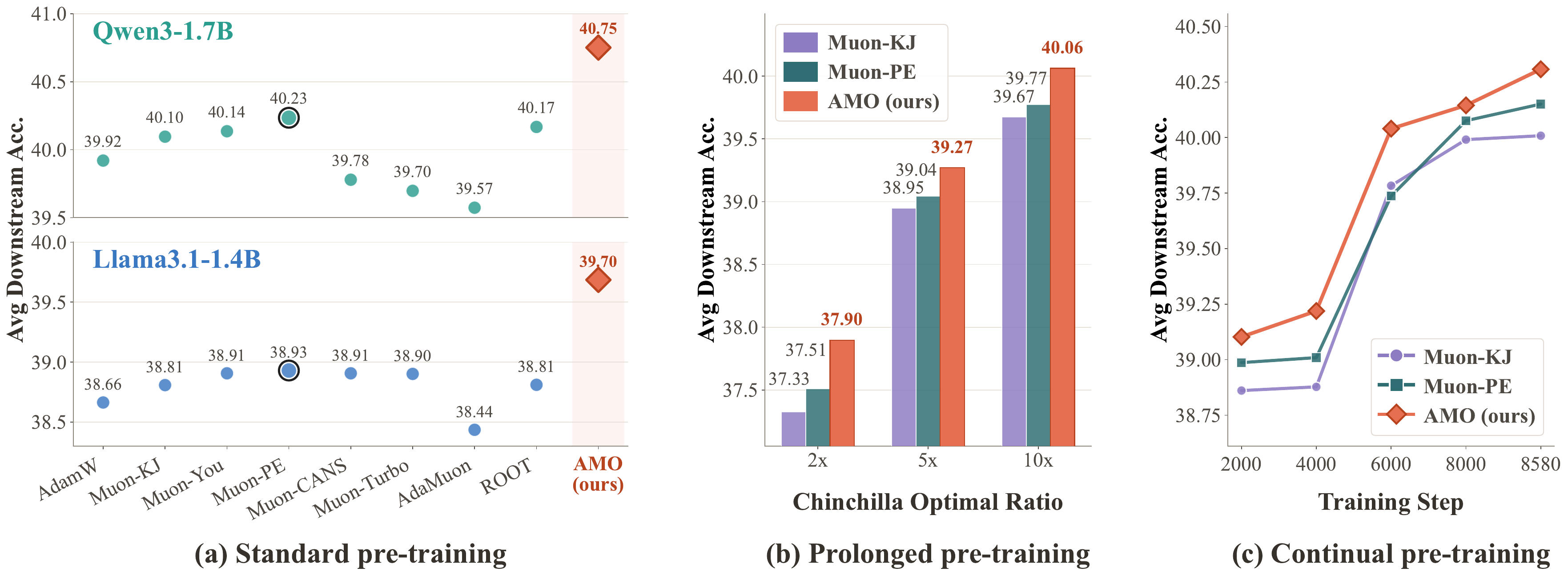}
    \caption{Results of AMO and baseline optimizers on \textbf{(a)} standard pre-training across model families and scales, \textbf{(b)} prolonged pre-training (2x/5x/10x Chinchilla optimal ratios) on Qwen3-0.6B, and \textbf{(c)} continual pre-training of Qwen3-0.6B on Slimpajama. AMO consistently leads in all three settings.}
    \label{fig:main}
\end{figure}

Taken together, these observations motivate \textbf{adaptive Muon}: \textit{how to allocate NS effort non-uniformly across parameter matrices based on their measured optimization difficulty?}
To address this problem, we propose \textbf{Adaptive Muon Orthogonalization (AMO)}, a simple observe-then-commit method that observes per-operator-type geometry signals under the common uniform schedule for a fixed 
window, computes a single allocation plan via greedy budget reallocation over projected NS error curves, then linearly transitions to that plan and locks it for the remainder of training. 
Extensive experiments on Llama3.1 / Qwen3 models ranging from 0.6B to 1.7B parameters show that AMO consistently outperforms uniform-schedule Muon and other baselines across model scales in standard pre-training, prolonged pre-training, and continual pre-training, demonstrating the effectiveness of AMO (Fig.~\ref{fig:main}).

Our contributions are threefold. 
First, we \textbf{establish the necessity} of optimizer-side adaptation by showing that per-matrix optimization heterogeneity in Muon is intrinsic to optimization dynamics and cannot be resolved by data-side curriculum interventions, ruling out the simplest alternative explanation. 
Second, we \textbf{characterize its structure} through four geometry findings: signal validity, operator-type structure, weak layer magnitude, and unstable layer ordering, which together identify where and how heterogeneity manifests and provide the structural basis for any adaptive Muon scheme. 
Third, we \textbf{propose AMO}, a simple and efficient observe-then-commit method that achieves consistent gains over uniform-schedule Muon methods across standard, prolonged, and continual pre-training, demonstrating the viability and effectiveness of adaptive Muon.
\section{Pilot Analysis}
\label{sec:pilot}

Existing Muon-family optimizers apply a global NS schedule to every updated parameter matrix in the network. 
Most of them~\citep{pe,cans,turbo_muon} share one schedule across all matrices and all training steps while ROOT~\citep{root} specializes NS coefficients to \textbf{matrix shape} but still treats matrices of the same shape (e.g.\ \texttt{attn\_k} and \texttt{attn\_v} in Llama / Qwen models) as interchangeable. 
Whether this uniformity is adequate depends on whether per-matrix optimization difficulty is truly uniform, and if not, on \textit{where} the heterogeneity comes from. 
We therefore ask two questions. 
\textbf{Q1} (Sec.~\ref{subsec:data_curriculum}): Can curriculum learning reduce per-matrix heterogeneity enough to remove the need for optimizer-side adaptation? \textbf{Q2} (Sec.~\ref{subsec:model_geometry}): If not, is the structure of this heterogeneity \textbf{stable enough} across operator type, model depth, and training step to admit an adaptive scheme?
Our answers are no to \textbf{Q1} and yes to \textbf{Q2}. 
The rest of this section provides concrete evidence and translates it into design requirements.

\subsection{Data Curriculum}
\label{subsec:data_curriculum}

Classical curriculum learning predicts that ordering examples by difficulty improves optimization \citep{lr_waste}. 
If per-matrix optimization heterogeneity is induced by the within-batch mixture of example 
difficulties, a difficulty-ordered presentation should both reduce this heterogeneity and improve final performance. 
We test this hypothesis with a factorial study over two base models (Llama3.1-760M, Qwen3-0.6B), two optimizers (both Muon and AdamW are considered), four LR schedulers (Warmup-Stable (WS), Warmup-Stable-Decay (WSD), Cosine, Constant), and three data orderings (\texttt{Uniform}, \texttt{Ascend}, \texttt{Descend}) on the \texttt{FineWeb-Edu} dataset~\citep{fineweb}, where data are sorted by the \textit{Educational Value} score field.
The same data pool is used across all runs where only the \textbf{data presentation order} differs. 
Full hyperparameters, configurations, and results are provided in App.~\ref{app:data_curriculum}.

Fig.~\ref{fig:data_curriculum_main} reports the average downstream accuracy across a fixed evaluation suite for all 16 configurations. 
Surprisingly, the performance ordering $\texttt{Uniform} > \texttt{Descend} > \texttt{Ascend}$ holds consistently in \textbf{every} configuration, with no inversion across model, optimizer, or LR scheduler.
Critically, \texttt{Ascend} and \texttt{Descend} traverse the difficulty in opposite directions yet both underperform \texttt{Uniform}: the failure is not a wrong choice of 
direction but a failure of any monotone difficulty ordering, with random mixing dominating. 
Since a difficulty-ordered presentation would be the most direct way to reduce within-batch difficulty mixture, its failure indicates that the heterogeneity does not originate there.
Heterogeneity is therefore intrinsic to optimization dynamics itself (during Muon training runs).

\begin{figure}[t]
    \centering
    \includegraphics[width=1.0\linewidth]{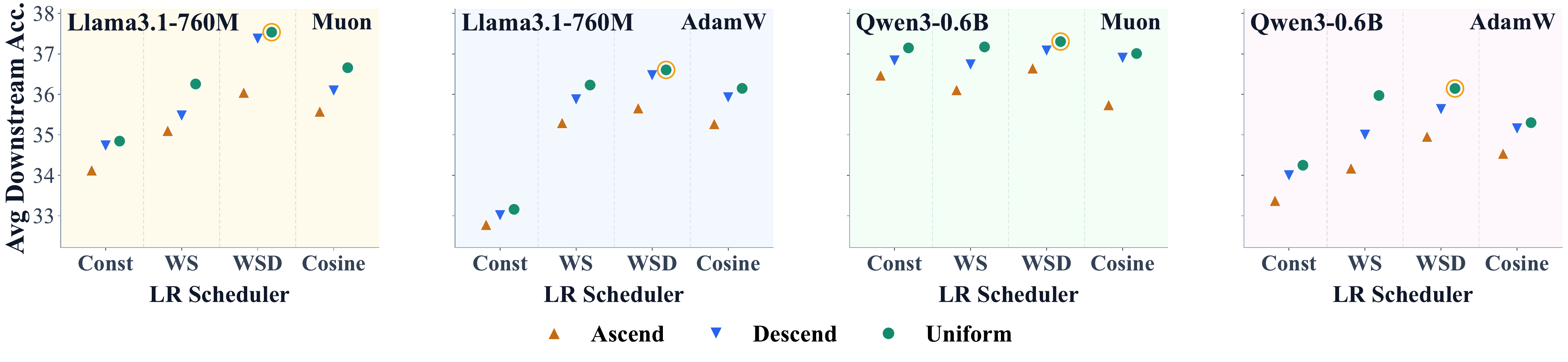}
    \caption{Average downstream accuracy under three data orderings, across 16 configurations. 
    The best result in each subplot is marked with a yellow circle.
    \texttt{Ascend} and \texttt{Descend} denote curriculum learning in increasing/decreasing sample score order, while \texttt{Uniform} uses a random order.
    }
    \label{fig:data_curriculum_main}
\end{figure}

\subsection{Model Geometry}
\label{subsec:model_geometry}

Let $M_t \in \mathbb{R}^{m\times n}$ denote one Muon momentum matrix at step $t$. 
With Nesterov momentum,
\begin{equation}
M_t = G_t + \mu B_t, \qquad B_t = \mu B_{t-1} + G_t,
\end{equation}
where $G_t$ is the stochastic gradient and $B_t$ is the momentum buffer. 
Muon orthogonalizes $M_t$ via $T$-step NS iterations, producing $M_t'$. 
We compute, for each Muon-updated 2D matrix in every transformer block (seven operator types: 
\texttt{attn\_\{q,k,v,o\}}, \texttt{mlp\_\{gate,up,down\}}), the condition number of the NS input
$\kappa(M_t)$ and the shape-normalized NS residual $\tilde{\epsilon}_{\mathrm{NS}} \;=\; 
\frac{\lVert M_t' {M_t'}^{\!\top} - I \rVert_F}{\sqrt{\min(m,n)}}$,
which is comparable across matrices of different shapes. 
Fig.~\ref{fig:geometry_main} summarizes four findings on a representative model Qwen3-0.6B, and analogous patterns hold for all seven remaining Llama3.1 / Qwen3 models with parameters ranging from 70M to 1.7B.
Complete results are provided in App.~\ref{app:model_geometry}.

\paragraph{F1 and F2: A geometry signal predicts NS quality and concentrates at operator type.}

As shown in Fig.~\ref{fig:geometry_main}(a), 
the NS-input condition number $\kappa(M_t)$ is positively correlated with the shape-normalized residual $\tilde{\epsilon}_{\mathrm{NS}}$ with Pearson $r=0.729$, confirming that a geometry-derived input-side statistic is a valid control signal for NS allocation (\textbf{F1}). 
App.~\ref{app:model_geometry} shows that Pearson $r$ lies in $[0.722, 0.813]$ for Llama3.1 and $[0.671, 0.912]$ for Qwen3.
Operator-type trajectories of median $\kappa$ are persistently separated throughout training (Fig.~\ref{fig:geometry_main}(b)), with between-type variation visibly dominating within-type variation across layers (\textbf{F2}). 
Moreover, the dominant hard operators differ across families: \texttt{attn\_v} on Qwen3, \texttt{attn\_q} and \texttt{attn\_o} on Llama3.1, with no single operator dominating across families, which explains why the shape-level design, ROOT,  does not transfer cleanly: the dominant hard operators differ even when shapes coincide.
\textbf{Operator type} is therefore the natural unit of adaptation.

\begin{figure}[t]
    \centering
    \includegraphics[width=1.0\linewidth]{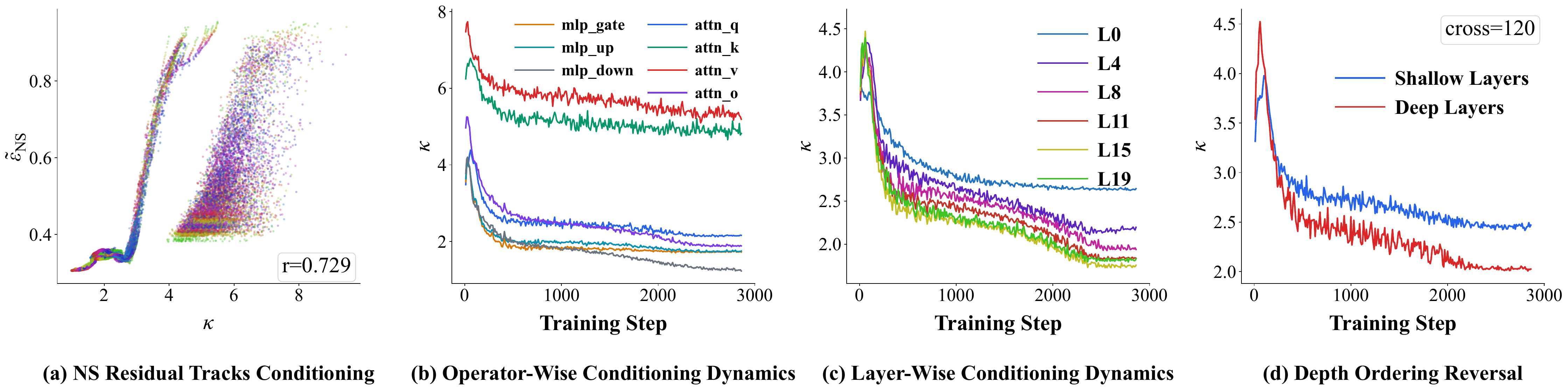}
    \caption{Four geometry findings on Qwen3-0.6B. 
    $\kappa$ is displayed at $\log_{10}$ scale.
    \textbf{(a)} NS-input conditioning $\kappa(M_t)$ vs 
    shape-normalized residual $\tilde{\epsilon}_{\mathrm{NS}}$ 
    (Pearson's $r=0.729$). 
    Colors indicate layer depth, so each point reflects the geometry of one parameter matrix at one training step grouped by layer.
    \textbf{(b)} Median $\kappa$ trajectories per operator 
    type: \texttt{attn\_v} dominates the hardest regime, 
    \texttt{mlp\_down} the easiest. 
    \textbf{(c)} Layer-wise median $\kappa$ trajectories for 
    six layers (L0, L4, L8, L11, L15, L19): different layers reveal similar evolution patterns.
    \textbf{(d)} Median $\kappa$ trajectories of shallow (first three layers) vs deep (last three layers) for \texttt{attn\_q} during 
    training; at step 120, the shallow and deep trajectories reverse their relative ordering, indicating that difficulty cannot be predicted from depth alone. }
    \label{fig:geometry_main}
\end{figure}

\paragraph{F3 and F4: Per-layer differences are small and unstable, ruling out layer as a unit.}

As shown in Fig.~\ref{fig:geometry_main}(c), within each operator type, per-layer median $\kappa$ trajectories evolve similarly throughout training, with magnitudes that differ only marginally across layers (\textbf{F3}). 
Moreover, Fig.~\ref{fig:geometry_main}(d) shows that even those marginal differences are not stable: shallow and deep groups can reverse their relative ordering mid-training at step $t=120$ (\textbf{F4}). 
Per-layer adaptation is therefore doubly ruled out: layer differences are both small (\textbf{F3}) and unstable (\textbf{F4}), and the control signal cannot be prescribed from architectural priors but should be \textit{measured on the running model}.
These four findings motivate design requirements for adaptive Muon and we propose a simple and efficient method in Sec.~\ref{sec:method}.
\section{Method}
\label{sec:method}

\subsection{Preliminaries}

\paragraph{Muon and NS Iteration.}

Muon orthogonalizes momentum $M$ via $T$-step NS iterations of the form
\begin{equation}
M \;\leftarrow\; a_k M + b_k M M^{\!\top} M + c_k (M M^{\!\top})^2 M,
\qquad k = 1, \dots, T,    
\end{equation}
parametrized by the coefficient sequence $\{(a_k, b_k, c_k)\}_{k=1}^T$. 
The \textit{only} degrees of freedom in NS design are therefore the iteration count $T$ and the coefficient sequence. 
The input-side geometry signal $\ell(M) = \sigma_{\min}^{(\tau)}(M) / \lVert M \rVert_F$, where \(\tau\) is a truncation threshold and singular values below \(\tau\) are treated as \(\tau\) and $\lVert \cdot \rVert_F$ denotes the matrix Frobenius norm, is monotone in \(1/\kappa(M)\) in the well-conditioned regime when computed from exact Singular Value Decomposition (SVD).

\paragraph{Existing NS designs are global across matrices updated by Muon.}

Muon~\citep{muon} fits a single coefficient sequence, whereas You fits $T$ coefficient sequences through optimization and applies them to every matrix at each training step.
More recently, PE and CANS~\citep{pe,cans} derive $T$ coefficient sequences from Chebyshev-based conservative bounds assuming a universal worst-case input ($\ell = 10^{-3}$, a conservative lower bound on the geometry signal across matrices and steps). 
ROOT~\citep{root} specializes coefficient sequences to \textit{matrix shape} but treats matrices of the same shape identically, ignoring the operator-type heterogeneity documented in \textbf{F2}. 
None of these methods uses a runtime-measured signal, and none adapts at operator-type granularity, the two gaps Sec.~\ref{sec:pilot} identifies.

\paragraph{From findings to design requirements.}

The four geometry findings (\textbf{F1-F4}) inspire the following three design requirements.
\textbf{D1} (from \textbf{F1}): \textit{use $\ell$ as the control signal for optimization difficulty} because it is a valid predictor of NS quality, integrates directly with PE-style polynomial composition~\citep{pe}, and can be computed directly from SVD.
\textbf{D2} (from \textbf{F2}+\textbf{F3}+\textbf{F4}): \textit{adapt at operator-type granularity}. 
\textbf{F2} confirms operator type as a meaningful unit and \textbf{F3+F4} jointly rule out layer-wise adaptation.
\textbf{D3} (from the dynamic nature of \textbf{F3}, \textbf{F4}): \textit{acquire the signal through online observation}, since the geometry cannot be reliably prescribed from architectural priors (we discuss static NS coefficients in Sec.~\ref{subsec:static_coeff} and static NS step in App.~\ref{app:ns_step}).
Moreover, a fourth choice: committing to a single allocation rather than continuously re-tuning, is methodological: \textbf{F2}'s persistent operator-type separation (Fig.~\ref{fig:geometry_main}(b)) shows the cross-type ordering does not undergo destabilizing changes during training, while continuous re-tuning would cause optimizer dynamics to become unstable.

\subsection{Adaptive Muon Orthogonalization}

\begin{algorithm}[t]
\small
\caption{Adaptive Muon Optimizer (AMO)}
\label{alg:amo}
\begin{algorithmic}[1]
\Require budget ratio $r$, baseline step count $T_{\mathrm{base}}$, 
step range $[k_{\min}, k_{\max}]$, shrinkage $\alpha$, baseline 
$\ell_{\mathrm{base}}$, observation interval $\Delta_{\mathrm{obs}}$, 
observation end step $T_{\mathrm{obs}}$, observation sample size $n_\mathrm{smp}$, transition duration $\Delta$
\State compute total budget $B \gets \mathrm{round}(r \cdot |\mathcal{O}| \cdot T_{\mathrm{base}})$
\State \textbf{Phase 1 (Observation):} run training under uniform 
$T_{\mathrm{base}}$ schedule; every $\Delta_{\mathrm{obs}}$ steps, 
record per-type median $\ell_{\mathrm{eff}}^{(i)}$ from $n_\mathrm{smp}$ matrices within each operator type $i \in \mathcal{O}$. Exit at step $T_{\mathrm{obs}}$.
\State \textbf{Plan:}\; 
$\ell_{\mathrm{target}}^{(i)} \gets 
\alpha \cdot \mathrm{median\text{-}of\text{-}medians}(\ell_{\mathrm{eff}}^{(i)}) 
+ (1-\alpha) \cdot \ell_{\mathrm{base}}$ for each $i \in \mathcal{O}$
\State For each $i \in \mathcal{O}$ and $k \in [k_{\min}, k_{\max}]$: 
$E_i(k) \gets 1 - \textsc{PE-simulate}(\ell_{\mathrm{target}}^{(i)}, k)$
\State $\{T_i^\star\}_{i \in \mathcal{O}} \gets \textsc{BudgetAllocate}(\{E_i\}, B)$ 
\Comment{greedy \textit{add} / \textit{remove} / \textit{transfer}}
\State \textbf{Phase 2 (Transition):} \textbf{for} $u = 1, \ldots, \Delta$ \textbf{do}
\State \quad $p \gets u/\Delta$
\State \quad \textbf{for each} $i \in \mathcal{O}$: \;
$T_i(u) \gets T_{\mathrm{base}} + p\,(T_i^\star - T_{\mathrm{base}})$, \;\;
$\ell_{\mathrm{eff}}^{(i)}(u) \gets \ell_{\mathrm{base}} + p\,(\ell_{\mathrm{target}}^{(i)} - \ell_{\mathrm{base}})$
\State \quad $B(u) \gets \mathrm{round}\bigl(\sum_{i \in \mathcal{O}} T_i(u)\bigr)$;\; round $\{T_i(u)\}$ s.t. $\sum_{i \in \mathcal{O}} T_i(u) = B(u)$
\State \quad regenerate PE coefficients at $(T_i(u), \ell_{\mathrm{eff}}^{(i)}(u))$ for each $i \in \mathcal{O}$
\State \textbf{end for}
\State \textbf{Lock:} freeze the schedule and coefficients; stop geometry capture.
\end{algorithmic}
\end{algorithm}

Following \textbf{D1-D3}, the core operation of \textbf{AMO} is to allocate a fixed NS step budget across the seven operator types $\mathcal{O}=\{\texttt{attn\_\{q,k,v,o\}}, \texttt{mlp\_\{gate,up,down\}}\}$ and recompute the coefficient sequence for each operator type, following the observe-then-commit methodological design above. 
The complete workflow of AMO is provided in Alg.~\ref{alg:amo}.

\paragraph{Phase 1: Observation (D3).}

Under the uniform baseline schedule, AMO captures the optimizer-internal momentum matrix $M_t$ every $\Delta_{\mathrm{obs}}$ steps, until reaching the observation end step $T_{\mathrm{obs}}$. 
For each operator type, a fixed-size subset of $n_\mathrm{smp}$ matrices is uniformly sampled for observation, rather than using all matrices of that type, to improve efficiency.
Per observation we record the per-type median of $\ell_{\mathrm{eff}}^{(i)} = \ell_{\mathrm{raw}}^{(i)}/s$, where $s=1.01$ is a scale factor used for numerical safety. 
After a total of $T_{\mathrm{obs}} / \Delta_{\mathrm{obs}}$ observations, we have collected sufficient geometry observations to invoke planning.

\paragraph{Planning (D1, D2).}

Aggregation uses a median-of-medians, the median, over observations, of each observation's per-type median across sampled matrices, yielding a robust per-type estimate $\ell_{\mathrm{robust}}^{(i)}$ that is shrunk toward a conservative baseline $\ell_\mathrm{base}$,
\begin{equation}
\ell_{\mathrm{target}}^{(i)} 
\;=\; 
\alpha \cdot \ell_{\mathrm{robust}}^{(i)}
\;+\; (1-\alpha) \cdot \ell_{\mathrm{base}},
\end{equation}
where the shrinkage coefficient $\alpha$ stabilizes allocation against single-type overconfidence. 
For each type $i$ and each candidate step count $k \in [k_{\min}, k_{\max}]$ (we empirically set $k_{\min}=3$ and $k_{\max}=7$), we invoke the PE composer at $(\ell_{\mathrm{target}}^{(i)}, k)$ and simulate the resulting NS trajectory, giving a \textit{projected} error curve $E_i(k) = 1 - \ell_{\mathrm{after\text{-}final}}$, where $\ell_{\mathrm{after\text{-}final}}$ is $\ell$ on the matrix produced by the final NS iteration. 
Searching over projected PE error curves rather than mapping $\ell$ directly to a step count correctly accounts for the joint effect of $T$ and the PE coefficients fitted at that $T$. 
Under the budget constraint $\sum_i T_i = B$, a discrete greedy routine uses only the ranking of per-type marginal gains and losses: \textit{add}, \textit{remove}, \textit{transfer}, with details and an optimality statement provided in App.~\ref{app:proofs}.

\paragraph{Phase 2: Transition and Lock.}

A discontinuous switch from the baseline schedule $(T_{\mathrm{base}}, \ell_{\mathrm{base}})$ to the planned schedule $(T_i^\star, \ell_{\mathrm{target}}^{(i)})$ would cause the orthogonalized update $M_t'$ to jump, destabilizing the momentum buffer $B_t$ whose accumulated gradients were produced under the old schedule. 
AMO therefore distributes the change over a transition duration of $\Delta$ steps via joint linear interpolation. With $p(u) = u/\Delta$ for each $u = 1, 2, \ldots, \Delta$,
\begin{equation}
    T_i(u) 
= T_{\mathrm{base}} + p(u)\bigl(T_i^\star - T_{\mathrm{base}}\bigr),
\qquad
\ell_{\mathrm{eff}}^{(i)}(u) 
= \ell_{\mathrm{base}} + p(u)\bigl(\ell_{\mathrm{target}}^{(i)} - \ell_{\mathrm{base}}\bigr).
\end{equation}
Two consistency rules apply during transition. 
(i) \textit{Two-stage rounding.} Rounding each $T_i(u)$ independently would let the running total drift off the planned trajectory, so we first round the interpolated total $B(u) \!=\! \mathrm{round}(\sum_{i \in \mathcal{O}} T_i(u))$, then round per-type counts under the constraint $\sum_i T_i(u) = B(u)$. The total therefore advances monotonically from $|\mathcal{O}| \cdot T_{\mathrm{base}}$ at $u=0$ to $B$ at $u=\Delta$. 
(ii) \textit{Coefficient regeneration.} At every NS-applied step, PE coefficients are regenerated at the current interpolated state $(T_i(u), \ell_{\mathrm{eff}}^{(i)}(u))$, so AMO adapts the full coefficient table, not only the step count. 
At step $u = \Delta$, the schedule and coefficients freeze and geometry capture stops; the persistent cross-type ordering established in \textbf{F2} ensures that locking does not discard a useful signal.

Overall, observation horizon $T_{\mathrm{obs}}$, observation interval $\Delta_{\mathrm{obs}}$, observation sample size $n_\mathrm{smp}$, shrinkage coefficient $\alpha$, and transition duration $\Delta$ are discussed in Sec.~\ref{subsec:ablation}.
\section{Experiments and Results}

\subsection{Experimental Setup}

\paragraph{Optimizers.}

We compare AMO against AdamW and several Muon-style optimizers. 
For AdamW, we use $\beta_1=0.9$, $\beta_2=0.95$, $\epsilon=10^{-8}$, and weight decay $0.1$, with WSD LR scheduler. 
For all Muon variants (Muon~\citep{muon}, AdaMuon~\citep{adamuon}, You, PE~\citep{pe}, CANS~\citep{cans}, ROOT~\citep{root}, and Turbo~\citep{turbo_muon}), we apply Muon to 2D matrices in Transformer blocks and AdamW to embeddings, output heads, biases, and normalization parameters. 
Muon uses momentum coefficient $0.95$ with Nesterov momentum, weight decay $0.1$, and five-step NS iterations with official KJ coefficients~\citep{muon}, followed by a shape-dependent rescaling of $0.2\sqrt{\max(m,n)}$ for an $m\times n$ parameter matrix, following~\cite{dmuon}. 
The base learning rate is set via an empirical scaling rule from \citep{lr_law}.
All Muon-style optimizer runs use cosine-decay LR scheduler. 
Implementation details of all baseline optimizers are provided in App.~\ref{app:subsec_exp_setting}.

\paragraph{Pre-training.}

We pre-train eight dense decoder-only Transformers from Llama-3.1 and Qwen3 families~\citep{llama3,qwen3}, spanning four scales in each family from 70M to 1.7B parameters. 
Except in continual pre-training, all models are pre-trained from scratch with the official Gaussian initialization.
All models use grouped-query attention ~\citep{gqa}, RMSNorm~\citep{rmsnorm}, and SwiGLU feed-forward blocks~\citep{swiglu}.
Training data are sampled from \texttt{Fineweb-edu}~\citep{fineweb} and packed into fixed-length sequences of 2048 tokens using each family's native tokenizer. 
Token budgets of standard pre-training follow 1x Chinchilla optimal ratio of 20~\citep{scaling_law}, ranging from roughly 1B to 34B tokens across scales. 
Regarding prolonged pre-training, we consider 2x, 5x, and 10x Chinchilla optimal ratios. 
All runs use bfloat16 mixed precision with Flash-Attention-2~\citep{flashattention2} on 6x-32x NVIDIA H100 (80GB) GPUs for acceleration, with gradient clipping at maximum norm $1.0$.
The random seed is set to 42 for each training run.

\paragraph{Evaluation.}

We evaluate all model checkpoints using the \texttt{lm-evaluation-harness}~\citep{eval-harness} framework on 12 standard downstream benchmarks spanning three capability axes: 
\textit{commonsense reasoning} (HellaSwag~\citep{hellaswag}, PIQA~\citep{piqa}, WinoGrande~\citep{winogrande}, OpenBookQA~\citep{openbookqa}, CommonsenseQA~\citep{commonsenseqa}), 
\textit{scientific knowledge} (ARC-Easy, ARC-Challenge~\citep{arc}, SciQ~\citep{sciq}), 
and \textit{general language understanding} (MMLU~\citep{mmlu}, MMLU-Pro~\citep{mmlu_pro}, RACE~\citep{race}, AGIEval-en~\citep{agieval}). 
All tasks are evaluated in zero-shot mode to isolate pre-training quality from in-context learning, except MMLU-Pro, which uses its official 5-shot setting. 
We report accuracy using the framework's default generation likelihood-based scoring, except MMLU-Pro, which uses greedy generation with exact-match per its official protocol. 
The unweighted average across the 12 benchmarks (\textit{Avg.}) serves as the main metric.

\subsection{Main Results}

\paragraph{AMO outperforms baseline optimizers across model scales and families.}

As shown in Fig.~\ref{fig:main}(a), AMO achieves the highest average downstream accuracy across two model families, improving over the strongest baseline by +0.76 on Llama3.1-1.4B (vs.\ Muon-PE, 39.69 vs.\ 38.93) and +0.51 on Qwen3-1.7B (vs.\ Muon-PE, 40.75 vs.\ 40.24) and the same ranking holds at the smaller scale, with +0.20 on Llama3.1-760M and +0.44 on Qwen3-0.6B (full details and results in App.~\ref{app:main_full}). 
The gain is also broadly distributed across 12 tasks rather than concentrated on a few benchmarks. 
Notably, the strongest static baseline varies across models, so no single hand-designed schedule wins everywhere, whereas AMO does so without per-model tuning. 
We interpret this as evidence that operator-type-aware budget allocation transfers across model scales (760M to 1.7B) and two architectural families, while online adaptation mitigates the brittleness of any fixed coefficient choice.

\paragraph{AMO is effective under prolonged pre-training.}

To test whether AMO's advantage persists beyond Chinchilla-optimal 
compute, we extend pre-training on Qwen3-0.6B to \textbf{2x}, \textbf{5x}, and \textbf{10x} Chinchilla optimal token ratios, consuming 24B, 60B, and 120B tokens. 
Full results are provided in App.~\ref{app:prolong_full}.
Fig.~\ref{fig:main}(b) shows that AMO remains consistently better than both strong Muon baselines at all three prolonged-compute settings. 
At 2x, 5x, and 10x Chinchilla ratio, AMO reaches an average downstream accuracy of 37.90, 39.27, and 40.06, improving over the strongest baseline by +0.39, +0.23, and +0.29, respectively. 
Thus, the advantage of AMO is not only an early-training effect near the standard compute budget, but persists when the same model is trained substantially longer. 
This suggests that online operator-type-aware allocation continues to make useful budget decisions even as optimization enters later training stages, where a fixed coefficient schedule may become increasingly mismatched to the evolving geometry of different operator type groups.

\paragraph{AMO is effective under continual pre-training.}

Beyond pre-training from scratch, we evaluate whether AMO can improve continual pre-training (CPT) of a pre-trained base model. 
Specifically, we take one Qwen3-0.6B checkpoint after pre-training on 60B tokens and conduct CPT using an additional 36B tokens from Slimpajama~\citep{meta_rater}. 
Fig.~\ref{fig:main}(c) shows that during the whole CPT process, AMO stays above both Muon-KJ and Muon-PE at every evaluated step. 
The margin over the strongest baseline is +0.12 at 2000 steps, increases to +0.26 at 6000 steps, and remains positive at the final checkpoint, where AMO reaches 40.31 compared with 40.15 for Muon-PE and 40.01 for Muon-KJ. 
This indicates that AMO is also effective in the more practically aligned CPT setting, where the optimizer must improve downstream performance without disrupting an already-trained model. 

\subsection{Cost Analysis}

We compare the cost of AMO with baseline methods from both theoretical and practical perspectives. 
More details are provided in App.~\ref{app:cost} and we discuss the quality-compute tradeoff of AMO in Sec.~\ref{subsec:budget_control}.

\paragraph{Theoretical complexity.}

Let $P=P_{\mu}+P_{\mathrm{adam}}$ denote the total trainable parameters,
where $P_{\mu}$ are 2D matrices updated by the Muon-family rule and
$P_{\mathrm{adam}}$ are handled by AdamW. For the $i$-th matrix of layer type $t$, one NS iteration costs
$C_{t,i}=\Theta\!\bigl(\min(m_{t,i},n_{t,i})^{2}\max(m_{t,i},n_{t,i})\bigr)$.
In terms of \textbf{memory}, AdamW and AdaMuon lie in the heavier $2P$
class, whereas Muon, Muon-PE, and AMO all fall into the lighter
$P_{\mu}+2P_{\mathrm{adam}}$ class (AMO adds only $\mathcal{O}(|\mathcal{T}|)$ control metadata).
In terms of \textbf{time}, every Muon-family method is dominated by
$\sum_{t}k_{t}\sum_{i}C_{t,i}$; AdaMuon adds only a linear
$P_{\mu}$ term on top. 
The key distinction of AMO is that it replaces the uniform rule $k_{t}\!\equiv\!5$ with a per-type allocation
$k_{t}^{\star}$ under a global budget $\sum_{t}k_{t}^{\star}=B$.
Because the aggregate weights $\sum_{i}C_{t,i}$ differ substantially
across layer types, fixing the nominal budget does \emph{not} fix the
real computation: AMO can reduce actual cost by shifting steps from
expensive types to cheaper ones, optimizing the weighted objective
$\sum_{t}k_{t}\sum_{i}C_{t,i}$ rather than the average NS depth.
This \emph{budget-aware cross-type reallocation} places AMO in a
distinct complexity regime from static fixed-budget Muon variants.

\begin{table}[t]
\centering
\small
\caption{Practical time and memory cost on pre-training Qwen3-1.7B,
measured over 6912 steps under an identical batch configuration.
Arrows ($\downarrow$ and $\uparrow$) indicate whether lower or higher is
better.}
\label{tab:practical-cost}
\setlength{\tabcolsep}{8pt}
\begin{tabular}{lcccc}
\toprule
\textbf{Optimizer} & \textbf{Step Time} $\downarrow$ & \textbf{Optimizer Update Time} $\downarrow$ & \textbf{Throughput} $\uparrow$ & \textbf{Peak Memory} $\downarrow$ \\
 & (s) & (ms) & (M token/s) & (GB) \\
\midrule
AdamW               & \textbf{5.96}    & \textbf{4.48}      & \textbf{0.82}    & \underline{63.75} \\
Muon                & \underline{6.14} & 195.81             & \underline{0.80} & \textbf{61.12} \\
AdaMuon             & 6.19             & 241.65             & 0.79             & 69.00 \\
\textbf{AMO (ours)} & 6.17             & \underline{191.47} & \underline{0.80} & \textbf{61.12} \\
\bottomrule
\end{tabular}

\end{table}

\paragraph{Practical cost.}

Tab.~\ref{tab:practical-cost} reports wall-clock measurements of
pre-training Qwen3-1.7B for 6912 steps using different optimizers. 
Although Muon-family optimizer updates are roughly 43-54x more
expensive than AdamW's element-wise step (191.47-241.65 ms vs 4.48 ms),
end-to-end step time grows by only 3.0\%-3.8\%, because the
forward/backward pass ($\sim$5.82 s/step) dominate the training loop.
Within the Muon family, AMO's optimizer update (191.47 ms) is lower than
Muon (195.81 ms) and AdaMuon (241.65 ms), while its peak memory
(61.12 GB) ties with Muon at the lowest value and saves 7.88 GB relative
to AdaMuon.
In this run the budget ratio is set to $1.0$, so AMO does not reduce the
nominal total NS budget but redistributes it across operator types, locking
to an average of 5.0 NS steps across the seven operator types.
The residual wall-clock gap to Muon stems from the low-frequency
exact-SVD observation steps ($\sim$15.63 s/observation step, of which
9.47 s is the post-step observation overhead) rather than from the
locked-phase update rule; once the schedule is locked, AMO matches Muon
at 6.13 s/step and 0.80M tokens/s (training loss curve in App.~\ref{app:ba_algo}).

\subsection{Ablation Study}
\label{subsec:ablation}

\begin{figure}
    \centering
    \includegraphics[width=1.0\linewidth]{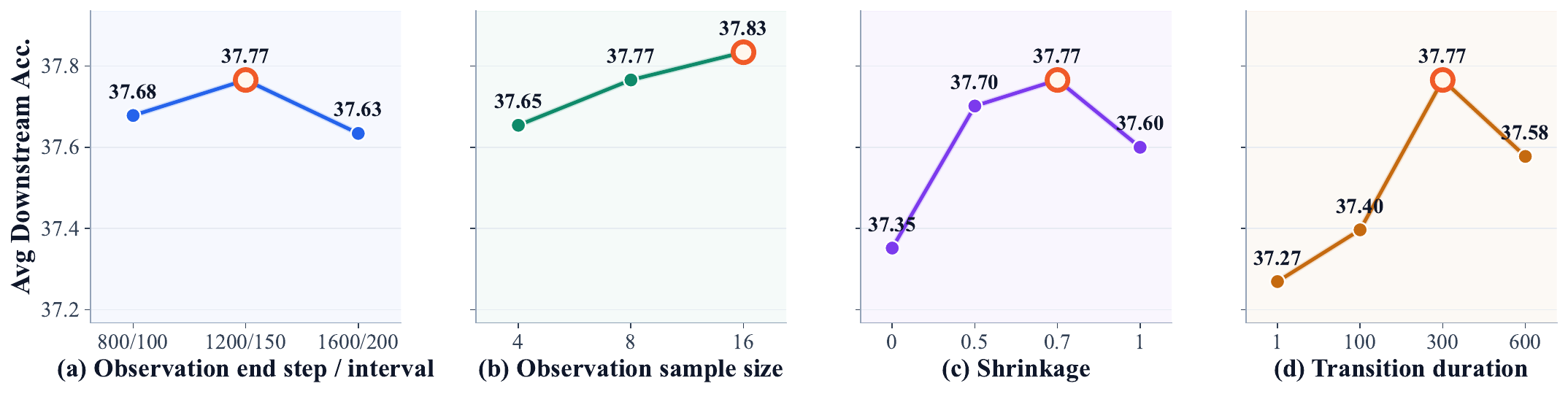}
    \caption{Downstream results of four ablations on pre-training Qwen3-0.6B. Each subplot isolates one AMO hyperparameter configuration, and orange circles mark the best-performing settings.}
    \label{fig:ablation}
\end{figure}

We conduct an ablation study on four key design choices in AMO and report the average downstream accuracy of the resulting Qwen3-0.6B models after pre-training. Full results are provided in App.~\ref{app:ablation_full}.

\paragraph{Effect of observation.}

We jointly examine three observation-stage hyperparameters: the horizon $T_{\mathrm{obs}}$ and interval $\Delta_{\mathrm{obs}}$, which control when planning is invoked and how many snapshots feed the median-of-medians estimate; and the sample size $n_{\mathrm{smp}}$, the number of matrices drawn per operator type to estimate $\ell_{\mathrm{eff}}^{(i)}$.
For the horizon and interval, we compare $(1200, 150)$ against $(800, 100)$ and $(1600, 200)$, keeping the total observation count fixed at $T_{\mathrm{obs}}/\Delta_{\mathrm{obs}} = 8$.
Fig.~\ref{fig:ablation}(a) shows all settings within a narrow band (37.63--37.77) with the middle marginally best, indicating that AMO is largely insensitive to window placement once the observation count is fixed.
For the sample size, Fig.~\ref{fig:ablation}(b) shows a monotone but saturating trend across $n_{\mathrm{smp}} \in \{4, 8, 16\}$ (37.65 $\to$ 37.77 $\to$ 37.83): the estimator stabilizes with few samples, so the default $n_{\mathrm{smp}}=8$ balances stability against capture cost.

\paragraph{Effect of shrinkage.}

The shrinkage coefficient $\alpha$ interpolates between the measured $\ell_{\mathrm{eff}}^{(i)}$ and the conservative baseline $\ell_{\mathrm{base}} = 10^{-3}$. 
Notably, $\alpha = 1$ trusts the measurement fully while $\alpha = 0$ entirely recovers the non-adaptive baseline. 
Fig.~\ref{fig:ablation}(c) shows a pronounced inverted-U curve over $\alpha \in \{0, 0.5, 0.7, 1.0\}$: $\alpha = 0$ falls to the non-adaptive floor (37.35), $\alpha = 1$ recovers only 37.60, and the shrunk estimators dominate, peaking at $\alpha = 0.7$ (37.77). 
The two endpoints fail for opposite reasons: $\alpha = 0$ discards the measured signal entirely, while $\alpha = 1$ inherits the residual variance of per-operator estimates, so partial shrinkage yields a better bias-variance trade-off.

\paragraph{Effect of transition duration.}

The transition duration $\Delta$ controls how abruptly AMO moves from the uniform baseline to the AMO target schedule: a very short $\Delta$ approximates a step change and risks momentum-buffer inconsistency, while a very long $\Delta$ delays the benefit of the optimized allocation. 
Comparing $\Delta \in \{1, 100, 300, 600\}$, Fig.~\ref{fig:ablation}(d) shows that the near-instant switch $\Delta = 1$ is clearly weakest (37.27) and $\Delta = 100$ remains suboptimal (37.40), confirming that abrupt transitions destabilize the momentum buffers; performance peaks at $\Delta = 300$ (37.77) and slightly retreats at $\Delta = 600$ (37.58). 
Therefore, the benefit is captured once the transition is smooth enough to absorb the schedule change and further extension only delays entry into the optimized regime.
\section{Analysis and Discussion}

\subsection{Budget Control}
\label{subsec:budget_control}

\begin{figure}[t]
    \centering
    \includegraphics[width=1.0\linewidth]{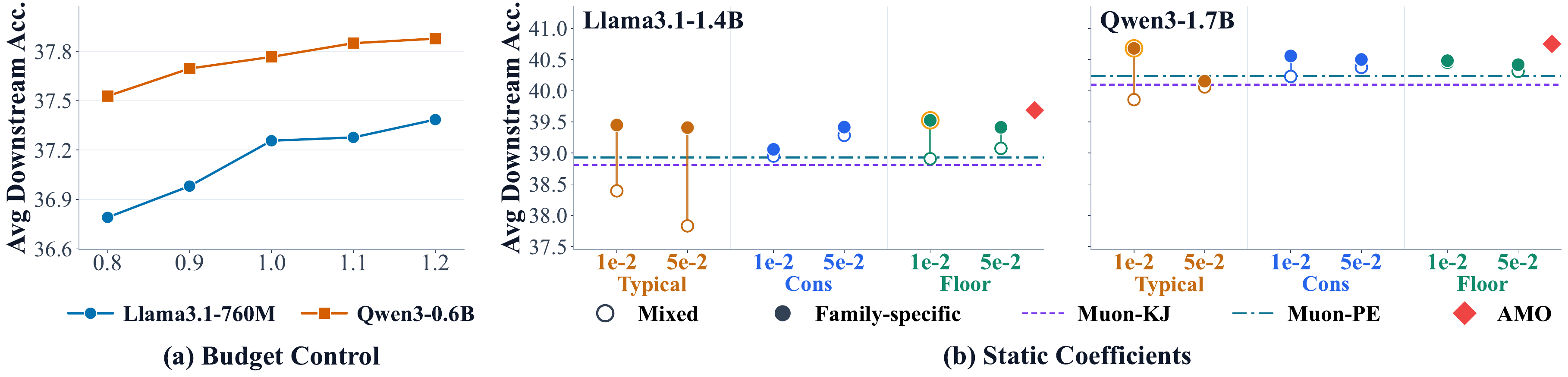}
    \caption{Downstream results of Llama / Qwen models pre-trained \textbf{(a)} with five different AMO budget ratios (0.8, 0.9, 1.0, 1.1, 1.2); \textbf{(b)} under 12 \textit{mixed} and \textit{family-specific} static NS schedules. 
    }
    \label{fig:analysis}
\end{figure}

To further understand whether the gains of AMO come \textit{from better allocation or simply from using more orthogonalization compute}, we study the effect of varying the total NS budget in AMO while keeping other settings unchanged.
Full  results are provided in App.~\ref{app:budget_control_full}.
We vary only the global budget ratio \(r\) on Llama3.1-760M and Qwen3-0.6B, while keeping all other training settings unchanged. The baseline uses 5 NS steps for each of 7 Muon-updated layer types, giving a total baseline budget of 35. We therefore set the execution-phase budget to $B = \mathrm{round}(35r)$, and evaluate \(r \in \{0.8, 0.9, 1.0, 1.1, 1.2\}\). All runs share the same observation phase under the uniform 5-step baseline and differ only in the total budget used in planning, which changes orthogonalization FLOPs approximately linearly while leaving all other training compute unchanged.
As shown in Fig.~\ref{fig:analysis}(a), AMO traces out a monotone compute-quality curve on both models: the performance moves from 37.53 $\to$ 37.88 on Qwen3-0.6B and 36.79 $\to$ 37.39 on Llama3.1-760M as $r$ sweeps from 0.8 to 1.2. 
The curve is also asymmetric: tightening the budget to $r=0.9$ costs only -0.07 on Qwen and -0.28 on Llama for an $\sim$ 8.6\% FLOPs saving, whereas loosening to $r=1.2$ obtains only +0.11 and +0.13 gains for +20\% FLOPs. 
This contrasts with default Muon configurations in DeepSeek-V4~\citep{dsv4}, which uses 10-step NS per Muon update (twice the common 5-step setting). 
The discrepancy likely reflects pre-training \textbf{scale}: at our 0.6B scale, AMO already saturates the benefit with 5 steps, but whether this saturation persists at the 1.6T model scale of DeepSeek-V4 remains an open question.

\subsection{Static Coefficients}
\label{subsec:static_coeff}

To further analyze the role of \textit{adaptivity}, we compare AMO against static coefficient schedules inspired by PE and CANS~\citep{pe,cans}, in which the NS schedule is computed from observation data in Sec.~\ref{subsec:model_geometry} and remains unchanged throughout training process, with complete details provided in App.~\ref{app:static_coeff}.
We build these static baselines for Llama3.1-1.4B and Qwen3-1.7B using fixed operator-type-wise PE schedules derived from offline geometric observations by sweeping three aggregation rules (\texttt{Typical}, \texttt{Conservative}, \texttt{Floor}) and two NS error tolerances ($1\times 10^{-2}$, $5\times 10^{-2}$).
For each combination, we further compare two estimation pools to test the generalization ability of static schedules, a \textit{mixed} schedule pooled across observations from all models, and a \textit{family-specific} schedule using only same-family geometry, yielding six paired variants for each model. 

As shown in Fig.~\ref{fig:analysis}(b), three patterns are immediate. 
\textit{(i) Family-specific estimation is uniformly stronger than mixed estimation:} across the 12 matched pairs, the family-specific schedule wins every time, with mean gains of +0.64 on Llama3.1-1.4B and +0.25 on Qwen3-1.7B. \textit{(ii) Even within family-specific schedules, the best aggregation rule is model-dependent:} \texttt{Floor-1e-2} is best on Llama3.1-1.4B (39.52), whereas \texttt{Typical-1e-2} is best on Qwen3-1.7B (40.68), and neither rule dominates the other across models. 
\textit{(iii) AMO surpasses the best family-specific static schedule on both models without any model-specific tuning,} reaching 39.69 on Llama3.1-1.4B (+0.17) and 40.75 on Qwen3-1.7B (+0.07).
Together, these observations expose the brittleness of static coefficient design: setting a fixed NS schedule requires committing, before training, to the aggregation rule, the error tolerance, and the estimation pool, none of which is robustly transferable across model families, and any wrong commitment is paid for over the entire training run. 
AMO sidesteps this entire decision surface by deriving the schedule online from the model's own geometry, recovering the upper envelope of tuned static schedules without committing to any of them.
\section{Concluding Remarks}

In this paper, we challenge the assumption that Muon's NS orthogonalization should apply uniformly to all parameter matrices. 
Our pilot study shows that per-matrix heterogeneity is intrinsic, structured at the operator-type level, and stable in cross-type ranking, motivating the design of adaptive Muon optimizers. 
The proposed AMO achieves consistent gains over uniform-schedule Muon. 
We leave the exploration of richer training signals, finer granularities, and larger-scale pre-training to future work.

\bibliographystyle{plain}
\bibliography{reference}

\clearpage
\newpage
\clearpage

\newpage
\appendix
\onecolumn
\addcontentsline{toc}{section}{Appendix}
\renewcommand \thepart{}
\renewcommand \partname{}
\part{\Large{\centerline{Appendix}}}
\parttoc

\newpage

\section{Details of AMO}
\label{app:proofs}

This appendix fills in the \textsc{PE-simulate} from \cite{pe} and our \textsc{BudgetAllocate} operation invoked in Alg.~\ref{alg:amo}.
Also, we provide an optimality guarantee for \textsc{BudgetAllocate} output under standard convexity assumptions.
Sec.~\ref{app:ba_algo} gives the full algorithm and implementation details, and Sec.~\ref{app:ba_theory} states and proves the optimality theorem.

\subsection{Algorithm}
\label{app:ba_algo}

\textsc{PE-simulate}$(\ell, k)$ is a scalar simulation of the PE-composed NS trajectory: starting from an input-side $\ell$, it applies $k$ NS iterations with coefficients $\{(a_j, b_j, c_j)\}_{j=1}^{k}$ produced by the PE composer at $(\ell, k)$, and returns the resulting post-NS effective $\ell$, which is close to $1$ when orthogonalization succeeds~\citep{pe}. 
\textsc{BudgetAllocate} takes as input the seven projected error curves $\{E_i(k)\}_{i=1}^{|\mathcal{T}|}$ with $|\mathcal{T}| = 7$, 
where  $E_i(k) = 1 - \textsc{PE-simulate}(\ell_{\mathrm{target}}^{(i)}, k)$ 
is the post-NS error for operator type $i$ at step count $k$, together with the total budget $B$ and the step range $[k_{\min}, k_{\max}]$, and returns a per-type allocation $\{T_i^\star\}_{i=1}^{|\mathcal{T}|}$ satisfying $\sum_i T_i^\star = B$ and $T_i^\star \in [k_{\min}, k_{\max}]$. 
Alg.~\ref{alg:budget_allocate} gives the full process of \textsc{BudgetAllocate}: it initializes all types at $T_{\mathrm{base}}$ clipped to the range, then applies three passes: \textit{add} until the budget is met, \textit{remove} if over budget, and \textit{transfer} to eliminate any remaining pairwise improvement. 
The \textit{add} and \textit{remove} passes alone cannot resolve cases where a lower-error reallocation exists without crossing the budget boundary; the \textit{transfer} pass is what lets \textsc{BudgetAllocate} terminate at a global rather than merely local optimum (Sec.~\ref{app:ba_theory}).

\begin{algorithm}[t]
\small
\caption{\textsc{BudgetAllocate}}
\label{alg:budget_allocate}
\begin{algorithmic}[1]
\Require projected error curves $\{E_i(k)\}_{i=1}^{|\mathcal{T}|}$, 
budget $B$, step range $[k_{\min}, k_{\max}]$, baseline 
$T_{\mathrm{base}}$
\State \textbf{Optional relaxation:} if $B < |\mathcal{T}| \cdot k_{\min}$, 
decrease $k_{\min}$ (floor at $1$) until feasible; if 
$B > |\mathcal{T}| \cdot k_{\max}$, increase $k_{\max}$ until feasible.
\State Initialize $T_i \gets \mathrm{clip}(T_{\mathrm{base}}, k_{\min}, k_{\max})$ for all $i$
\State \textbf{// \textit{Add} pass}
\While{$\sum_i T_i < B$}
  \State $i^\star \gets \arg\max_{i\,:\, T_i < k_{\max}} 
  \bigl[ E_i(T_i) - E_i(T_i + 1) \bigr]$
  \State $T_{i^\star} \gets T_{i^\star} + 1$
\EndWhile
\State \textbf{// \textit{Remove} pass}
\While{$\sum_i T_i > B$}
  \State $i^\star \gets \arg\min_{i\,:\, T_i > k_{\min}} 
  \bigl[ E_i(T_i - 1) - E_i(T_i) \bigr]$
  \State $T_{i^\star} \gets T_{i^\star} - 1$
\EndWhile
\State \textbf{// \textit{Transfer} pass}
\While{$\exists (i, j)$ with $T_i < k_{\max}$, $T_j > k_{\min}$, 
and $\bigl[E_i(T_i) - E_i(T_i+1)\bigr] > \bigl[E_j(T_j-1) - E_j(T_j)\bigr]$}
  \State $T_i \gets T_i + 1$; $T_j \gets T_j - 1$
\EndWhile
\State \Return $\{T_i^\star\} \gets \{T_i\}$
\end{algorithmic}
\end{algorithm}

Two details are worth noting. 
The optional relaxation step is triggered only when the derived budget 
$B = r \cdot |\mathcal{T}| \cdot T_{\mathrm{base}}$ is infeasible under $[k_{\min}, k_{\max}]$, which happens when $r$ falls outside $[k_{\min}/T_{\mathrm{base}},\, k_{\max}/T_{\mathrm{base}}]$.
In all experiments reported in this paper, 
$[k_{\min}, k_{\max}] = [3, 7]$, $T_{\mathrm{base}} = 5$, and 
$r = 1.0$, so the derived budget $B = 35$ sits strictly inside the 
feasible range $[21, 49]$ and no relaxation is ever triggered. 
Computationally, \textsc{BudgetAllocate} operates entirely on 
scalar projected errors and terminates within a few dozen 
iterations in all cases, so its runtime is \textbf{negligible} relative to 
a single training step.

\paragraph{Implementations of AMO.}

For AMO, all runs start from the uniform PE schedule with 5-step NS iteration for each of the seven Muon layer types
(\texttt{attn\_q}, \texttt{attn\_k}, \texttt{attn\_v}, \texttt{attn\_o}, \texttt{mlp\_gate}, \texttt{mlp\_up}, \texttt{mlp\_down}). 
The observation horizon is 1200 steps for Qwen3-0.6B and Llama-3.1-760M, and 3000 steps for Qwen3-1.7B and Llama-3.1-1.4B. 
We estimate $\sigma_{\min}$ with exact SVD in fp32, use budget ratio 1.0, and search PE schedules over NS step counts $[3,7]$. 
Before allocation, the robust observed $\ell$ is shrunk toward the baseline by $\ell_{\mathrm{target}}=0.7\ell_{\mathrm{obs}}+0.3\cdot 10^{-3}$.
The resulting target schedules are reached through a 300-step transition duration and then locked for the remainder of training, with no post-lock refresh.
All configurations of AMO are provided in Tab.~\ref{tab:amo_configs} and details of final AMO NS schedules locked after transition are provided in Tab.~\ref{tab:amo_schedule_configs}.

\begin{table*}[t]
\centering
\scriptsize
\setlength{\tabcolsep}{2.2pt}
\caption{Details of final locked NS schedules of AMO under budget ratio $r=1.0$. $\ell_{\mathrm{target}}$ is the effective target value logged at transition planning and coefficients are recomputed with PE.}
\begin{tabular}{llclp{0.57\linewidth}}
\toprule
\textbf{Model} & \textbf{Operator type} & \textbf{NS Step} $T$ & $\ell_{\mathrm{target}}$ & \textbf{NS Coefficient} $(a_k,b_k,c_k)_{k=1}^T$ \\
\midrule
\multirow{14}{*}{Qwen3-0.6B}
& \texttt{attn\_q} & 5 & $1.36334{\times}10^{-3}$ & (8.2244,-23.121,16.654); (4.0628,-2.8881,0.52601); (3.8427,-2.8118,0.52947); (3.0757,-2.2601,0.46810); (2.1522,-1.5322,0.40416) \\
& \texttt{attn\_k} & 6 & $3.02222{\times}10^{-4}$ & (8.2612,-23.225,16.729); (4.1194,-2.9025,0.52399); (4.0687,-2.8896,0.52579); (3.8707,-2.8312,0.53163); (3.1377,-2.3064,0.47322); (2.1930,-1.5705,0.40824) \\
& \texttt{attn\_v} & 6 & $3.00606{\times}10^{-4}$ & (8.2612,-23.225,16.729); (4.1194,-2.9025,0.52399); (4.0691,-2.8897,0.52577); (3.8723,-2.8324,0.53176); (3.1415,-2.3092,0.47353); (2.1956,-1.5730,0.40850) \\
& \texttt{attn\_o} & 5 & $1.48010{\times}10^{-3}$ & (8.2204,-23.110,16.646); (4.0567,-2.8865,0.52624); (3.8145,-2.7922,0.52728); (3.0160,-2.2152,0.46314); (2.1167,-1.4983,0.40058) \\
& \texttt{mlp\_gate} & 4 & $4.51844{\times}10^{-3}$ & (8.1170,-22.819,16.437); (3.9063,-2.8501,0.53278); (3.2316,-2.3758,0.48091); (2.2633,-1.6351,0.41517) \\
& \texttt{mlp\_up} & 5 & $3.92888{\times}10^{-3}$ & (8.1369,-22.875,16.477); (3.9343,-2.8566,0.53143); (3.3254,-2.4443,0.48852); (2.3218,-1.6582,0.40255); (1.8981,-1.2755,0.37756) \\
& \texttt{mlp\_down} & 4 & $5.87911{\times}10^{-3}$ & (8.0715,-22.692,16.345); (3.8286,-2.8020,0.52837); (3.0454,-2.2374,0.46559); (2.1338,-1.5147,0.40230) \\
\midrule
\multirow{14}{*}{Qwen3-1.7B}
& \texttt{attn\_q} & 6 & $3.00377{\times}10^{-4}$ & (8.2613,-23.225,16.729); (4.1195,-2.9026,0.52399); (4.0691,-2.8897,0.52577); (3.8726,-2.8325,0.53178); (3.1420,-2.3096,0.47357); (2.1960,-1.5733,0.40854) \\
& \texttt{attn\_k} & 5 & $1.26601{\times}10^{-3}$ & (8.2278,-23.131,16.661); (4.0679,-2.8893,0.52582); (3.8667,-2.8284,0.53132); (3.1287,-2.2996,0.47247); (2.1868,-1.5647,0.40762) \\
& \texttt{attn\_v} & 5 & $7.42367{\times}10^{-4}$ & (8.2459,-23.182,16.698); (4.0956,-2.8964,0.52481); (3.9760,-2.8666,0.52953); (3.4770,-2.5538,0.50069); (2.5062,-1.8479,0.43834) \\
& \texttt{attn\_o} & 6 & $3.00361{\times}10^{-4}$ & (8.2613,-23.225,16.729); (4.1195,-2.9026,0.52399); (4.0691,-2.8897,0.52577); (3.8726,-2.8325,0.53178); (3.1421,-2.3096,0.47357); (2.1960,-1.5734,0.40854) \\
& \texttt{mlp\_gate} & 4 & $2.05821{\times}10^{-3}$ & (8.2005,-23.054,16.606); (4.0268,-2.8791,0.52740); (3.6822,-2.6995,0.51693); (2.7982,-2.0880,0.46496) \\
& \texttt{mlp\_up} & 5 & $1.73407{\times}10^{-3}$ & (8.2116,-23.086,16.629); (4.0435,-2.8832,0.52674); (3.7549,-2.7506,0.52263); (2.8987,-2.1259,0.45330); (2.0567,-1.4398,0.39443) \\
& \texttt{mlp\_down} & 4 & $4.25243{\times}10^{-3}$ & (8.1259,-22.845,16.455); (3.9189,-2.8530,0.53216); (3.2730,-2.4061,0.48427); (2.2978,-1.6663,0.41853) \\
\midrule
\multirow{14}{*}{Llama-3.1-760M}
& \texttt{attn\_q} & 6 & $3.00856{\times}10^{-4}$ & (8.2612,-23.225,16.729); (4.1194,-2.9025,0.52399); (4.0690,-2.8896,0.52578); (3.8721,-2.8322,0.53174); (3.1409,-2.3087,0.47348); (2.1952,-1.5726,0.40846) \\
& \texttt{attn\_k} & 4 & $3.07460{\times}10^{-3}$ & (8.1658,-22.957,16.536); (3.9758,-2.8665,0.52954); (3.4762,-2.5532,0.50063); (2.5052,-1.8471,0.43825) \\
& \texttt{attn\_v} & 5 & $1.70416{\times}10^{-3}$ & (8.2127,-23.088,16.631); (4.0450,-2.8836,0.52668); (3.7618,-2.7554,0.52317); (2.9117,-2.1359,0.45439); (2.0627,-1.4458,0.39505) \\
& \texttt{attn\_o} & 6 & $3.00986{\times}10^{-4}$ & (8.2612,-23.225,16.729); (4.1194,-2.9025,0.52399); (4.0690,-2.8896,0.52578); (3.8719,-2.8321,0.53173); (3.1406,-2.3085,0.47345); (2.1950,-1.5724,0.40844) \\
& \texttt{mlp\_gate} & 5 & $2.54507{\times}10^{-3}$ & (8.1838,-23.007,16.572); (4.0022,-2.8730,0.52841); (3.5796,-2.6270,0.50884); (2.6139,-1.9020,0.42880); (1.9545,-1.3357,0.38367) \\
& \texttt{mlp\_up} & 5 & $1.97798{\times}10^{-3}$ & (8.2033,-23.062,16.612); (4.0309,-2.8801,0.52724); (3.6999,-2.7119,0.51832); (2.8003,-2.0498,0.44494); (2.0150,-1.3981,0.39009) \\
& \texttt{mlp\_down} & 4 & $3.76471{\times}10^{-3}$ & (8.1424,-22.891,16.488); (3.9422,-2.8585,0.53106); (3.3529,-2.4644,0.49074); (2.3714,-1.7317,0.42562) \\
\midrule
\multirow{14}{*}{Llama-3.1-1.4B}
& \texttt{attn\_q} & 6 & $3.00393{\times}10^{-4}$ & (8.2613,-23.225,16.729); (4.1195,-2.9026,0.52399); (4.0691,-2.8897,0.52577); (3.8725,-2.8325,0.53178); (3.1420,-2.3095,0.47357); (2.1960,-1.5733,0.40854) \\
& \texttt{attn\_k} & 4 & $2.93098{\times}10^{-3}$ & (8.1707,-22.970,16.546); (3.9829,-2.8683,0.52923); (3.5034,-2.5727,0.50280); (2.5386,-1.8753,0.44135) \\
& \texttt{attn\_v} & 5 & $1.20276{\times}10^{-3}$ & (8.2300,-23.137,16.666); (4.0712,-2.8902,0.52570); (3.8824,-2.8394,0.53254); (3.1647,-2.3264,0.47544); (2.2121,-1.5883,0.41014) \\
& \texttt{attn\_o} & 6 & $3.00379{\times}10^{-4}$ & (8.2613,-23.225,16.729); (4.1195,-2.9026,0.52399); (4.0691,-2.8897,0.52577); (3.8726,-2.8325,0.53178); (3.1420,-2.3096,0.47357); (2.1960,-1.5733,0.40854) \\
& \texttt{mlp\_gate} & 5 & $1.98534{\times}10^{-3}$ & (8.2030,-23.061,16.611); (4.0306,-2.8800,0.52725); (3.6982,-2.7108,0.51819); (2.7975,-2.0476,0.44470); (2.0139,-1.3970,0.38998) \\
& \texttt{mlp\_up} & 5 & $1.58149{\times}10^{-3}$ & (8.2169,-23.100,16.639); (4.0514,-2.8852,0.52644); (3.7904,-2.7754,0.52540); (2.9672,-2.1782,0.45906); (2.0903,-1.4727,0.39788) \\
& \texttt{mlp\_down} & 4 & $4.32035{\times}10^{-3}$ & (8.1237,-22.838,16.450); (3.9156,-2.8522,0.53232); (3.2623,-2.3982,0.48340); (2.2886,-1.6581,0.41764) \\
\bottomrule
\end{tabular}
\label{tab:amo_schedule_configs}
\end{table*}

\paragraph{Training loss dynamics.}

\begin{figure}
    \centering
    \includegraphics[width=1.0\linewidth]{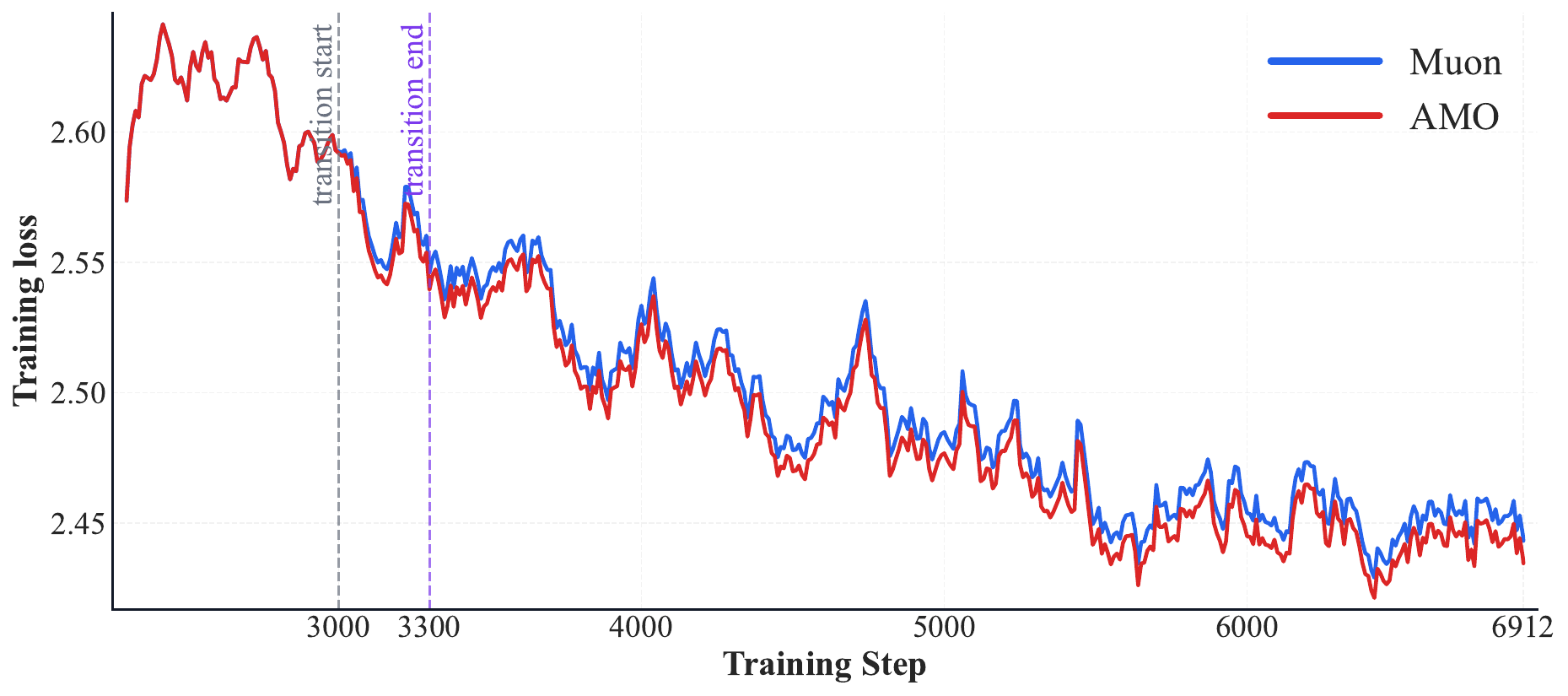}
    \caption{Training loss curve of AMO and Muon in pre-training Qwen3-1.7B.}
    \label{fig:amo_loss_curve}
\end{figure}

To gain deeper insights into the training dynamics, we present a comparison of the training loss curves of AMO and Muon during the pre-training of Qwen3-1.7B in Fig.~\ref{fig:amo_loss_curve}. 
It can be clearly observed that AMO consistently attains a lower training loss than Muon from the transition stage onward, indicating that the adaptive update yields more effective optimization under the same training budget (with the default budget ratio set to $r=1.0$).

\begin{table}
\centering
\small
\setlength{\tabcolsep}{2.4pt}
\caption{Full configurations of AMO for pre-training Llama3.1-760M/1.4B and Qwen3-0.6B/1.7B models.}
\begin{tabular}{lccc}
\toprule
\textbf{Model} & \textbf{Observation Horizon \& Interval \& Sample Size} & \textbf{Shrinkage} & \textbf{Transition Duration} \\ \midrule
Qwen3-0.6B     & (1200, 150, 8)                                           & 0.7                & 300                          \\
Qwen3-1.7B     & (3000, 375, 8)                                           & 0.7                & 300                          \\
Llama-3.1-760M & (1200, 150, 8)                                           & 0.7                & 300                          \\
Llama-3.1-1.4B & (3000, 375, 8)                                           & 0.7                & 300                          \\ \bottomrule
\end{tabular}
\label{tab:amo_configs}
\end{table}

\subsection{Optimality Analysis}
\label{app:ba_theory}

We analyze \textsc{BudgetAllocate} under four assumptions. 
Three are by construction or follow from established results: (A1) each $T_i$ ranges over integers in $[k_{\min}, k_{\max}]$; (A2) each $E_i$ is strictly decreasing on this range, as more NS iterations cannot increase the post-NS error under the Chebyshev-based bound that PE-composed polynomials inherit~\citep{pe,cans}; and (A4) the budget satisfies $|\mathcal{T}| \cdot k_{\min} \le B \le |\mathcal{T}| \cdot k_{\max}$ under the (possibly relaxed) range. 
The remaining assumption is the only non-trivial one: (A3) each $E_i$ is \textit{discretely convex}, meaning the marginal gain $\Delta E_i(k) := E_i(k) - E_i(k+1)$ is non-increasing in $k$, or equivalently $E_i(k+1) + E_i(k-1) \ge 2 E_i(k)$. 
Intuitively, the benefit of adding one more NS iteration diminishes as the step count grows. 

Under these assumptions, the allocation problem is a separable minimization of discretely convex functions under an integer box constraint with one linear equality, a classical instance of separable discrete convex optimization, and \textsc{BudgetAllocate} solves it to global optimality:

\begin{theorem}[Optimality of \textsc{BudgetAllocate}]
\label{thm:budget_allocate}
Under (A1)–(A4), the allocation $\{T_i^\star\}$ returned by 
Alg.~\ref{alg:budget_allocate} satisfies
\[
\{T_i^\star\} \;\in\; \arg\min_{\{T_i\}} \;\sum_{i=1}^{|\mathcal{T}|} E_i(T_i),
\quad \text{subject to} \quad \sum_i T_i = B,\ \ T_i \in [k_{\min}, k_{\max}].
\]
\end{theorem}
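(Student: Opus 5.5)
Our approach is to prove that the returned allocation satisfies a pairwise exchange (KKT-type) condition, and then to show that this condition suffices for global optimality when every $E_i$ is discretely convex. Write $\Delta E_i(k) = E_i(k) - E_i(k+1)$ for the marginal gain, which is non-increasing in $k$ by (A3). There are three steps.

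\textbf{Step 1: termination and feasibility.} Under (A4), the \textit{add} pass always has some index with $T_i < k_{\max}$ while $\sum_i T_i < B \le |\mathcal{T}| k_{\max}$. Symmetrically, the \textit{remove} pass always has some index with $T_i > k_{\min}$. Each iteration moves the total by one toward $B$, so both passes terminate with $\sum_i T_i = B$ and every $T_i \in [k_{\min}, k_{\max}]$. The \textit{transfer} pass preserves both the sum and the box constraint.

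For termination of the transfer pass, note that each transfer changes the objective by $-\Delta E_i(T_i) + \Delta E_j(T_j - 1)$, which is strictly negative by the loop condition. The objective therefore strictly decreases. The feasible set is finite, so no allocation can repeat, and the loop stops. At termination the \emph{exchange condition} holds:
\[
\Delta E_i(T_i) \le \Delta E_j(T_j - 1) \quad \text{for all } i \text{ with } T_i < k_{\max} \text{ and all } j \text{ with } T_j > k_{\min}.
\]

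\textbf{Step 2: the exchange condition implies global optimality.} This is the main step. Let $\lambda = \min_{j:\,T_j > k_{\min}} \Delta E_j(T_j - 1)$, with the convention $\min\emptyset = +\infty$ and $\max\emptyset = -\infty$. The exchange condition gives
\[
\max_{i:\,T_i < k_{\max}} \Delta E_i(T_i) \le \lambda .
\]
Now take any feasible allocation $\{S_i\}$. For each $i$, discrete convexity gives a telescoping bound on $E_i(S_i) - E_i(T_i)$:
\begin{itemize}
\item If $S_i > T_i$, then $E_i(T_i) - E_i(S_i) = \sum_{k=T_i}^{S_i - 1} \Delta E_i(k) \le (S_i - T_i)\,\Delta E_i(T_i) \le (S_i - T_i)\,\lambda$. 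Hence $E_i(S_i) - E_i(T_i) \ge -\lambda (S_i - T_i)$.
\item If $S_i < T_i$, then $E_i(S_i) - E_i(T_i) = \sum_{k=S_i}^{T_i - 1} \Delta E_i(k) \ge (T_i - S_i)\,\Delta E_i(T_i - 1) \ge (T_i - S_i)\,\lambda$. Hence again $E_i(S_i) - E_i(T_i) \ge -\lambda (S_i - T_i)$.
\end{itemize}
Summing over $i$ and using $\sum_i S_i = \sum_i T_i = B$ gives $\sum_i E_i(S_i) \ge \sum_i E_i(T_i)$.

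The delicate point is the case where one of the index sets is empty, which makes $\lambda$ infinite. We handle it as follows.
\begin{itemize}
\item If no index has $T_j > k_{\min}$, then every $T_j = k_{\min}$, so no feasible $S$ can have $S_i < T_i$. Using the sum constraint then forces $S = T$.
\item If no index has $T_i < k_{\max}$, the symmetric argument applies.
\item When both sets are nonempty, $\lambda$ is finite. Replacing $\lambda$ by any value $\lambda'$ with $\max_{i:\,T_i < k_{\max}} \Delta E_i(T_i) \le \lambda' \le \lambda$ keeps both bounds valid, so the argument above goes through.
\end{itemize}

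\textbf{Step 3: conclusion.} Combining Step 1 (the output is feasible) with Step 2 (any feasible point satisfying the exchange condition minimizes the objective) shows that $\{T_i^\star\}$ lies in the argmin, which is the claim of Theorem~\ref{thm:budget_allocate}. Assumption (A2) is not needed for optimality itself. It only guarantees that the marginal gains are positive, which makes the greedy rules in the add and remove passes meaningful. We expect Step 2's handling of the boundary indices and the choice of a finite multiplier $\lambda$ to be the only step requiring care.
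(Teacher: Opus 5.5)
Your proof is correct. Its overall structure matches the paper's: first feasibility and termination of the add, remove and transfer passes, then the claim that the exchange condition $(\star)$ implies global optimality. Where you differ is in how you prove that sufficiency step.

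The paper's argument is combinatorial. Given a strictly better feasible $\{T_i'\}$, it builds an explicit path of $L$ unit transfers from $\{T_i\}$ to $\{T_i'\}$. It then shows each transfer has non-negative objective change, by combining $(\star)$ at the starting allocation with the monotonicity of marginal gains along the path (raised coordinates only increase, lowered ones only decrease).

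You instead extract a single multiplier $\lambda$ sandwiched between $\max_{i:T_i<k_{\max}}\Delta E_i(T_i)$ and $\min_{j:T_j>k_{\min}}\Delta E_j(T_j-1)$. Telescoping with (A3) then gives the coordinatewise bound $E_i(S_i)-E_i(T_i)\ge-\lambda(S_i-T_i)$, which says each $T_i$ minimizes $E_i(k)+\lambda k$ over the box. The single equality constraint cancels the $\lambda$ terms when you sum.

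What your route buys is the standard Lagrangian, dual-certificate proof for separable convex resource allocation. It needs no path construction, gives an explicit certificate, and makes clear that only the one linear constraint couples the operator types. Its cost is the extended-real bookkeeping when one of the two index sets is empty. You handle this correctly: the box and the sum constraint together force $S=T$.

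The paper's path argument needs no such conventions. It also proves the converse (necessity of $(\star)$), which the theorem does not require.

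You are slightly more careful than the paper in Step 1, since you use (A4) to check that the $\arg\max$ and $\arg\min$ in the add and remove passes range over nonempty sets. Your remark that (A2) is never used is also accurate; the paper's proof does not use it either. One small point you share with the paper: the strict-decrease claim for the transfer pass tacitly assumes $i\neq j$. This is harmless, because (A3) rules out $\Delta E_i(T_i)>\Delta E_i(T_i-1)$.
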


\begin{proof}[Proof of Theorem~\ref{thm:budget_allocate}.]
We first establish a necessary and sufficient optimality condition, then show that \textsc{BudgetAllocate} terminates at a point satisfying it.

\paragraph{Optimality condition.}

We claim that a feasible allocation $\{T_i\}$ minimizes 
$\sum_i E_i(T_i)$ subject to $\sum_i T_i = B$ and 
$T_i \in [k_{\min}, k_{\max}]$ \emph{if and only if}
\begin{equation}
\Delta E_i(T_i) \;\le\; \Delta E_j(T_j - 1)
\qquad \text{for all } (i, j) \text{ with } T_i < k_{\max} 
\text{ and } T_j > k_{\min}.
\tag{$\star$}
\label{eq:opt_cond}
\end{equation}

The "only if" direction is immediate. 
If $(\star)$ fails at some pair $(i, j)$, then 
$\Delta E_i(T_i) > \Delta E_j(T_j - 1)$, and the transfer 
$(T_i, T_j) \to (T_i + 1, T_j - 1)$ is feasible (since 
$T_i < k_{\max}$ and $T_j > k_{\min}$) and changes the objective 
by
\[
\bigl[E_i(T_i + 1) - E_i(T_i)\bigr] + \bigl[E_j(T_j - 1) - E_j(T_j)\bigr]
= -\Delta E_i(T_i) + \Delta E_j(T_j - 1) 
< 0,
\]
strictly decreasing the objective and contradicting optimality.

The "if" direction uses A3. 
Suppose $(\star)$ holds at $\{T_i\}$ but there is a strictly better feasible 
$\{T_i'\}$, so $\sum_i E_i(T_i') < \sum_i E_i(T_i)$. 
Let $I^+ = \{i : T_i' > T_i\}$ and $I^- = \{i : T_i' < T_i\}$. 
Since both $\sum_i T_i = B$ and $\sum_i T_i' = B$, we have 
$\sum_{i \in I^+} (T_i' - T_i) = \sum_{i \in I^-} (T_i - T_i') 
=: L$. We can therefore construct a sequence of $L$ 
single-unit transfers that transforms $\{T_i\}$ into $\{T_i'\}$: 
at each step, pick any $i \in I^+$ with the current $T_i$ below 
its target $T_i'$ and any $j \in I^-$ with the current $T_j$ 
above its target, and perform 
$(T_i, T_j) \to (T_i + 1, T_j - 1)$. 
Every intermediate 
allocation in this sequence remains in 
$[k_{\min}, k_{\max}]^{|\mathcal{T}|}$ and has total $B$, so 
each transfer is feasible.

We now track the objective change along this path. 
Consider the 
$\ell$-th transfer, applied to a current allocation $\{T_i^{(\ell)}\}$ 
with the chosen pair $(i_\ell, j_\ell)$. The transfer changes the 
objective by 
$-\Delta E_{i_\ell}(T_{i_\ell}^{(\ell)}) + \Delta E_{j_\ell}(T_{j_\ell}^{(\ell)} - 1)$. 
At the \emph{starting} allocation $\{T_i\}$, condition 
$(\star)$ gives
$\Delta E_{i_\ell}(T_{i_\ell}) \le \Delta E_{j_\ell}(T_{j_\ell} - 1)$. 
Along the path, $T_{i_\ell}$ only increases (since 
$i_\ell \in I^+$) and $T_{j_\ell}$ only decreases (since 
$j_\ell \in I^-$), so $T_{i_\ell}^{(\ell)} \ge T_{i_\ell}$ and 
$T_{j_\ell}^{(\ell)} - 1 \le T_{j_\ell} - 1$. By A3 (marginal 
gain non-increasing in $k$),
\[
\Delta E_{i_\ell}(T_{i_\ell}^{(\ell)}) \le 
\Delta E_{i_\ell}(T_{i_\ell}),
\qquad
\Delta E_{j_\ell}(T_{j_\ell}^{(\ell)} - 1) \ge 
\Delta E_{j_\ell}(T_{j_\ell} - 1),
\]
which combined with $(\star)$ yields
\[
\Delta E_{i_\ell}(T_{i_\ell}^{(\ell)}) \le 
\Delta E_{i_\ell}(T_{i_\ell}) \le 
\Delta E_{j_\ell}(T_{j_\ell} - 1) \le 
\Delta E_{j_\ell}(T_{j_\ell}^{(\ell)} - 1).
\]
Hence the objective change at the $\ell$-th transfer is 
non-positive:
\[
-\Delta E_{i_\ell}(T_{i_\ell}^{(\ell)}) + 
\Delta E_{j_\ell}(T_{j_\ell}^{(\ell)} - 1) \ge 0.
\]
Summing over $\ell = 1, \ldots, L$, the total objective change 
from $\{T_i\}$ to $\{T_i'\}$ is non-negative, i.e.\ 
$\sum_i E_i(T_i') \ge \sum_i E_i(T_i)$, contradicting the 
assumption $\sum_i E_i(T_i') < \sum_i E_i(T_i)$. This proves the 
"if" direction, so $(\star)$ characterizes global optimality.

\paragraph{BudgetAllocate terminates at a point satisfying $(\star)$.}
It remains to show that the output of 
Alg.~\ref{alg:budget_allocate} satisfies $(\star)$.

First, the add and remove passes terminate and ensure 
$\sum_i T_i = B$ at their end. If initialization already 
satisfies $\sum_i T_i = B$, both while-loops exit immediately. 
Otherwise exactly one is entered; the add pass (the remove case 
is symmetric) executes at most $B - \sum_i T_i$ iterations, each 
increasing $\sum_i T_i$ by $1$, after which $\sum_i T_i = B$ and 
the loop exits. Because each step picks 
$\arg\max_i [E_i(T_i) - E_i(T_i + 1)]$ among types with 
$T_i < k_{\max}$, these passes respect the box constraint.

The transfer pass then strictly decreases $\sum_i E_i(T_i)$ at 
every iteration, by the loop condition 
$\Delta E_i(T_i) > \Delta E_j(T_j - 1)$ combined with the 
identity 
$-\Delta E_i(T_i) + \Delta E_j(T_j - 1) < 0$ derived above. 
Since each $E_i$ takes finitely many values on 
$\{k_{\min}, \ldots, k_{\max}\}$, the objective 
$\sum_i E_i(T_i)$ also takes finitely many values, so strict 
decrease forces termination in finitely many iterations. At 
termination, no pair $(i, j)$ with $T_i < k_{\max}$ and 
$T_j > k_{\min}$ satisfies 
$\Delta E_i(T_i) > \Delta E_j(T_j - 1)$, which is exactly 
$(\star)$.

Combining: at the output of \textsc{BudgetAllocate}, 
$\sum_i T_i = B$, $T_i \in [k_{\min}, k_{\max}]$, and 
$(\star)$ holds, so the output is a global minimizer of 
$\sum_i E_i(T_i)$ under the stated constraints.
\end{proof}

\section{Related Work}

\paragraph{Adam optimizer.}

Adam \cite{adam} and its decoupled weight decay variant AdamW \cite{adamw} have long served as the default optimizers for LLM pre-training. \cite{why_adam_better_than_sgd} provides a theoretical justification from a Hessian perspective, showing that the loss landscape of Transformers exhibits highly heterogeneous curvature across parameter dimensions, which necessitates the coordinate-wise adaptivity that Adam provides and that SGD fundamentally lacks. 
Building on this understanding, one line of work seeks to go beyond Adam's diagonal second-moment preconditioner by incorporating richer curvature information. 
Sophia \citep{sophia} uses a lightweight diagonal Hessian estimator paired with element-wise clipping to achieve a 2x speedup over Adam in pre-training steps with negligible per-step overhead. 
SOAP \citep{soap} establishes a formal equivalence between Shampoo and Adafactor in a rotated basis, and runs Adam in Shampoo's eigenbasis to combine the benefits of Kronecker-factored preconditioning with the simplicity of Adam. 
A second line of work improves the update rule itself. 
ADOPT \citep{adopt} identifies the correlation between the current gradient and the second-moment estimate as the root cause of Adam's theoretical non-convergence, and resolves it by excluding the current gradient from the second-moment computation and reordering the momentum and normalization steps, achieving the optimal $\mathcal{O}(1/\sqrt{T})$ rate with any $\beta_2$.
Adan \citep{adan} integrates Nesterov momentum into the adaptive gradient framework, and MARS \citep{mars} is the first to successfully bring variance reduction into large-scale training by reconciling preconditioned gradient methods with a scaled stochastic recursive momentum, consistently outperforming AdamW on GPT-2 pre-training. 
Finally, a practical but important direction aims at reducing the overhead of Adam-style optimizers. 
Adam-mini \citep{adam_mini} exploits the block-diagonal Hessian structure to partition parameters into groups that share a single learning rate, substantially cutting optimizer memory while matching Adam's performance. Prodigy \citep{prodigy} takes a complementary approach by eliminating the need for learning rate tuning altogether through a D-adaptation mechanism that automatically estimates the optimal step size during training.

\paragraph{Muon optimizer.}

Muon \cite{muon} is a recently proposed optimizer designed for the hidden-layer parameters of neural networks. 
It orthogonalizes the momentum buffer via Newton-Schulz (NS) iteration, replacing the gradient's singular values with uniform ones so that the update utilizes all directions in parameter space equally. 
This design has demonstrated substantial speedups over AdamW in large-scale LLM pre-training \citep{kimi_k2,glm5,dmuon}. 
~\cite{muon_tail} provide a theoretical explanation for Muon's advantage from the perspective of associative memory learning. 
A central practical concern with Muon is the cost of the multi-step NS iteration process. 
Therefore, several works explore accelerating the NS iteration itself.
Polar Express \citep{pe} derives optimal polynomial coefficients for the matrix sign iteration, CANS \citep{cans} replaces hand-tuned coefficients with Chebyshev-type polynomials for faster spectral convergence, Turbo-Muon \citep{turbo_muon} preconditions the input matrix to concentrate its spectrum and thus reduce the required number of NS steps, and PRISM \citep{prism} introduces a distribution-free adaptive framework for general matrix function computation including orthogonalization. 
More recently, Gram Newton-Schulz \citep{GramMuon} exploits Gram-matrix structure for a hardware-aware implementation.
A parallel line of work reduces or replaces the NS step altogether to lower communication and computation overhead. 
Dion \citep{dion} substitutes NS with amortized power iteration and low-rank approximation, enabling faster updates without full-matrix reconstruction under FSDP and tensor parallelism. 
Drop-Muon \citep{drop_muon} and MuonBP \citep{muonbp} perform orthogonalization only periodically while provably maintaining convergence.
NorMuon \cite{normuon} replaces NS with column normalization to achieve comparable performance at a fraction of the cost. 
Finally, because Muon's orthogonalization discards singular-value magnitudes, it forgoes the coordinate-wise or curvature-aware adaptivity that Adam provides. 
Several methods address this by hybridizing orthogonal updates with adaptive mechanisms. 
AdaMuon \citep{adamuon} reintroduces element-wise adaptive learning rates in Adam into the Muon framework, and \cite{adagrad_muon} equips orthogonal updates with an AdaGrad-Norm step size. 
COSMOS \citep{cosmos} fuses ideas from SOAP and Muon by running Adam in Muon's eigenbasis for memory-efficient hybrid preconditioning. 
ASGO \citep{asgo} designs a single-sided adaptive preconditioner that exploits the low-rank gradient and block-diagonal Hessian structure observed in Transformers, and ROOT \citep{root} augments orthogonalized updates with robustness mechanisms to improve training stability.

\paragraph{Benchmarking and understanding optimizers.}

Alongside the development of new optimizers, a parallel line of work focuses on systematically benchmarking, deconstructing, and understanding what makes an optimizer effective for LLM pre-training. 
\cite{deconstructing} deconstructs the design choices behind optimizers for autoregressive language models, identifying the key components, such as sign-based updates and momentum, that most strongly influence performance. 
In a similar spirit, \cite{fantastic_optimizers} conducts a comprehensive empirical study across a range of pre-training optimizers, characterizing the conditions under which different methods excel and offering practical guidelines for optimizer selection. 
In an investigation of orthogonalization-based methods, \cite{what_really_matters} isolate key design variables in whitening optimizers. 
Their findings suggest that empirical improvement is attributable to only a few critical choices, particularly the application of momentum before the orthogonalization step.
From a more theoretical perspective, \cite{adam_secret_sauce} investigates the mechanisms underlying Adam's robustness, seeking to identify the secret sauce that makes it consistently competitive despite its simplicity. 
Similarly, \cite{practical_efficiency} evaluates the practical wall-clock efficiency of Muon at scale, reporting that while Muon achieves faster convergence in terms of steps, its per-step overhead from NS iteration can offset gains depending on hardware and parallelism configuration. 

\paragraph{Data selection for pre-training LLMs.}

Recently, a growing body of work has explored how to curate high-quality subsets from large-scale corpora for pre-training LLMs, and existing methods can be broadly organized into several categories. 
The first line of work performs \textit{reference-model-based scoring}, using a pre-trained model to assign quality signals to individual documents. DSIR \citep{dsir} selects examples whose distribution aligns with a curated target corpus via importance resampling, while \cite{ppl} investigate perplexity-based pruning and find that small reference models can serve as effective proxies. 
Beyond single-score filtering, \textit{multi-dimensional quality rating} methods assess data along multiple interpretable dimensions. 
QuRating \citep{qurating} scores documents on facets such as writing style and expertise, Meta-Rater \citep{meta_rater} learns a joint multi-dimensional scoring function, and DataRater \citep{datarater} meta-learns a curation policy that generalizes across tasks. 
A third category, \textit{model-aware selection}, ties data quality to the evolving state of the model being trained. 
DsDm \citep{dsdm} estimates each example's marginal contribution via datamodels \citep{datamodels}, MATES \citep{mates} periodically updates lightweight influence models during pre-training, \cite{optimal_control} dynamically adjusts the training distribution as a sequential decision problem, and \cite{predictive} identifies the most instructive examples by their ability to predict learning outcomes. 
Complementing instance-level methods, several approaches address \textit{corpus-level diversity}. 
D4 \citep{d4} combines document de-duplication with diversification, \cite{organize} constructs coherent domain clusters for balanced curation, and the diversity-aware framework of \cite{harnessing} explicitly optimizes distributional coverage. 
Finally, \textit{system-level strategies} utilize multiple signals within a unified pipeline. 
\cite{multi_actor} coordinates specialized selection actors to jointly curate the corpus.

\section{Pilot Experiments}

In these two pilot experiments, we select a total of eight models for observation.
Full details of model architecture, datasets, and training hyperparameters are provided in App.~\ref{app:subsec_exp_setting}. 

\subsection{Data Curriculum}
\label{app:data_curriculum}

\paragraph{Setup.}

We conduct a factorial study over two base models, \textbf{Llama3.1-760M} and \textbf{Qwen3-0.6B}, two optimizers (Muon and AdamW), four learning-rate schedulers, and three data orders. 
All runs use the same \texttt{FineWeb-Edu} dataset, packed training sequences of length 2048, bf16 mixed precision, gradient clipping at 1.0, and seed 42. 
No validation is performed during pre-training.

The data curriculum is implemented as a fixed-pool reordering scheme. 
The \texttt{uniform} setting applies a seeded shuffle to the pool. 
The \texttt{ascend} and \texttt{descend} settings sort examples by the \textit{educational value} score field in increasing or decreasing order, respectively. 
Therefore, all curriculum variants see the same data pool and differ only in the \textbf{presentation order} of examples.

For learning-rate control, we compare four scheduler families. We first define the reference stable learning rate by the empirical rule from \cite{lr_law}
\[
\eta_{\mathrm{stable}} = 10^{-4} \cdot 38.4588 \cdot N^{-0.2219} \cdot D^{-0.3509},
\]
where $N$ and $D$ denote model size and reference training tokens in billions. 
This gives $\eta_{\mathrm{stable}}=1.573\times 10^{-3}$ for Llama3.1-760M and $\eta_{\mathrm{stable}}=1.801\times 10^{-3}$ for Qwen3-0.6B. 

We consider four LR schedulers. 
\textbf{WS (Warmup then Stable)} uses linear warmup to $\eta_{\mathrm{stable}}$ followed by a constant plateau. 
\textbf{WSD (Warmup, Stable, then Decay)} uses linear warmup, a stable phase at $\eta_{\mathrm{stable}}$, and then a linear decay to $0.1\,\eta_{\mathrm{stable}}$. 
\textbf{Cosine} uses linear warmup followed by cosine decay toward $0.1\,\eta_{\mathrm{stable}}$. 
\textbf{Constant} keeps learning rate fixed throughout training.

For Qwen3-0.6B, WS uses 143 warmup steps and 2717 stable steps; WSD uses 143/2431/286 warmup/stable/decay steps; Cosine uses 2860 total steps with 143 warmup steps; and the Constant baseline uses a fixed learning rate of $1.8\times 10^{-3}$, effectively matching $\eta_{\mathrm{stable}}$. 

For Llama3.1-760M, WS uses 181 warmup steps and 3441 stable steps; WSD uses 181/3079/362 warmup/stable/decay steps; Cosine is configured with 3866 total steps and 193 warmup steps, while the actual grid runs for 3622 optimizer steps, so the final learning rate remains slightly above the nominal cosine floor; and the Constant baseline uses a smaller fixed learning rate of $3.0\times 10^{-4}$.

Optimizer hyperparameters are otherwise fixed within each model. 
AdamW uses $\beta=(0.9,0.95)$, $\epsilon=10^{-8}$, and weight decay 0.1. Muon uses momentum 0.95, Nesterov momentum, five NS steps, and weight decay 0.1; non-matrix parameters are handled by AdamW. 

\begin{table}[t]
\centering
\small
\setlength{\tabcolsep}{2pt}
\caption{LR scheduler configurations used in data curriculum. WS denotes warmup-stable, WSD denotes warmup-stable-decay, and Cosine denotes warmup and cosine decay.  The same scheduler settings are used for both Muon and AdamW within each model.}
\begin{tabular}{lcccccc}
\toprule
\textbf{Model} & \textbf{Total Steps} & $\eta_{\mathrm{stable}}$ & \textbf{WS} $(w,s)$ & \textbf{WSD} $(w,s,d)$ & \textbf{Cosine} $(w,d)$ & \textbf{Constant} \\
\midrule
Llama3.1-760M & 3622 & $1.573\times10^{-3}$ & $(181,3441)$ & $(181,3079,362)$ & $(193,3866)$ & $3.0\times10^{-4}$ \\
Qwen3-0.6B    & 2860 & $1.801\times10^{-3}$ & $(143,2717)$ & $(143,2431,286)$ & $(143,2860)$ & $1.8\times10^{-3}$ \\
\bottomrule
\end{tabular}
\label{tab:data_curriculum_schedulers}
\end{table}

\begin{table}[t]
\centering
\footnotesize
\caption{Data curriculum results of Llama3.1-760M. Within each LR scheduler setting, the pattern $\texttt{Uniform} > \texttt{Descend} > \texttt{Ascend}$ holds consistently. Abbreviations are as follows. AE: ARC-Easy; AC: ARC-Challenge; MM: MMLU; MM-P: MMLU-pro; HS: Hellaswag; OBQA: OpenBookQA; WG: WinoGrande;  CSQA: CommonsenseQA; AGI: AGIEval-en.}
\setlength{\tabcolsep}{1.5pt}
\begin{tabular}{@{}lllccccccccccccc@{}}
\toprule
\textbf{Optimizer} & \textbf{\begin{tabular}[c]{@{}l@{}}LR\\ Scheduler\end{tabular}} & \textbf{Order} & \textbf{AE} & \textbf{AC} & \textbf{SciQ} & \textbf{MM} & \textbf{MM-P} & \textbf{HS} & \textbf{OBQA} & \textbf{PIQA} & \textbf{RACE} & \textbf{WG} & \textbf{CSQA} & \textbf{AGI} & \textbf{Avg}   \\ \midrule
\multirow{12}{*}{Muon}               & Const                                                           & Ascend         & 49.45       & 22.70       & 74.00         & 22.84       & 8.75          & 30.68       & 19.40         & 63.87         & 30.62         & 50.67       & 19.49         & 17.06        & 34.13          \\
                   &                                                                 & Descend        & 53.96       & 21.42       & 76.40         & 24.40       & 7.73          & 30.10       & 21.80         & 64.15         & 29.67         & 50.43       & 19.57         & 17.11        & 34.73          \\
                   &                                                                 & Uniform        & 56.90       & 23.21       & 75.40         & 23.00       & 7.26          & 30.49       & 20.60         & 64.58         & 28.23         & 51.54       & 19.66         & 17.26        & \textbf{34.84} \\
                   & WS                                                              & Ascend         & 55.26       & 22.61       & 76.50         & 22.97       & 5.35          & 33.61       & 21.00         & 66.21         & 31.10         & 50.91       & 18.84         & 16.90        & 35.11          \\
                   &                                                                 & Descend        & 57.87       & 25.17       & 77.60         & 25.35       & 5.53          & 32.65       & 19.60         & 66.00         & 29.76         & 49.17       & 19.74         & 17.19        & 35.47          \\
                   &                                                                 & Uniform        & 59.22       & 26.88       & 81.30         & 23.28       & 4.55          & 33.42       & 21.40         & 66.38         & 30.62         & 51.14       & 19.33         & 17.55        & \textbf{36.26} \\
                   & WSD                                                             & Ascend         & 55.93       & 24.57       & 79.40         & 24.33       & 6.84          & 33.63       & 21.60         & 67.30         & 31.20         & 51.30       & 18.51         & 17.96        & 36.05          \\
                   &                                                                 & Descend        & 60.94       & 26.70       & 80.80         & 24.56       & 6.67          & 32.87       & 24.80         & 67.30         & 31.39         & 52.96       & 21.87         & 17.55        & 37.37          \\
                   &                                                                 & Uniform        & 61.49       & 26.88       & 81.10         & 24.76       & 7.73          & 33.76       & 24.60         & 67.41         & 31.29         & 52.46       & 21.87         & 17.08        & \textbf{37.54} \\
                   & Cosine                                                          & Ascend         & 54.29       & 21.59       & 79.50         & 24.11       & 7.48          & 33.19       & 21.00         & 66.65         & 30.72         & 51.54       & 19.74         & 17.13        & 35.58          \\
                   &                                                                 & Descend        & 59.34       & 25.09       & 79.90         & 23.41       & 6.67          & 31.96       & 20.60         & 66.10         & 31.00         & 52.49       & 20.07         & 16.41        & 36.09          \\
                   &                                                                 & Uniform        & 59.60       & 25.68       & 82.70         & 25.49       & 7.42          & 33.00       & 22.00         & 66.27         & 29.86         & 51.30       & 19.25         & 17.32        & \textbf{36.66} \\ \midrule
\multirow{12}{*}{AdamW}              & Const                                                           & Ascend         & 48.32       & 20.73       & 69.90         & 23.02       & 6.77          & 28.90       & 16.00         & 61.92         & 28.33         & 52.33       & 19.57         & 17.60        & 32.78          \\
                   &                                                                 & Descend        & 49.03       & 20.65       & 71.50         & 23.88       & 4.98          & 28.96       & 19.00         & 61.92         & 27.94         & 51.54       & 19.41         & 17.26        & 33.01          \\
                   &                                                                 & Uniform        & 50.51       & 21.26       & 72.30         & 23.07       & 5.28          & 28.72       & 19.20         & 61.26         & 27.66         & 51.20       & 19.57         & 17.87        & \textbf{33.16} \\
                   & WS                                                              & Ascend         & 54.46       & 23.29       & 76.70         & 23.68       & 6.93          & 32.29       & 21.00         & 66.43         & 29.67         & 51.78       & 19.82         & 17.52        & 35.30          \\
                   &                                                                 & Descend        & 57.95       & 23.46       & 80.30         & 25.16       & 6.13          & 32.29       & 19.60         & 66.10         & 29.86         & 52.25       & 20.07         & 17.26        & 35.87          \\
                   &                                                                 & Uniform        & 58.25       & 26.02       & 79.80         & 23.69       & 8.28          & 32.32       & 21.60         & 66.16         & 29.86         & 52.09       & 19.74         & 16.95        & \textbf{36.23} \\
                   & WSD                                                             & Ascend         & 54.67       & 24.15       & 77.30         & 22.97       & 7.97          & 32.94       & 20.40         & 66.54         & 32.44         & 52.09       & 19.57         & 16.90        & 35.66          \\
                   &                                                                 & Descend        & 60.31       & 24.66       & 80.60         & 26.58       & 6.18          & 32.82       & 20.60         & 66.27         & 30.05         & 52.33       & 20.15         & 17.03        & 36.47          \\
                   &                                                                 & Uniform        & 59.39       & 25.68       & 80.80         & 24.63       & 6.46          & 32.78       & 24.40         & 66.92         & 29.19         & 51.78       & 19.57         & 17.65        & \textbf{36.60} \\
                   & Cosine                                                          & Ascend         & 53.24       & 23.29       & 77.60         & 25.30       & 7.72          & 32.36       & 19.80         & 65.83         & 29.57         & 51.46       & 19.90         & 17.21        & 35.27          \\
                   &                                                                 & Descend        & 58.84       & 25.00       & 81.80         & 24.74       & 7.87          & 30.68       & 19.00         & 65.29         & 30.14         & 51.20       & 19.82         & 16.64        & 35.92          \\
                   &                                                                 & Uniform        & 59.22       & 25.77       & 79.40         & 24.19       & 5.38          & 32.59       & 22.00         & 66.21         & 29.47         & 51.38       & 20.64         & 17.52        & \textbf{36.15} \\ \bottomrule
\end{tabular}      
\label{tab:llama_curriculum}
\end{table}

\begin{table}[t]
\centering
\footnotesize
\caption{Data curriculum results of Qwen3-0.6B. Within each LR scheduler setting, the pattern $\texttt{Uniform} > \texttt{Descend} > \texttt{Ascend}$ holds consistently.}
\setlength{\tabcolsep}{1.5pt}
\begin{tabular}{lllccccccccccccc}
\toprule
\textbf{Optimizer} & \textbf{\begin{tabular}[c]{@{}l@{}}LR\\ Scheduler\end{tabular}} & \textbf{Order} & \multicolumn{1}{c}{\textbf{AE}} & \multicolumn{1}{c}{\textbf{AC}} & \multicolumn{1}{c}{\textbf{SciQ}} & \multicolumn{1}{c}{\textbf{MM}} & \multicolumn{1}{c}{\textbf{MM-P}} & \multicolumn{1}{c}{\textbf{HS}} & \multicolumn{1}{c}{\textbf{OBQA}} & \multicolumn{1}{c}{\textbf{PIQA}} & \multicolumn{1}{c}{\textbf{RACE}} & \multicolumn{1}{c}{\textbf{WG}} & \multicolumn{1}{c}{\textbf{CSQA}} & \multicolumn{1}{c}{\textbf{AGI}} & \multicolumn{1}{c}{\textbf{Avg}} \\ \midrule
\multirow{12}{*}{Muon}               & Const                                                           & Ascend         & 56.57                           & 23.12                           & 81.10                             & 23.58                           & 6.92                              & 34.52                           & 21.60                             & 68.01                             & 31.20                             & 53.35                           & 20.39                             & 17.29                            & 36.47                            \\
                   &                                                                 & Descend        & 60.73                           & 26.02                           & 83.50                             & 25.60                           & 6.48                              & 34.00                           & 21.80                             & 65.12                             & 30.81                             & 51.93                           & 19.50                             & 16.46                            & 36.83                   \\
                   &                                                                 & Uniform        & 62.00                           & 26.71                           & 82.20                             & 25.17                           & 5.24                              & 34.70                           & 22.80                             & 66.97                             & 30.62                             & 52.09                           & 19.98                             & 17.29                            & \textbf{37.15}                   \\
                   & WS                                                              & Ascend         & 55.93                           & 23.89                           & 77.60                             & 23.23                           & 5.88                              & 34.84                           & 23.20                             & 67.68                             & 30.72                             & 53.35                           & 19.57                             & 17.45                            & 36.11                   \\
                   &                                                                 & Descend        & 61.74                           & 26.54                           & 81.60                             & 23.61                           & 5.31                              & 33.38                           & 22.00                             & 66.05                             & 31.20                             & 52.01                           & 19.74                             & 17.58                            & 36.73                            \\
                   &                                                                 & Uniform        & 59.09                           & 27.39                           & 82.60                             & 25.24                           & 5.47                              & 34.48                           & 23.20                             & 66.54                             & 32.15                             & 52.72                           & 20.07                             & 17.11                            & \textbf{37.17}                   \\
                   & WSD                                                             & Ascend         & 56.36                           & 24.32                           & 80.90                             & 23.52                           & 5.98                              & 35.11                           & 23.20                             & 68.44                             & 30.91                             & 53.20                           & 20.23                             & 17.60                            & 36.65                            \\
                   &                                                                 & Descend        & 61.78                           & 26.62                           & 83.10                             & 24.63                           & 5.82                              & 34.47                           & 22.40                             & 66.49                             & 30.53                             & 51.93                           & 19.57                             & 17.55                            & 37.07                            \\
                   &                                                                 & Uniform        & 61.74                           & 26.45                           & 82.00                             & 24.73                           & 5.22                              & 34.97                           & 23.60                             & 67.90                             & 31.96                             & 51.22                           & 20.31                             & 17.55                            & \textbf{37.30}                   \\
                   & Cosine                                                          & Ascend         & 54.38                           & 22.61                           & 79.60                             & 24.31                           & 6.56                              & 34.37                           & 20.60                             & 67.36                             & 30.72                             & 51.54                           & 19.16                             & 17.68                            & 35.74                            \\
                   &                                                                 & Descend        & 61.83                           & 25.94                           & 83.00                             & 25.00                           & 6.48                              & 33.44                           & 22.40                             & 66.00                             & 30.81                             & 49.49                           & 20.97                             & 17.37                            & 36.89                            \\
                   &                                                                 & Uniform        & 62.42                           & 26.45                           & 81.50                             & 23.59                           & 6.77                              & 34.51                           & 21.40                             & 67.74                             & 30.62                             & 51.70                           & 19.90                             & 17.50                            & \textbf{37.01}                   \\ \midrule
\multirow{12}{*}{AdamW}              & Const                                                           & Ascend         & 50.42                           & 21.42                           & 71.70                             & 22.93                           & 6.60                              & 30.01                           & 17.40                             & 64.58                             & 28.42                             & 50.36                           & 19.57                             & 17.11                            & 33.38                            \\
                   &                                                                 & Descend        & 53.91                           & 22.70                           & 76.20                             & 22.90                           & 6.07                              & 29.90                           & 18.60                             & 63.00                             & 29.09                             & 49.01                           & 19.66                             & 16.87                            & 33.99                            \\
                   &                                                                 & Uniform        & 52.74                           & 22.44                           & 75.10                             & 24.18                           & 5.41                              & 29.88                           & 20.00                             & 63.92                             & 30.43                             & 49.80                           & 19.66                             & 17.42                            & \textbf{34.25}                   \\
                   & WS                                                              & Ascend         & 50.51                           & 21.42                           & 76.10                             & 23.04                           & 6.77                              & 31.76                           & 19.60                             & 64.80                             & 28.80                             & 49.96                           & 19.82                             & 17.52                            & 34.18                            \\
                   &                                                                 & Descend        & 55.85                           & 23.04                           & 78.90                             & 23.49                           & 4.65                              & 31.74                           & 20.60                             & 65.02                             & 30.14                             & 49.64                           & 19.49                             & 17.37                            & 34.99                            \\
                   &                                                                 & Uniform        & 57.91                           & 24.40                           & 80.40                             & 24.63                           & 7.14                              & 31.77                           & 20.00                             & 65.77                             & 29.95                             & 52.48                           & 19.74                             & 17.47                            & \textbf{35.97}                   \\
                   & WSD                                                             & Ascend         & 52.44                           & 23.72                           & 75.90                             & 23.07                           & 7.00                              & 32.43                           & 20.20                             & 66.10                             & 29.38                             & 51.78                           & 19.57                             & 17.94                            & 34.96                            \\
                   &                                                                 & Descend        & 57.83                           & 24.83                           & 80.80                             & 23.66                           & 5.69                              & 31.78                           & 21.80                             & 64.91                             & 29.67                             & 49.57                           & 19.57                             & 17.42                            & 35.63                            \\
                   &                                                                 & Uniform        & 57.83                           & 24.91                           & 80.40                             & 25.94                           & 6.80                              & 32.45                           & 21.80                             & 66.49                             & 30.14                             & 49.72                           & 20.31                             & 16.95                            & \textbf{36.15}                   \\
                   & Cosine                                                          & Ascend         & 52.40                           & 19.97                           & 75.70                             & 23.11                           & 8.30                              & 31.03                           & 20.00                             & 65.89                             & 30.33                             & 50.43                           & 19.82                             & 17.50                            & 34.54                            \\
                   &                                                                 & Descend        & 56.82                           & 22.78                           & 78.60                             & 24.21                           & 6.62                              & 30.55                           & 21.20                             & 63.98                             & 28.90                             & 51.46                           & 19.41                             & 17.26                            & 35.15                            \\
                   &                                                                 & Uniform        & 55.98                           & 23.98                           & 79.30                             & 23.74                           & 7.82                              & 31.31                           & 19.60                             & 64.31                             & 30.91                             & 50.20                           & 19.00                             & 17.45                            & \textbf{35.30}                   \\ \bottomrule
\end{tabular}      
\label{tab:qwen_curriculum}
\end{table}

\paragraph{Results.}

Full results are provided in Tabs.~\ref{tab:llama_curriculum} and ~\ref{tab:qwen_curriculum}.

\subsection{Model Geometry}
\label{app:model_geometry}


\begin{figure}[t]
    \centering
    \includegraphics[width=1.0\linewidth]{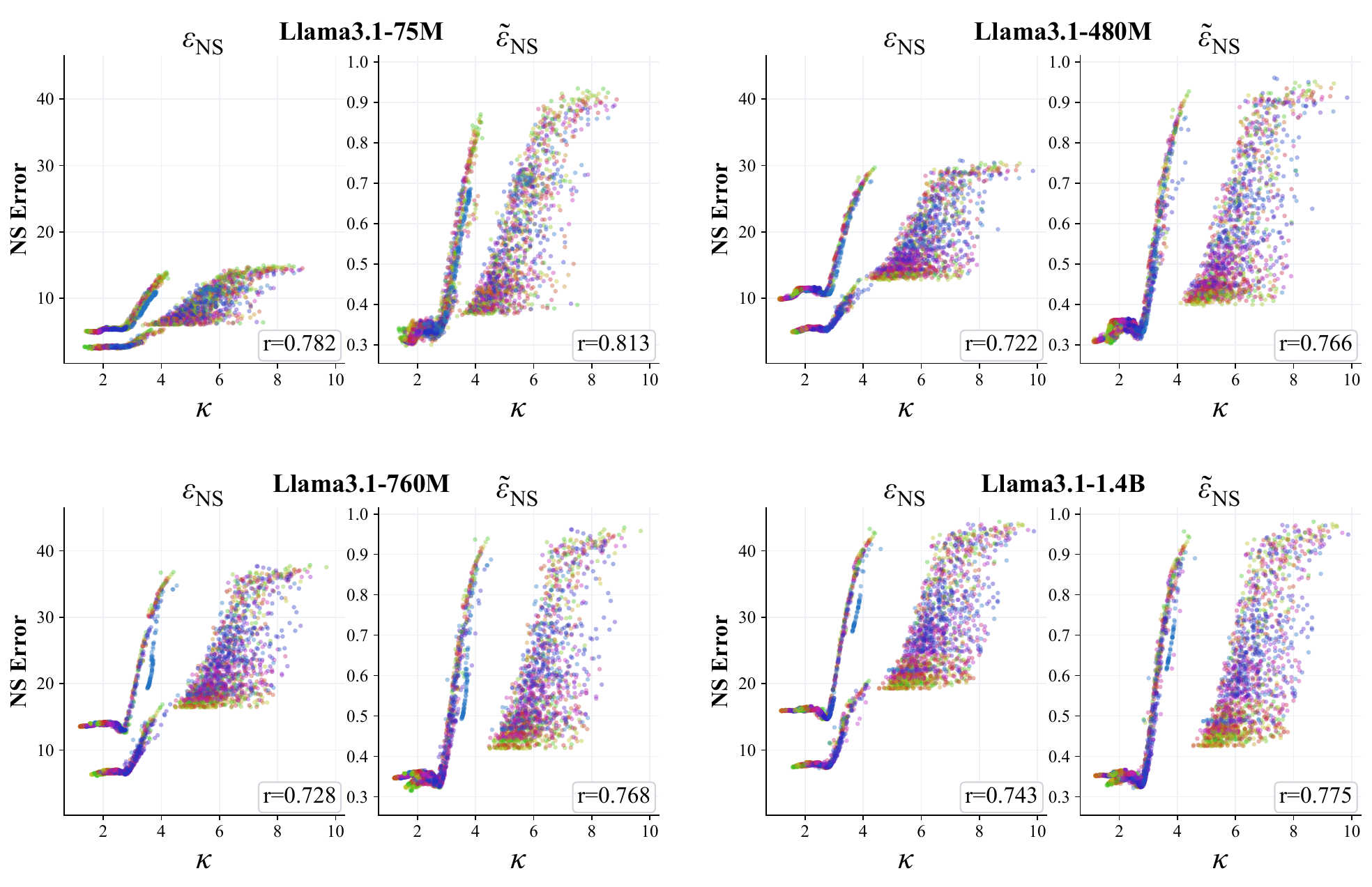}
    \caption{Condition number vs NS residual results for Llama3.1-family models. Colors indicate layer depth, so each point reflects the geometry of one parameter matrix at one training step, grouped visually by the layer it belongs to. $\kappa$ is displayed in $\log_{10}$ scale.
    Across all scales, as indicated from Pearson Correlation coefficients (from 0.722 to 0.813), worse conditioning of the Muon update source is consistently associated with larger orthogonalization error, indicating a stable coupling between matrix geometry and NS convergence quality.}
    \label{fig:app-llama-corr}
\end{figure}

\begin{figure}
    \centering
    \includegraphics[width=1.0\linewidth]{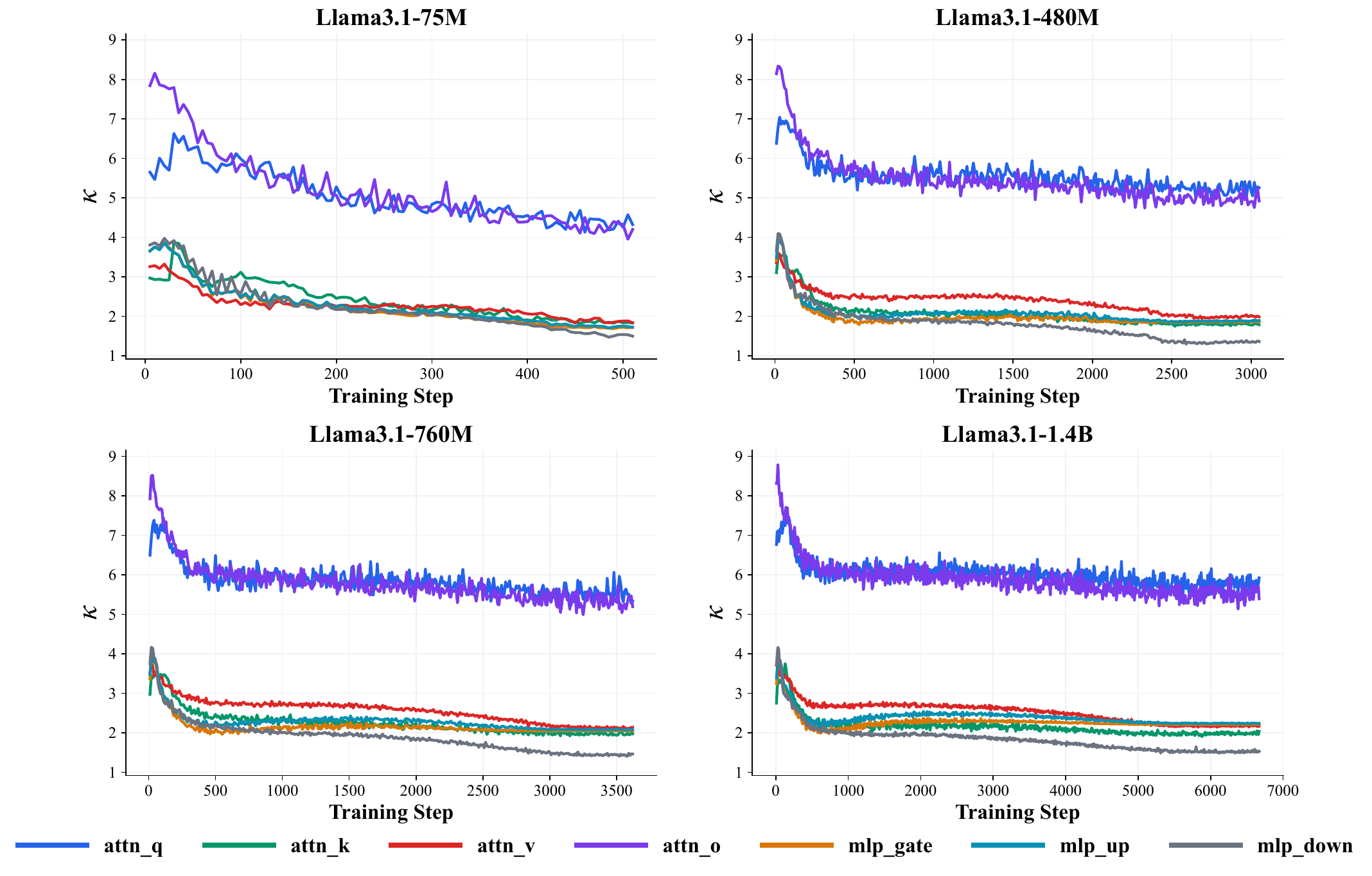}
    \caption{Evolution of median condition number by operator type across training for Llama3.1-family models. $\kappa$ is displayed in $\log_{10}$ scale. The trajectories reveal strong operator-level heterogeneity: geometry difficulty is not uniformly distributed across matrix types, a small subset of operators often dominates the hardest regimes while \texttt{mlp\_down} often shows easiest regimes.}
    \label{fig:app-llama-type_kappa_evolution}
\end{figure}

\begin{figure}
    \centering
    \includegraphics[width=1.0\linewidth]{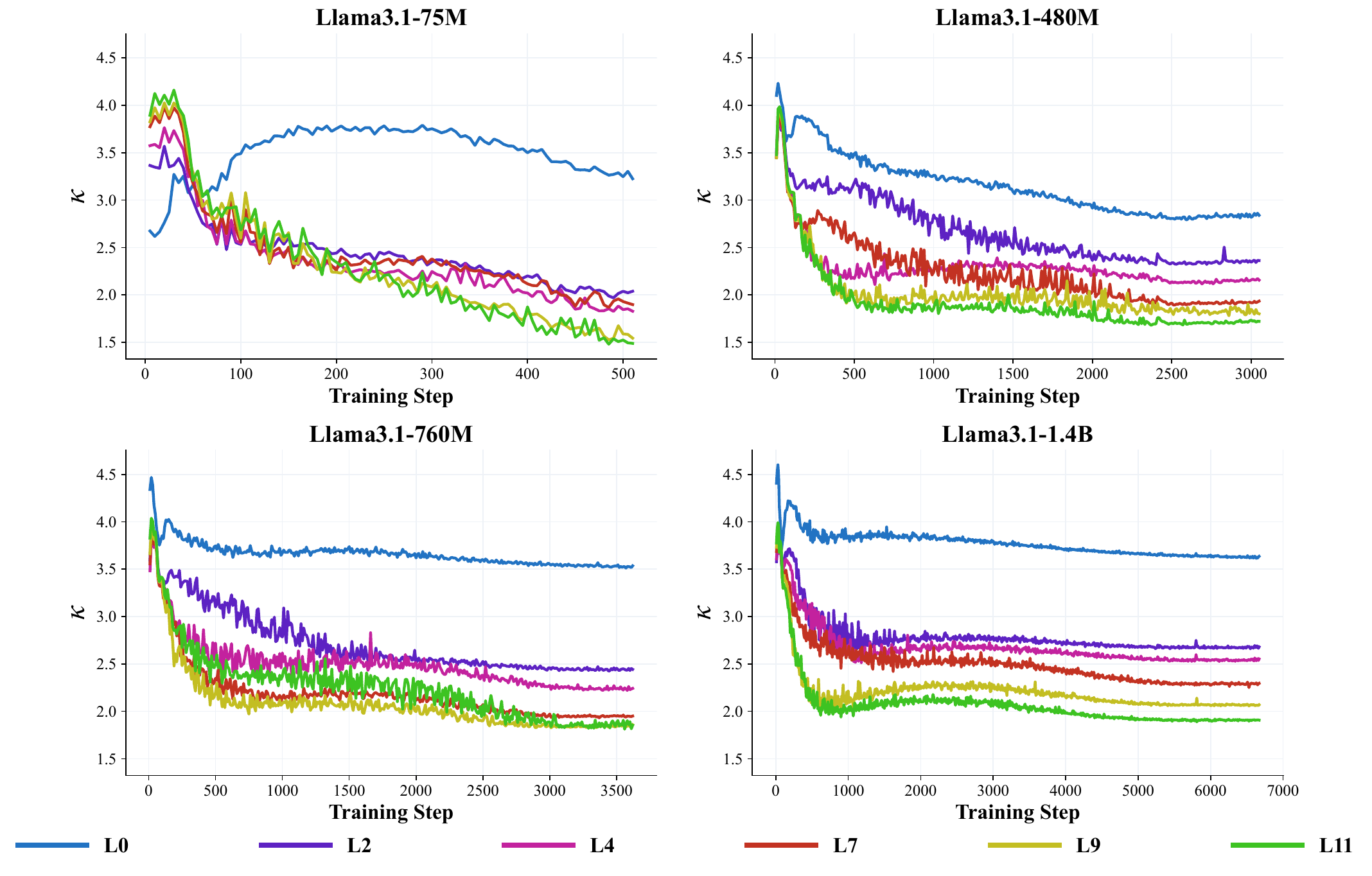}
    \caption{Layer-wise profiles of median condition number at representative training snapshots for Llama3.1-family models. $\kappa$ is displayed in $\log_{10}$ scale. The depth dependence of conditioning changes substantially over training, showing that geometry hotspots are dynamically reorganized rather than fixed to a static set of layers.}
    \label{fig:app-llama-layer_evolution}
\end{figure}

\begin{figure}
    \centering
    \includegraphics[width=1.0\linewidth]{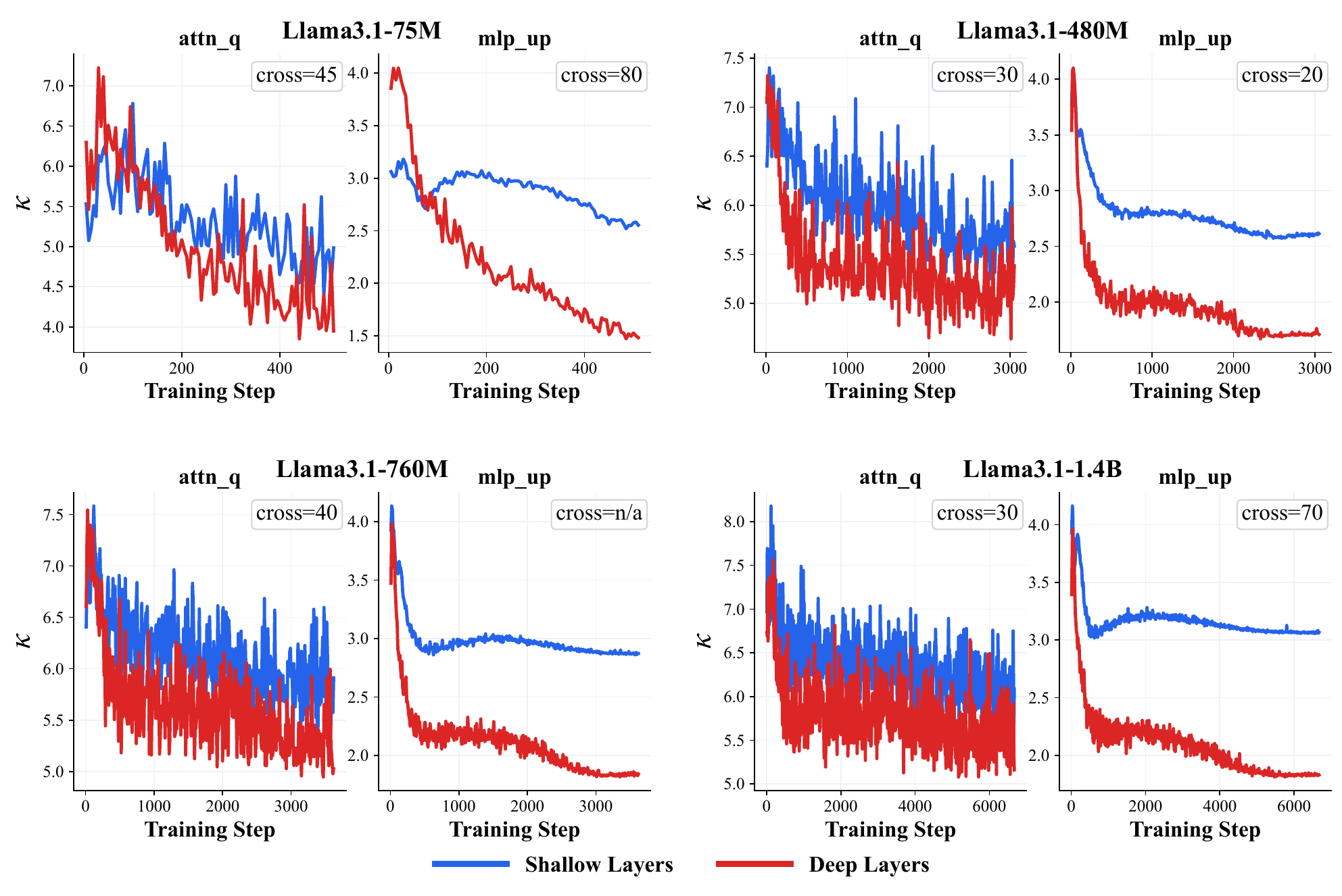}
    \caption{Shallow vs deep layer training trajectories of median condition number for two representative operator types, \texttt{attn\_q} and \texttt{mlp\_up}, across Llama3.1-family models. 
    For each model, the shallow trajectory is computed from the \textbf{first three} layers, while the deep trajectory is computed from the \textbf{last three} layers. 
    \texttt{cross} marks the first logged training step at which the shallow and deep trajectories intersect or reverse their relative ordering and \texttt{cross=n/a} indicates that no such crossing is observed over the logged training window. 
    Across model scales, shallow and deep layers can exchange their relative ordering during training, indicating that the geometry hierarchy over depth is dynamic rather than fixed.}
    \label{fig:app-llama-cross}
\end{figure}


\begin{figure}
    \centering
    \includegraphics[width=1.0\linewidth]{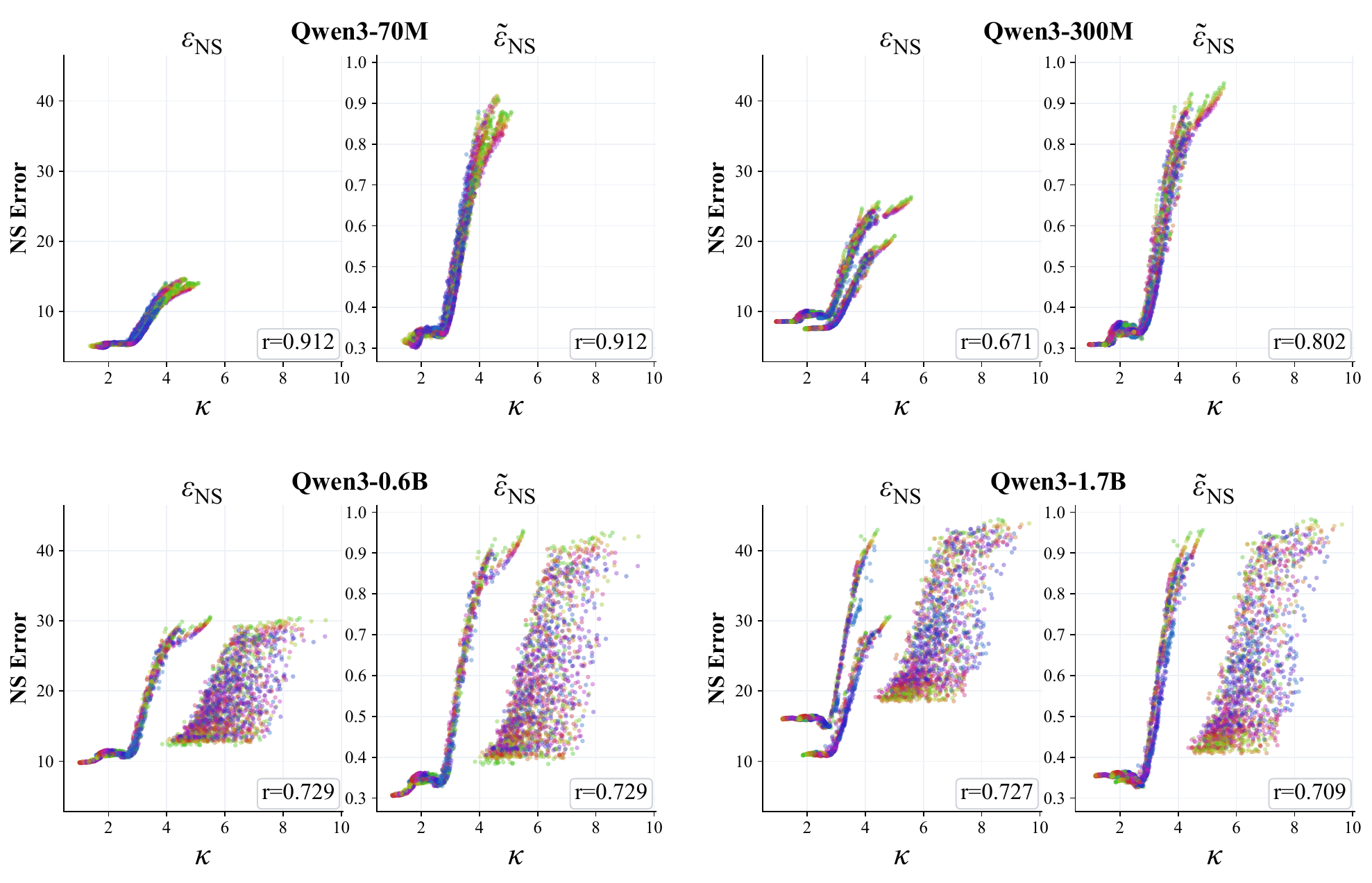}
    \caption{Condition number vs NS residual results for Qwen3-family models. Colors indicate layer depth, so each point reflects the geometry of one parameter matrix at one training step, grouped visually by the layer it belongs to. $\kappa$ is displayed in $\log_{10}$ scale.
    Across all scales, as indicated from Pearson Correlation coefficients (from 0.671 to 0.912), worse conditioning of the Muon update source is consistently associated with larger orthogonalization error, indicating a stable coupling between matrix geometry and NS convergence quality.}
    \label{fig:app-qwen-corr}
\end{figure}

\begin{figure}
    \centering
    \includegraphics[width=1.0\linewidth]{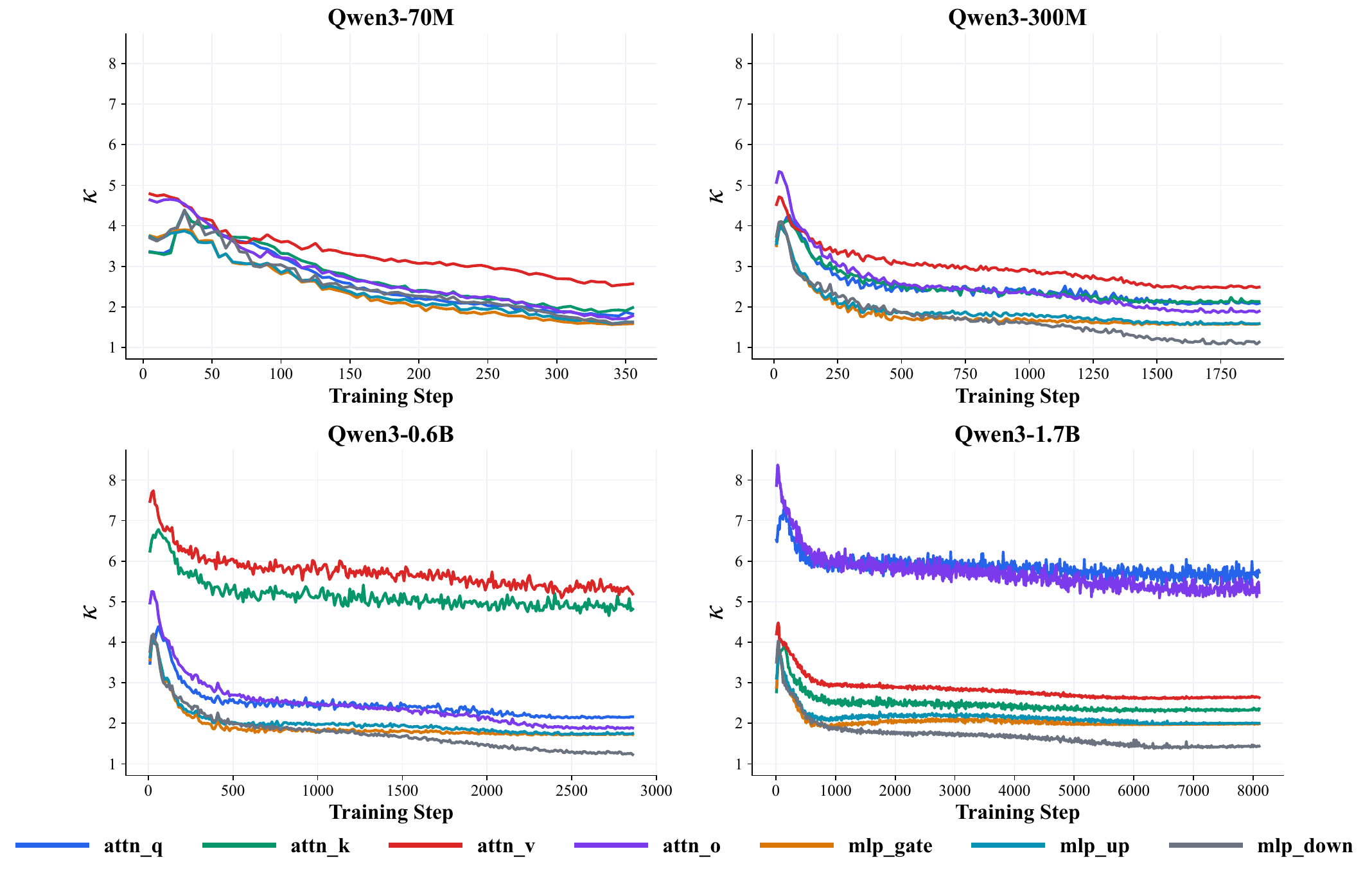}
    \caption{Evolution of median condition number by operator type across training for Qwen3-family models. $\kappa$ is displayed in $\log_{10}$ scale. The trajectories reveal strong operator-level heterogeneity: geometry difficulty is not uniformly distributed across matrix types, a small subset of operators often dominates the hardest regimes while \texttt{mlp\_down} often shows easiest regimes.}
    \label{fig:app-qwen-type_kappa_evolution}
\end{figure}

\begin{figure}
    \centering
    \includegraphics[width=1.0\linewidth]{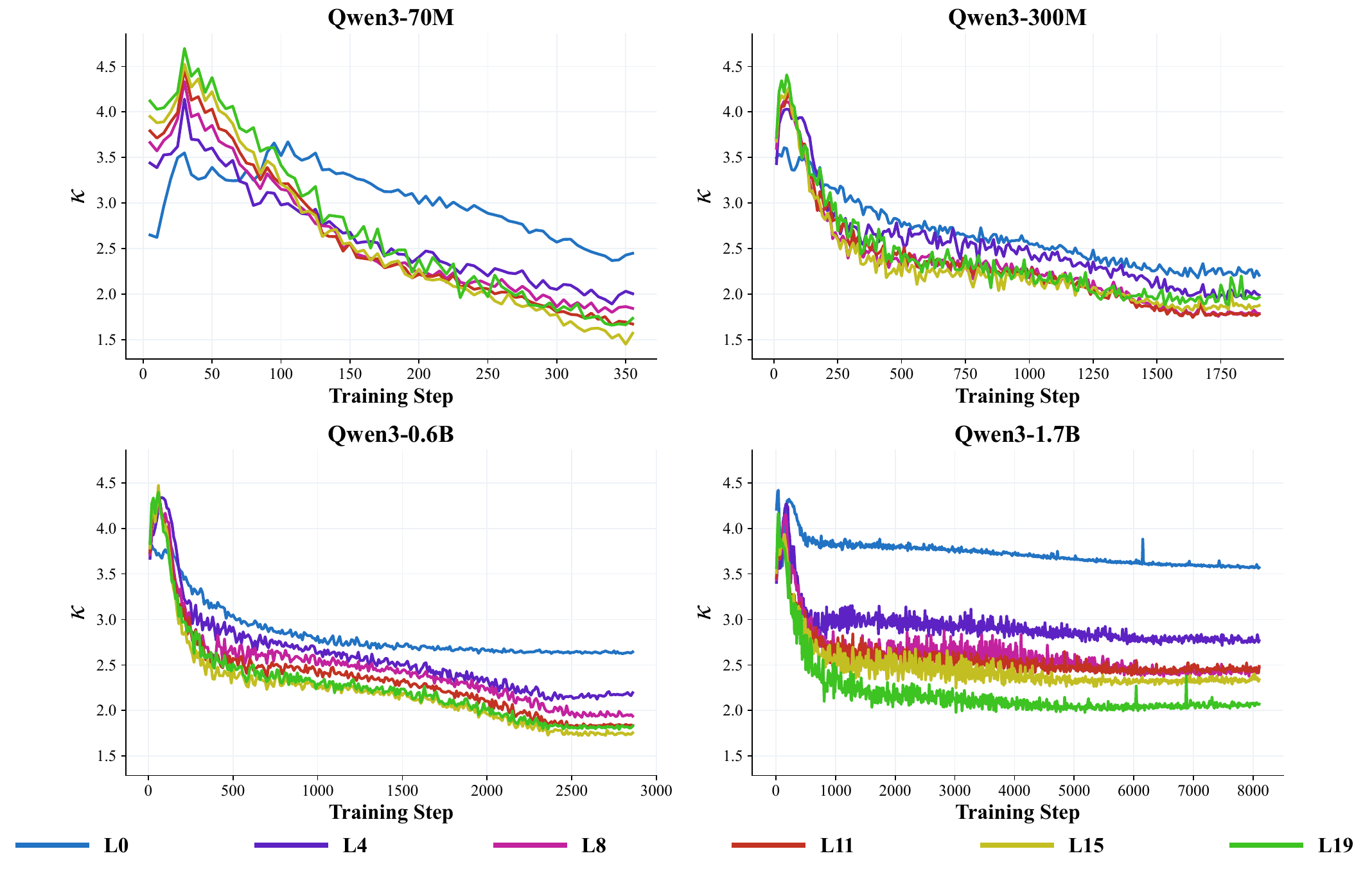}
    \caption{Layer-wise profiles of median condition number at representative training snapshots for Qwen3-family models. $\kappa$ is displayed in $\log_{10}$ scale. The depth dependence of conditioning changes substantially over training, showing that geometry hotspots are dynamically reorganized rather than fixed to a static set of layers.}
    \label{fig:app-qwen-layer_evolution}
\end{figure}

\begin{figure}
    \centering
    \includegraphics[width=1.0\linewidth]{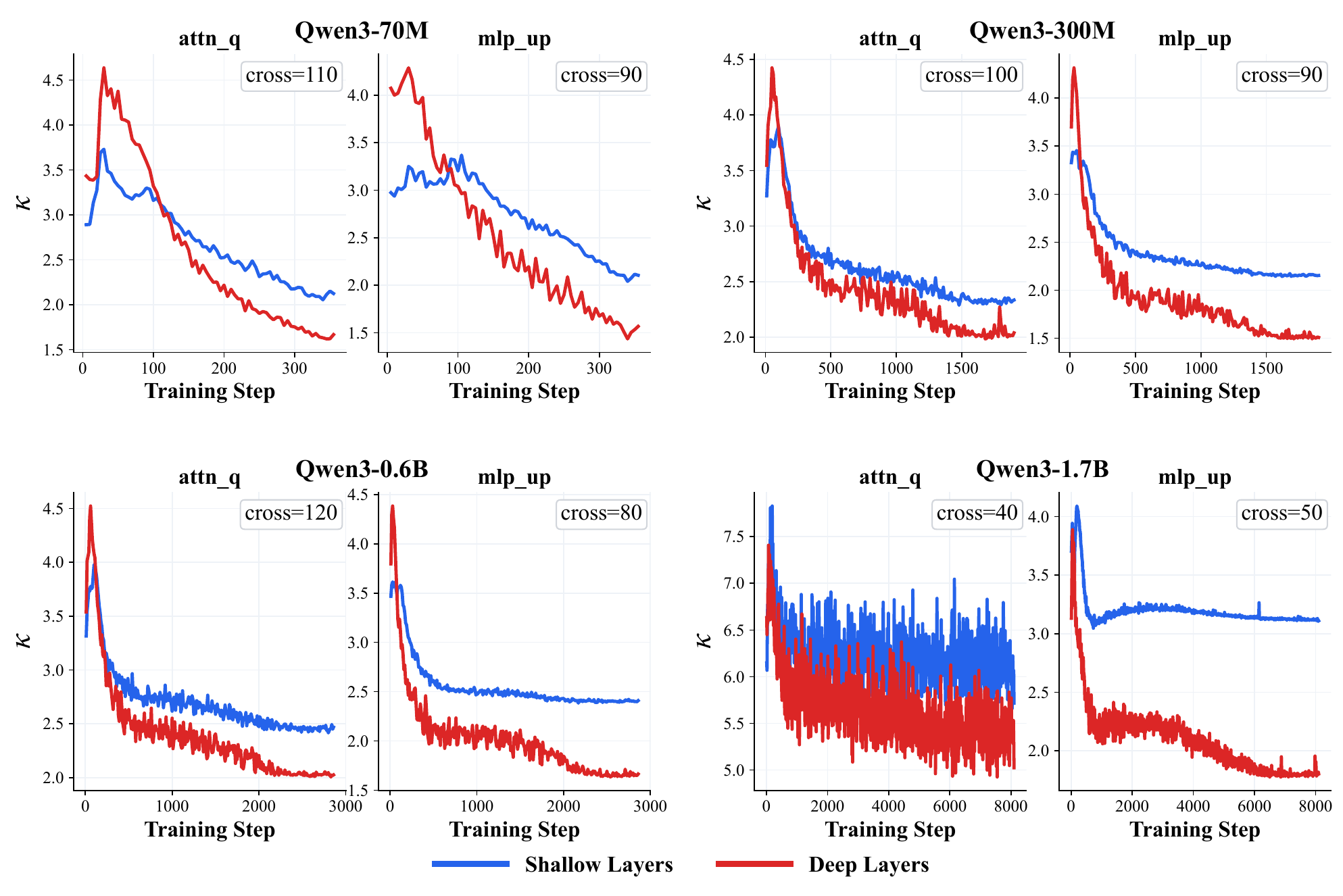}
    \caption{Shallow vs deep layer training trajectories of median condition number for two representative operator types, \texttt{attn\_q} and \texttt{mlp\_up}, across Qwen3-family models. 
    For each model, the shallow trajectory is computed from the \textbf{first three} layers, while the deep trajectory is computed from the \textbf{last three} layers. 
    \texttt{cross} marks the first logged training step at which the shallow and deep trajectories intersect or reverse their relative ordering and \texttt{cross=n/a} indicates that no such crossing is observed over the logged training window. 
    Across model scales, shallow and deep layers can exchange their relative ordering during training, indicating that the geometry hierarchy over depth is dynamic rather than fixed.}
    \label{fig:app-qwen-cross}
\end{figure}

\paragraph{Setup.}

We track training trajectories of all models with geometry observation on all Muon-updated 2D matrices in each transformer block, namely seven types of matrices: \texttt{attn\_q}, \texttt{attn\_k}, \texttt{attn\_v}, \texttt{attn\_o}, \texttt{mlp\_gate}, \texttt{mlp\_up}, and \texttt{mlp\_down}. 

For a given parameter matrix, let $G_t$ denote the stochastic gradient at step $t$, and let $B_t$ be the momentum buffer updated as
\[
B_t = \mu B_{t-1} + G_t,
\]
where $\mu$ is the momentum coefficient. 
Muon applies NS orthogonalization not directly to $G_t$, but to the update-source matrix
\[
M_t =
\begin{cases}
G_t + \mu B_t, & \text{with Nesterov momentum},\\
B_t, & \text{otherwise}.
\end{cases}
\]
We record both the pre-orthogonalization matrix $M_t$ and its post-NS counterpart $M_t'$.
For both $M_t$ and $M_t'$, we compute singular-value-based geometry statistics. 
In particular, let
\[
\sigma_1(M) \ge \sigma_2(M) \ge \cdots
\]
denote the singular values of $M$. 
We define $\sigma_{\max}(M)=\sigma_1(M)$, and we define the smallest nontrivial singular value as
\[
\sigma_{\min}^{(\tau)}(M)
=
\min \{\sigma_i(M) : \sigma_i(M) > \tau\},
\]
where $\tau$ is a very small numerical threshold used to discard singular values that are effectively zero up to numerical precision. 
In other words, $\sigma_{\min}^{(\tau)}$ is the smallest singular value that is still treated as numerically meaningful, rather than a near-zero mode caused by rank deficiency or floating-point noise. 
We then define the condition number as
\[
\kappa(M)=\frac{\sigma_{\max}(M)}{\sigma_{\min}^{(\tau)}(M)}.
\]
In addition, we record the spectral entropy and the effective rank, where the latter is defined as the minimum number of singular directions needed to explain $99\%$ of the total spectral energy.

To measure how well NS orthogonalizes the update, we additionally compute the NS error of $M_t'$ relative to the identity on its smaller dimension:
\[
\epsilon_{\mathrm{NS}} =
\begin{cases}
\lVert M_t' {M_t'}^\top - I \rVert_F, & m \le n,\\
\lVert {M_t'}^\top M_t' - I \rVert_F, & m > n,
\end{cases}
\]
where $M_t' \in \mathbb{R}^{m \times n}$. 
Then we also report the shape-normalized residual
\[
\tilde{\epsilon}_{\mathrm{NS}} = \epsilon_{\mathrm{NS}} / \sqrt{\min(m,n)},
\]
to make residual magnitudes comparable across matrices of different sizes.

Throughout this analysis, we focus on $\kappa(M_t)$ and $\tilde{\epsilon}_{\mathrm{NS}}$, since these two quantities most directly reflect the conditioning of the Muon input and the realized quality of NS orthogonalization. All geometry runs use exact SVD metrics, effective-rank energy $0.99$, and singular-value threshold $\tau=10^{-10}$. 
Geometry tracking is logged every 10 training steps for most models, and every 5 steps for the two smallest models (Llama3.1-75M and Qwen3-70M).

\paragraph{Llama3.1 Results.}

Figs.~\ref{fig:app-llama-corr}, \ref{fig:app-llama-type_kappa_evolution}, \ref{fig:app-llama-layer_evolution}
and \ref{fig:app-llama-cross} summarize the geometry observations for the four Llama3.1 scales. 
As shown in Fig.~\ref{fig:app-llama-corr}, $\kappa(M_t)$ is positively correlated with the NS residual across all scales, indicating that poorly conditioned Muon update sources are \textit{consistently harder} to orthogonalize.
Fig.~\ref{fig:app-llama-type_kappa_evolution} further shows persistent operator-level heterogeneity: \texttt{mlp\_up}, \texttt{mlp\_gate}, and \texttt{attn\_o} are frequently among the most geometry-sensitive matrix types, although no single operator dominates uniformly at every scale.
From the layerwise snapshot profiles in Fig.~\ref{fig:app-llama-layer_evolution}, we observe that the depth profile of conditioning changes substantially over training, with geometry hotspots \textit{shifting} across layers rather than remaining fixed.
Finally, Fig.~\ref{fig:app-llama-cross} shows that shallow and deep layers can \textit{exchange} their relative ordering early in training, indicating that the geometry hierarchy over depth is itself dynamic.
Taken together, these results suggest that a fixed, depth-independent NS budget is unlikely to be uniformly optimal throughout optimization.

\paragraph{Qwen3 Results.} 

Figs.~\ref{fig:app-qwen-corr}, \ref{fig:app-qwen-type_kappa_evolution}, \ref{fig:app-qwen-layer_evolution}, and \ref{fig:app-qwen-cross} show the corresponding geometry observations for the four Qwen3 scales.
As in Llama3.1, Fig.~\ref{fig:app-qwen-corr} shows a clear \textit{positive correlation} between $\kappa(M_t)$ and the NS residual across all scales, confirming that worse conditioning of the Muon update source is consistently associated with poorer orthogonalization quality.
Compared with Llama3.1, however, Qwen3 exhibits stronger scale-to-scale heterogeneity.
As shown in Fig.~\ref{fig:app-qwen-type_kappa_evolution}, the operator-type trajectories are often more concentrated, with \texttt{attn\_v} more frequently emerging as a pronounced geometry-sensitive operator, especially through sharp early-training peaks.
The layerwise snapshot profiles in Fig.~\ref{fig:app-qwen-layer_evolution} further show that geometry hotspots are \textit{more localized} and can evolve more sharply over depth and training stage.
Finally, Fig.~\ref{fig:app-qwen-cross} shows that shallow and deep layers again \textit{exchange} their relative ordering, but in several Qwen3 models these crossings occur later or more abruptly than in Llama3.1.
Taken together, these results reinforce the same overall picture as in Llama3.1 while also suggesting that geometry dynamics remain family-dependent, with Qwen3 showing stronger operator- and scale-specific concentration.

\section{Cost Analysis}
\label{app:cost}

We provide a comprehensive cost analysis from both theoretical and practical perspectives.

\subsection{Theoretical Complexity}

\begin{table}[t]
\centering
\small
\caption{Comparison of persistent optimizer-state memory and per-step update time complexity across optimizers. $P=P_{\mu}+P_{\text{adam}}$ denotes total trainable parameters, where $P_{\mu}$ are 2D matrices updated by the Muon-family rule and $P_{\text{adam}}$ are the rest handled by AdamW. $C_{t,i}=\Theta\bigl(\min(m_{t,i},n_{t,i})^{2}\max(m_{t,i},n_{t,i})\bigr)$ is the cost of one NS iteration on the $i$-th matrix of layer type $t$. $k_t$ is the NS step count assigned to type $t$, and $B=\sum_{t}k_t^{\star}$ is AMO's global step budget.}
\label{tab:theoretical-complexity}
\setlength{\tabcolsep}{6pt}
\resizebox{\linewidth}{!}{
\begin{tabular}{lccc}
\toprule
\textbf{Optimizer} & \textbf{Memory Cost} & \textbf{Per-step Time Cost} & \textbf{NS Budget} \\
\midrule
AdamW       & $2P$ & $\Theta(P)$ & -- \\
AdaMuon     & $2P$ & $\Theta\!\left(P_{\text{adam}}+P_{\mu}+\sum_{t}k_{t}\sum_{i=1}^{N_t}C_{t,i}\right)$ & Uniform, $k_t\!\equiv\!5$ \\
\midrule
Muon/Muon-PE & $P_{\mu}+2P_{\text{adam}}$ & $\Theta\!\left(P_{\text{adam}}+\sum_{t}k_{t}\sum_{i=1}^{N_t}C_{t,i}\right)$ & Uniform, $k_t\!\equiv\!5$ \\
\midrule
\textbf{AMO (ours)} & $P_{\mu}+2P_{\text{adam}}+\mathcal{O}(|\mathcal{T}|)$ & $\Theta\!\left(P_{\text{adam}}+\sum_{t}k_{t}^{\star}\sum_{i=1}^{N_t}C_{t,i}\right)$ & Adaptive, $\sum_{t}k_t^{\star}\!=\!B$ \\
\bottomrule
\end{tabular}
}
\end{table}

We analyze algorithm-level cost in terms of \textbf{persistent optimizer-state memory complexity} and \textbf{per-step update time complexity}, intentionally separating these quantities from backend-dependent effects such as fused kernels or implementation-specific temporary allocations. 
A summarization of theoretical cost is provided in Tab.~\ref{tab:theoretical-complexity}.

Let the total number of trainable parameters be
\[
P = P_{\mu} + P_{\mathrm{adam}},
\]
where $P_{\mu}$ denotes the parameters updated by the Muon-family rule (i.e., the 2D weight matrices selected for orthogonalized updates), and $P_{\mathrm{adam}}$ denotes the remaining parameters handled by AdamW (e.g., embeddings, output heads, and normalization parameters). 
We further partition $P_\mu$ into layer types $t \in \mathcal{T}$. 
For a given layer type $t$, let there be $N_t$ matrices, and let the $i$-th matrix in this type have shape $m_{t,i} \times n_{t,i}$. 
One NS iteration on this matrix has dominant cost
\[
C_{t,i}
=
\Theta\!\bigl(\min(m_{t,i},n_{t,i})^2 \max(m_{t,i},n_{t,i})\bigr).
\]
Therefore, the true cost of a Muon-family update is a weighted sum over all matrices,
\[
\sum_{t \in \mathcal{T}} k_t \sum_{i=1}^{N_t} C_{t,i},
\]
where $k_t$ is the NS step count assigned to layer type $t$. 
This notation is important because, although the optimizer shares a single step count within each layer type, matrices inside the same type need not have identical shapes in general.

From the \textbf{memory} perspective, standard AdamW maintains two momentum states for all parameters, giving
\[
M_{\mathrm{AdamW}} = 2P.
\]
By contrast, Muon, Muon-PE, and our AMO all maintain only a single full-size momentum state on $P_\mu$, while retaining two AdamW momentums on $P_{\mathrm{adam}}$, hence
\[
M_{\mathrm{Muon/Muon\mbox{-}PE/AMO}} = P_{\mu} + 2P_{\mathrm{adam}}.
\]
Here, Muon-PE only replaces the NS coefficients, and therefore introduces no additional optimizer state. 
Likewise, AMO only stores a small amount of control metadata (e.g., observation history, target steps, and per-type override coefficients), whose size is $O(|\mathcal{T}|)$ and is negligible relative to parameter-sized states. 
AdaMuon additionally maintains a full-size second-momentum buffer, so
\[
M_{\mathrm{AdaMuon}} = 2P_{\mu} + 2P_{\mathrm{adam}} = 2P,
\]
which places it in the same memory class as AdamW.

Thus, at the level of persistent algorithmic memory, the methods fall into three groups: AdamW and AdaMuon at $2P$ and Muon/Muon-PE/AMO at $P_{\mu}+2P_{\mathrm{adam}}$.

From the \textbf{time} perspective, AdamW has linear per-step update cost,
\[
T_{\mathrm{AdamW}} = \Theta(P).
\]
For Muon and Muon-PE, the dominant cost comes from applying $k_t$ NS iterations to every 2D matrices in $P_\mu$, giving
\[
T_{\mathrm{Muon/Muon\mbox{-}PE}}
=
\Theta\!\left(
P_{\mathrm{adam}}
+
\sum_{t \in \mathcal{T}} k_t \sum_{i=1}^{N_t} C_{t,i}
\right).
\]
For Muon or Muon-PE, $k_t \equiv 5$ for all $t$.
Muon-PE stays in the same asymptotic class as Muon because its benefit comes from improved fixed coefficients, rather than a change in complexity order. 
AdaMuon adds extra amplitude normalization and second-moment updates on top of Muon, contributing only a linear term:
\[
T_{\mathrm{AdaMuon}}
=
\Theta\!\left(
P_{\mathrm{adam}}
+
\sum_{t \in \mathcal{T}} k_t \sum_{i=1}^{N_t} C_{t,i}
+
P_{\mu}
\right).
\]
Because the additional $P_{\mu}$ term is linear, the NS term remains dominant for large models and nontrivial step counts.

The key distinction of our AMO is that it does not use a uniform or purely static $k_t$. Instead, it allocates per-type step counts under a global budget constraint,
\[
\sum_{t \in \mathcal{T}} k_t^\star = B,
\]
and its locked-phase steady-state complexity becomes
\[
T_{\mathrm{AMO,\,lock}}
=
\Theta\!\left(
P_{\mathrm{adam}}
+
\sum_{t \in \mathcal{T}} k_t^\star \sum_{i=1}^{N_t} C_{t,i}
\right).
\]
This expression makes the core advantage of AMO explicit: fixing the total step budget $B$ does not fix the true computation, because the aggregate weights $\sum_{i=1}^{N_t} C_{t,i}$ can differ substantially across layer types. 
Consequently, AMO can reduce the real update cost without reducing the nominal budget by transferring steps from expensive layer types to cheaper ones. 
In other words, AMO optimizes the weighted objective
\[
\sum_{t \in \mathcal{T}} k_t \sum_{i=1}^{N_t} C_{t,i}
\]
rather than merely the average NS depth. 
This is precisely why AMO should be viewed as a distinct complexity class relative to fixed-budget Muon variants: its gain comes from \textit{budget-aware cross-type reallocation}.

\subsection{Practical Cost}

\begin{table}[t]
\centering
\small
\caption{Practical time and memory cost on pre-training \textbf{Qwen3-1.7B}, measured over 6912 training steps under an identical batch configuration. \textit{Step Time} is end-to-end wall-clock per step; \textit{Optimizer Update Time} is the time spent in the optimizer step; \textit{Throughput} is tokens processed per second; \textit{Peak memory} is the maximum allocated GPU memory during training. Arrows ($\downarrow$/$\uparrow$) indicate whether lower or higher is better. \textbf{Best} and \underline{second-best} values are highlighted.}
\label{tab:practical-cost-detail}
\setlength{\tabcolsep}{8pt}
\begin{tabular}{lcccc}
\toprule
\textbf{Optimizer} & \textbf{Step Time} $\downarrow$ & \textbf{Optimizer Update Time} $\downarrow$ & \textbf{Throughput} $\uparrow$ & \textbf{Peak Memory} $\downarrow$ \\
 & (s) & (ms) & (M token/s) & (GB) \\
\midrule
AdamW               & \textbf{5.96}    & \textbf{4.48}      & \textbf{0.82}    & 63.75 \\
Muon                & \underline{6.14} & 195.81             & \underline{0.80} & \textbf{61.12} \\
AdaMuon             & 6.19             & 241.65             & 0.79             & 69.00 \\
\textbf{AMO (ours)} & 6.17             & \underline{191.47} & \underline{0.80} & \textbf{61.12} \\
\bottomrule
\end{tabular}
\end{table}

We next report implementation-level measurements of time and memory cost.
For an apples-to-apples comparison, we train AdamW, Muon, AdaMuon,
and AMO on \textbf{Qwen3-1.7B} for the same 6912 steps under an identical
batch configuration.
Full results are summarized in Tab.~\ref{tab:practical-cost-detail}.

As shown in Tab.~\ref{tab:practical-cost-detail}, AdamW attains the best
raw step time, optimizer update cost, and throughput, which is expected
given that its update rule is a pure element-wise operation.
The Muon-family methods are roughly $43$--$54\times$ more expensive in
the optimizer update itself (from $\sim\!191$ to $\sim\!242$ ms versus
AdamW's $4.48$ ms), yet the end-to-end step time grows by only
$\approx 3.0\%$--$3.8\%$.
This gap is small because the forward/backward pass continues to dominate
the training loop, consistently accounting for roughly $5.82$ s per step,
while the Muon-family optimizer update stays within $0.19$--$0.24$ s.
In other words, the optimizer is measurably more expensive, but its cost
is largely amortized by the much heavier non-optimizer portion of each
step, making the overhead negligible in practice.

Within the Muon family, the ordering is also informative.
AMO has the lowest optimizer-update cost among the Muon-family methods
($191.47$ ms), slightly below Muon ($195.81$ ms) and substantially below
AdaMuon ($241.65$ ms), while avoiding Muon's uniform $k_t \equiv 5$ rule
and instead using a budget-aware allocation.
In terms of peak memory, Muon and AMO tie at the lowest value
($61.12$ GB), consistent with their shared persistent-state class
$P_\mu + 2P_{\text{adam}}$; AdaMuon peaks higher at $69.00$ GB, even above
AdamW's $63.75$ GB.
Since both AdaMuon and AdamW lie in the same $2P$ persistent-state class
theoretically, this reversal is best explained by transient allocations
along the Muon-style update path rather than by any contradiction with the
state-complexity analysis: the peak allocation reported in
Tab.~\ref{tab:practical-cost} is an implementation-level system quantity
that includes temporary tensors and backend effects such as fused kernels,
not a pure algorithmic-state metric.

In this run the budget ratio is set to $1.0$, so AMO does not reduce the
nominal total NS step budget; it instead redistributes the budget across
layer types and locks to a schedule whose mean remains $5.0$ NS steps
across the seven layer types.
The dominant runtime penalty of AMO does not arise from its locked-phase
update rule, but from its low-frequency observation steps: the twenty
observation steps average $15.63$ s/step, of which $9.47$ s is spent in
the post-step exact-SVD observation probe, whereas non-observation steps
average only about $6.14$ s/step.
Once the schedule is locked, AMO settles at $6.13$ s/step and
$0.80$M tokens/s, essentially matching Muon's tail-region average of
$6.14$ s/step and $0.80$M tokens/s.
Consequently, in the present implementation the theoretical savings from AMO's lower locked-phase weighted orthogonalization cost are largely offset, at the wall-clock level of the full run, by the exact-SVD observation overhead and by the fact that the optimizer update accounts for only a small fraction of total step time.

\section{Experiments}

\subsection{Experimental Settings}

\label{app:subsec_exp_setting}

\paragraph{Models.}

We consider two families of dense decoder-only Transformer language models: Qwen3 and Llama-3.1 \citep{qwen3, llama3}. 
Specifically, we use four Qwen3 variants (\textbf{70M}, \textbf{300M}, \textbf{0.6B}, and \textbf{1.7B}) and four Llama-3.1 variants (\textbf{75M}, \textbf{480M}, \textbf{760M}, and \textbf{1.4B}). 
Following official recommendations for model initialization, after instantiation from configuration, all linear and embedding weights are initialized from $\mathcal{N}(0, 0.02^2)$, linear biases are set to zero, and normalization parameters keep their default values.
Tab.~\ref{tab:model_specs} summarizes their main architectural hyperparameters. 

All models adopt grouped-query attention \citep{gqa}, RMSNorm \citep{rmsnorm}, and SwiGLU-style feed-forward blocks \citep{swiglu}, but the two families differ systematically in several aspects. 
Qwen3 models use a vocabulary size of 151,936, rotary position embeddings with $\theta = 10^6$, and tied input/output embeddings in all scales considered here. 
In contrast, Llama-3.1 models use a vocabulary size of 128,256, the Llama-3 rotary scaling scheme with $\theta = 5\times10^5$, and untied output embeddings. 
Across comparable scales, the Qwen3 variants are generally deeper, whereas the Llama-3.1 variants are typically wider. 
Moreover, the smaller Qwen3 models employ expanded query projections with $d_q > d$, which leads to a noticeably different compute profile even at similar parameter scales.

\begin{table*}[t]
\centering
\small
\caption{Architectural configurations of eight models used in our experiments. All models are dense decoder-only Transformers. $L$ denotes the number of Transformer layers, $d$ the hidden size, $d_{\mathrm{ff}}$ the intermediate size of the feed-forward block, and $d_q / d$ the ratio between the query projection output dimension and the hidden size. We use a training context length of 2048 tokens for all models.}
\setlength{\tabcolsep}{5pt}
\begin{tabular}{lrrrrrrr}
\toprule
Model & Total Params & $L$ & $d$ & $d_{\mathrm{ff}}$ & Heads / KV Heads & $d_q / d$ & Context Length \\
\midrule
Qwen3-70M  & 70,368,512   & 20 & 256  & 1024 & 8 / 4   & 4.0 & 2048 \\
Qwen3-300M & 321,195,776  & 20 & 768  & 3072 & 12 / 4  & 2.0 & 2048 \\
Qwen3-0.6B & 596,049,920  & 28 & 1024 & 3072 & 16 / 8  & 2.0 & 2048 \\
Qwen3-1.7B & 1,720,574,976 & 28 & 2048 & 6144 & 16 / 8  & 1.0 & 2048 \\
\midrule
Llama-3.1-75M  & 74,717,440    & 12 & 256  & 768  & 8 / 2   & 1.0 & 2048 \\
Llama-3.1-480M & 480,805,888   & 16 & 1024 & 3584 & 8 / 2   & 1.0 & 2048 \\
Llama-3.1-760M & 762,091,008   & 12 & 1536 & 5376 & 12 / 3  & 1.0 & 2048 \\
Llama-3.1-1.4B & 1,397,819,392 & 16 & 2048 & 7168 & 16 / 4  & 1.0 & 2048 \\
\bottomrule
\end{tabular}
\label{tab:model_specs}
\end{table*}

\paragraph{Training Datasets.}
All pre-training runs use web documents sampled from the training set of \texttt{Fineweb-edu} \citep{fineweb}. 
We do not use a separate validation split during pre-training. 
During the training process, training data are streamed from disk and shuffled with a buffer size of 10,000 and a random seed of 42.

Tokenization is model-family-specific: Qwen3 models use the Qwen tokenizer, while Llama-3.1 models use the Llama-3.1 tokenizer. 
In both cases, text is tokenized without prepended BOS tokens, terminated with EOS, and packed into fixed-length sequences of 2048 tokens. 
Residual fragments shorter than the target sequence length are discarded. Because the two model families use different tokenizers, token counts are tokenizer-dependent; accordingly, we distinguish between the raw corpus identifier and the effective number of training tokens consumed by each run.

\paragraph{Pre-training.}

\begin{table*}[t]
\centering
\small
\caption{Pre-training hyperparameters of all experiments. \textit{Global Batch} denotes the global batch size, while \textit{Micro Batch} denotes the per-device micro-batch size. \textit{Grad Accum} denotes the number of gradient accumulation steps. \textit{Token budget} follows 1x Chinchilla  optimal ratio of 20 \citep{scaling_law} and is reported in billions ($10^9$) of packed training tokens.}
\setlength{\tabcolsep}{5pt}
\begin{tabular}{lrrrrrr}
\toprule
Model & \#GPUs & Global Batch & Micro Batch & Grad Accum & Total Steps & Token Budget \\
\midrule
\multicolumn{7}{l}{\textbf{AdamW runs}} \\
Qwen3-70M      & 8  & 2048 & 8  & 32  & 335  & 1.41 \\
Qwen3-300M     & 16  & 2048 & 8  & 16  & 1430 & 5.98 \\
Qwen3-0.6B     & 16 & 2048 & 8  & 16  & 2860 & 12.00 \\
Qwen3-1.7B     & 32  & 2048 & 5  & 15 & 6912 & 33.97 \\
Llama-3.1-75M  & 8  & 2048 & 8  & 32  & 360  & 1.51 \\
Llama-3.1-480M & 16 & 1536 & 12 & 8   & 3051 & 9.60 \\
Llama-3.1-760M & 16 & 2048 & 4  & 32  & 3622 & 15.20 \\
Llama-3.1-1.4B & 32 & 2048 & 8  & 8   & 6670 & 27.98 \\
\midrule
\multicolumn{7}{l}{\textbf{Muon runs}} \\
Qwen3-70M      & 6  & 1728 & 12 & 24 & 298  & 1.41 \\
Qwen3-300M     & 16  & 2048 & 8  & 16 & 1430 & 5.98 \\
Qwen3-0.6B     & 16 & 2048 & 8  & 16 & 2860 & 12.00 \\
Qwen3-1.7B     & 32 & 2400 & 5  & 15 & 6912 & 33.97 \\
Llama-3.1-75M  & 6  & 1440 & 10 & 24 & 512  & 1.51 \\
Llama-3.1-480M & 16 & 1536 & 12 & 8  & 3051 & 9.60 \\
Llama-3.1-760M & 16 & 1920 & 12 & 10 & 3866 & 15.20 \\
Llama-3.1-1.4B & 32 & 2048 & 8  & 8  & 6670 & 27.98 \\
\bottomrule
\end{tabular}
\label{tab:pretrain_setup}
\end{table*}

\begin{table}[]
\centering
\small
\caption{Muon/AdamW parameter split configurations in Muon runs. \textit{Muon Ratio} is computed as the number of Muon-updated Parameters divided by total trainable parameters.}
\begin{tabular}{lccc}
\toprule
\textbf{Model} & \textbf{Muon Params (M)} & \textbf{AdamW Params (M)} & \textbf{Muon Ratio (\%)} \\ \midrule
Llama3.1-75M   & 9.04                     & 65.67                     & 12.10                     \\
Llama3.1-480M  & 218.10                   & 262.70                    & 45.36                     \\
Llama3.1-760M  & 368.05                   & 394.04                    & 48.29                     \\
Llama3.1-1.4B  & 872.42                   & 525.40                    & 62.41                     \\ \midrule
Qwen3-70M      & 31.46                    & 38.91                     & 44.70                     \\
Qwen3-300M     & 204.47                   & 116.72                    & 63.66                     \\
Qwen3-0.6B     & 440.40                   & 155.65                    & 73.89                     \\
Qwen3-1.7B     & 1409.29                  & 311.29                    & 81.91                     \\ \bottomrule
\end{tabular}
\label{tab:model_optimizer_config}
\end{table}

All models are trained from scratch using bfloat16 mixed precision and Flash-Attention-2 \citep{flashattention2} on 6x to 32x NVIDIA H100 (80GB) GPUs. 
We use a fixed training sequence length of 2048 tokens for all runs, and apply gradient clipping with a maximum norm of 1.0. 
Unless otherwise specified, all runs use random seed 42 and do not employ validation during pre-training. 
Tab.~\ref{tab:pretrain_setup} summarizes the pre-training hyperparameters for each run, including the effective global batch size, the per-device micro-batch size, the number of gradient accumulation steps, the total number of optimization steps, and the corresponding token budget.

Moreover, since Qwen3 and Llama-3.1 use different tokenizers, token budgets should be interpreted as tokenizer-specific token counts rather than a tokenizer-independent measure of raw corpus size. 
Across the runs considered here, the token budget ranges from roughly 1B to 34B tokens, increasing with model scale, following 1x Chinchilla optimal ratio \citep{scaling_law}.

\paragraph{Optimizers.}

We consider two kinds of optimizers: AdamW and Muon-style variants. 
For both optimizers, the base learning rate is set according to the empirical scaling rule from \citep{lr_law}:
\[
\eta_{\mathrm{stable}}(N, D)
= 10^{-4} \cdot 38.4588 \cdot N^{-0.2219} \cdot D^{-0.3509},
\]
where \(N\) and \(D\) denote the model size and training token budget in billions, respectively.

For AdamW, we use \(\beta_1=0.9\), \(\beta_2=0.95\), \(\epsilon=10^{-8}\), and weight decay \(0.1\). 
AdamW runs use a warmup-stable-decay (WSD) schedule: the learning rate is linearly warmed up to \(\eta_{\mathrm{stable}}\), kept constant during the stable phase, and then linearly decayed to \(0.1\,\eta_{\mathrm{stable}}\).

For all Muon-style variants, Muon is applied to 2D matrices in the Transformer blocks, while embedding layers, output heads (e.g., \texttt{lm\_head} or \texttt{output}), and all remaining parameters such as biases and normalization weights are optimized with AdamW.

Across all Muon runs, we use momentum coefficient of \(0.95\), Nesterov momentum, weight decay \(0.1\), and five-step NS iterations with default KJ coefficients \citep{muon}. 
Concretely, Muon first forms a momentum-smoothed update, orthogonalizes it with NS iteration, and then applies a shape-dependent rescaling step size proportional to \(0.2\sqrt{\max(m,n)}\) for an \(m\times n\) parameter matrix, following \cite{dmuon}. 
Muon runs use a cosine-decay schedule with linear warmup and a final learning-rate floor of \(0.1\,\eta_{\mathrm{stable}}\).
Detailed configurations of Muon runs are provided in Tab.~\ref{tab:model_optimizer_config}.

The details of other baseline optimizers are as follows: 

\begin{itemize}
    \item \textbf{Muon-You}\footnote{You's original implementation is provided at this page: \url{https://gist.github.com/YouJiacheng/393c90cbdc23b09d5688815ba382288b}}. This variant keeps the same parameter partition and outer hyperparameters as the default Muon setup, but replaces the NS coefficient sequence with the five-step \texttt{You} coefficients. 
    No additional input rescaling or preconditioning is applied before the NS iteration.

    \item \textbf{Muon-PE}~\citep{pe}. This variant uses the five-step Polar Express coefficient table. In the current configurations, we additionally apply a small normalization safety factor of 1.01 before the NS iteration. All other settings follow the default Muon setup above.

    \item \textbf{Muon-CANS}~\citep{cans}. This variant keeps the same Muon/AdamW parameter split and cosine-decay schedule, but replaces the NS coefficients with the CANS table. 
    As in the PE baseline, we use a mild pre-NS normalization scale of $1.01$ in the standard configurations.

    \item \textbf{Muon-Turbo}~\citep{turbo_muon}. This variant uses the Turbo coefficient table together with AOL-style preconditioning. The remaining Muon and AdamW settings are unchanged.

    \item \textbf{AdaMuon}~\citep{adamuon}. AdaMuon uses the same Muon/AdamW parameter split as above. For Muon-managed matrices, it maintains a momentum buffer with coefficient \(0.95\), takes the elementwise sign of the momentum-smoothed update, orthogonalizes it with a five-step KJ NS iteration, tracks an elementwise second-moment buffer with the same decay, divides by its square root plus \(10^{-8}\), and finally rescales the update to Frobenius norm \(0.2\sqrt{mn}\) for an \(m \times n\) matrix. Non-Muon parameters use AdamW with \((\beta_1,\beta_2)=(0.9,0.95)\), \(\epsilon=10^{-8}\), and weight decay \(0.1\). AdaMuon runs use the same cosine-decay schedule with linear warmup and a final floor of \(0.1\,\eta_{\mathrm{stable}}\).

    \item \textbf{ROOT}~\citep{root}. ROOT replaces the global NS coefficient table with matrix shape-specific coefficients. 
    Specifically, each Muon-managed matrix shape has a single fitted quintic triplet \((a,b,c)\), obtained by minimizing \(\mathrm{mean}((g^T(\sigma)-1)^2)\) for \(g(x)=ax+bx^3+cx^5\) on a mixed geometry-proxy and random-spectrum calibration set. 
    The fitted triplet is repeated across all five NS iterations, with no additional pre-NS safety rescaling or AOL-style preconditioning.

\end{itemize}

\paragraph{Evaluation.}

\begin{table*}[t]
\centering
\small
\caption{Evaluation benchmarks and task settings in our experiments. By default, the framework evaluates on the \texttt{test} split when available, and otherwise falls back to the \texttt{validation} split.}
\setlength{\tabcolsep}{5pt}
\begin{tabular}{llccc}
\toprule
Benchmark & Framework Task & Evaluation Split & Few-shot & Evaluation Metric \\
\midrule
ARC-Easy         & \texttt{arc\_easy}        & Test       & 0 & Accuracy \\
ARC-Challenge    & \texttt{arc\_challenge}   & Test       & 0 & Accuracy \\
SciQ             & \texttt{sciq}             & Test       & 0 & Accuracy \\
MMLU             & \texttt{mmlu}             & Test       & 0 & Accuracy \\
MMLU-Pro         & \texttt{mmlu\_pro}        & Test       & 5 & Exact Match \\
HellaSwag        & \texttt{hellaswag}        & Validation & 0 & Accuracy \\
OpenBookQA       & \texttt{openbookqa}       & Test       & 0 & Accuracy \\
PIQA             & \texttt{piqa}             & Validation & 0 & Accuracy \\
RACE (high)      & \texttt{race}             & Test       & 0 & Accuracy \\
WinoGrande (XL)  & \texttt{winogrande}       & Validation & 0 & Accuracy \\
CommonsenseQA    & \texttt{commonsense\_qa}  & Validation & 0 & Accuracy \\
AGIEval-en       & \texttt{agieval\_en}      & Test       & 0 & Accuracy \\
\bottomrule
\end{tabular}
\label{tab:eval_tasks}
\end{table*}

We evaluate all checkpoints using \texttt{lm-evaluation-harness} \citep{eval-harness} framework on 12 downstream benchmarks: ARC-Easy, ARC-Challenge \citep{arc}, SciQ \citep{sciq}, MMLU \citep{mmlu}, MMLU-Pro \citep{mmlu_pro}, HellaSwag \citep{hellaswag}, OpenBookQA \citep{openbookqa}, PIQA \citep{piqa}, RACE \citep{race}, WinoGrande \citep{winogrande}, CommonsenseQA \citep{commonsenseqa}, and AGIEval-en \citep{agieval}. 
We use the default task prompts and decoding settings provided by the harness, without additional prompt engineering, chat templates, or manual overrides of the few-shot configuration. 
Following the harness defaults, RACE uses the \texttt{high} subset and WinoGrande uses the \texttt{winogrande\_xl} variant.

As summarized in Tab.~\ref{tab:eval_tasks}, all tasks in our suite are evaluated in zero-shot mode except MMLU-Pro, which uses the task-defined 5-shot setting. 
In particular, MMLU is evaluated in zero-shot mode in our setup.
Most benchmarks are standard multiple-choice evaluations. 
For ARC-Easy, ARC-Challenge, SciQ, HellaSwag, OpenBookQA, and PIQA, the framework reports accuracy scores. 
MMLU is reported as size-weighted accuracy aggregated over four subject groups. 
MMLU-Pro is evaluated as a 14-domain aggregate with greedy generation and exact-match scoring after answer extraction. 
AGIEval-en is evaluated through aggregating tasks over 10 English subtasks. This suite is predominantly multiple choice but includes one generative math subtask, and the final score is reported as size-weighted accuracy.

\begin{table}[t]
\small
\centering
\caption{Full results of main experiments on pre-training Llama3.1-760M/1.4B.}
\setlength{\tabcolsep}{1.8pt}
\begin{tabular}{lccccccccccccc}
\toprule
\textbf{Optimizer}     & \textbf{AE}          & \textbf{AC}          & \textbf{SciQ}        & \textbf{MM}          & \textbf{MM-P}        & \textbf{HS}          & \textbf{OBQA}        & \textbf{PIQA}        & \textbf{RACE}        & \textbf{WG}          & \textbf{CSQA}        & \textbf{AGI}         & \textbf{Avg}         \\ \midrule
\textit{Llama3.1-760M} & \multicolumn{1}{l}{} & \multicolumn{1}{l}{} & \multicolumn{1}{l}{} & \multicolumn{1}{l}{} & \multicolumn{1}{l}{} & \multicolumn{1}{l}{} & \multicolumn{1}{l}{} & \multicolumn{1}{l}{} & \multicolumn{1}{l}{} & \multicolumn{1}{l}{} & \multicolumn{1}{l}{} & \multicolumn{1}{l}{} & \multicolumn{1}{l}{} \\
AdamW                  & 59.89                & 27.22                & 79.90                & 23.93                & 7.36                 & 33.53                & 23.20                & 67.03                & 31.10                & 51.30                & 19.33                & 16.46                & 36.69                \\
Muon-KJ                & 59.68                & 25.00                & 79.90                & 24.78                & 6.91                 & 33.64                & 24.40                & 65.89                & 31.48                & 49.88                & 20.48                & 17.83                & 36.66                \\
Muon-You               & 60.27                & 26.37                & 80.50                & 23.47                & 6.46                 & 33.41                & 23.00                & 67.19                & 29.67                & 51.78                & 19.82                & 17.03                & 36.58                \\
Muon-PE                & 59.89                & 27.39                & 81.80                & 23.79                & 8.15                 & 33.88                & 22.80                & 67.68                & 30.62                & 51.38                & 19.66                & 17.65                & \underline{37.06}    \\
Muon-CANS              & 59.51                & 25.34                & 81.40                & 26.30                & 6.44                 & 33.23                & 22.80                & 66.81                & 29.76                & 50.83                & 20.31                & 17.76                & 36.71                \\
Muon-Turbo             & 59.81                & 25.94                & 79.80                & 23.47                & 6.70                 & 33.39                & 22.40                & 65.67                & 30.81                & 51.22                & 20.39                & 17.03                & 36.39                \\
AdaMuon                & 58.29                & 25.34                & 80.60                & 24.32                & 7.58                 & 32.77                & 23.00                & 67.30                & 31.10                & 51.30                & 19.25                & 17.91                & 36.56                \\
ROOT                   & 59.85                & 26.19                & 80.80                & 25.41                & 7.52                 & 33.54                & 19.60                & 67.36                & 30.62                & 51.54                & 20.88                & 17.26                & 36.71                \\
\textbf{AMO (ours)}    & 59.43                & 26.79                & 81.50                & 26.24                & 6.76                 & 33.79                & 22.80                & 67.70                & 31.43                & 51.91                & 21.13                & 17.60                & \textbf{37.26}       \\ \midrule
\textit{Llama3.1-1.4B} & \multicolumn{1}{l}{} & \multicolumn{1}{l}{} & \multicolumn{1}{l}{} & \multicolumn{1}{l}{} & \multicolumn{1}{l}{} & \multicolumn{1}{l}{} & \multicolumn{1}{l}{} & \multicolumn{1}{l}{} & \multicolumn{1}{l}{} & \multicolumn{1}{l}{} & \multicolumn{1}{l}{} & \multicolumn{1}{l}{} & \multicolumn{1}{l}{} \\
AdamW                  & 65.84                & 28.67                & 84.20                & 24.95                & 6.13                 & 37.94                & 22.60                & 70.23                & 33.00                & 53.43                & 20.23                & 16.74                & 38.66                \\
Muon-KJ                & 65.32                & 28.41                & 85.20                & 25.47                & 6.94                 & 37.32                & 25.00                & 69.36                & 32.15                & 53.67                & 19.99                & 16.87                & 38.81                \\
Muon-You               & 64.81                & 30.80                & 84.30                & 24.86                & 7.47                 & 37.55                & 25.00                & 69.70                & 32.20                & 53.28                & 19.57                & 17.34                & 38.91                \\
Muon-PE                & 66.41                & 30.03                & 84.80                & 23.81                & 7.53                 & 37.54                & 24.00                & 69.15                & 32.34                & 53.67                & 20.64                & 17.24                & \underline{38.93}    \\
Muon-CANS              & 65.57                & 31.74                & 82.20                & 25.00                & 7.75                 & 37.83                & 24.20                & 69.75                & 32.63                & 52.57                & 20.64                & 17.00                & 38.91                \\
Muon-Turbo             & 64.77                & 30.03                & 84.90                & 25.33                & 7.86                 & 37.12                & 24.00                & 70.13                & 32.44                & 52.72                & 20.23                & 17.29                & 38.90                \\
AdaMuon                & 63.17                & 29.52                & 83.20                & 25.23                & 6.50                 & 36.19                & 26.00                & 68.61                & 31.77                & 52.25                & 21.29                & 17.50                & 38.44                \\
ROOT                   & 65.74                & 29.62                & 83.90                & 25.91                & 7.50                 & 37.38                & 24.00                & 69.96                & 31.39                & 53.12                & 20.23                & 16.98                & 38.81                \\
\textbf{AMO (ours)}    & 66.46                & 31.20                & 85.40                & 25.60                & 7.99                 & 37.46                & 25.60                & 70.50                & 33.00                & 53.43                & 21.29                & 18.30                & \textbf{39.69}       \\ \bottomrule
\end{tabular}
\label{tab:main_full_1}
\end{table}

\begin{table}[t]
\small
\centering
\caption{Full results of main experiments on pre-training Qwen3-0.6B/1.7B.}
\setlength{\tabcolsep}{2.2pt}
\begin{tabular}{lccccccccccccc}
\toprule
\textbf{Optimizer}  & \textbf{AE}          & \textbf{AC}          & \textbf{SciQ}        & \textbf{MM}          & \textbf{MM-P}        & \textbf{HS}          & \textbf{OBQA}        & \textbf{PIQA}        & \textbf{RACE}        & \textbf{WG}          & \textbf{CSQA}        & \textbf{AGI}         & \textbf{Avg}         \\ \midrule
\textit{Qwen3-0.6B} & \multicolumn{1}{l}{} & \multicolumn{1}{l}{} & \multicolumn{1}{l}{} & \multicolumn{1}{l}{} & \multicolumn{1}{l}{} & \multicolumn{1}{l}{} & \multicolumn{1}{l}{} & \multicolumn{1}{l}{} & \multicolumn{1}{l}{} & \multicolumn{1}{l}{} & \multicolumn{1}{l}{} & \multicolumn{1}{l}{} & \multicolumn{1}{l}{} \\
AdamW               & 58.96                & 24.49                & 82.00                & 23.51                & 6.71                 & 33.41                & 21.00                & 66.27                & 30.81                & 50.12                & 19.90                & 17.99                & 36.26                \\
Muon-KJ             & 62.16                & 26.79                & 81.90                & 23.80                & 7.30                 & 34.75                & 23.20                & 67.90                & 31.67                & 51.93                & 19.00                & 16.92                & 37.28                \\
Muon-You            & 60.77                & 27.22                & 82.20                & 24.45                & 8.22                 & 34.60                & 22.80                & 67.03                & 31.39                & 53.04                & 19.00                & 17.65                & 37.36                \\
Muon-PE             & 61.53                & 26.79                & 83.50                & 24.01                & 6.09                 & 34.75                & 22.20                & 67.85                & 31.96                & 52.17                & 20.48                & 16.64                & 37.33                \\
Muon-CANS           & 62.33                & 26.19                & 80.30                & 24.09                & 7.95                 & 35.04                & 22.20                & 66.92                & 32.06                & 52.72                & 20.39                & 17.00                & 37.27                \\
Muon-Turbo          & 61.49                & 25.51                & 82.70                & 23.45                & 6.40                 & 35.05                & 24.80                & 68.66                & 30.72                & 52.09                & 19.82                & 17.71                & \underline{37.37}    \\
AdaMuon             & 58.16                & 24.91                & 80.80                & 24.02                & 6.10                 & 33.33                & 22.20                & 66.92                & 31.96                & 52.49                & 20.48                & 17.39                & 36.56                \\
ROOT                & 60.01                & 25.60                & 81.00                & 25.43                & 6.26                 & 33.99                & 20.80                & 66.97                & 29.76                & 50.28                & 21.70                & 17.68                & 36.62                \\
\textbf{AMO (ours)} & 63.17                & 27.19                & 82.60                & 24.89                & 7.80                 & 35.04                & 21.60                & 68.66                & 31.53                & 53.09                & 20.23                & 17.39                & \textbf{37.77}       \\ \midrule
\textit{Qwen3-1.7B} & \multicolumn{1}{l}{} & \multicolumn{1}{l}{} & \multicolumn{1}{l}{} & \multicolumn{1}{l}{} & \multicolumn{1}{l}{} & \multicolumn{1}{l}{} & \multicolumn{1}{l}{} & \multicolumn{1}{l}{} & \multicolumn{1}{l}{} & \multicolumn{1}{l}{} & \multicolumn{1}{l}{} & \multicolumn{1}{l}{} & \multicolumn{1}{l}{} \\
AdamW               & 68.81                & 30.34                & 87.20                & 24.70                & 6.60                 & 40.56                & 25.00                & 71.76                & 33.30                & 55.64                & 18.67                & 16.46                & 39.92                \\
Muon-KJ             & 68.22                & 31.74                & 85.70                & 24.80                & 5.54                 & 40.71                & 26.40                & 72.09                & 33.78                & 55.00                & 20.72                & 16.45                & 40.10                \\
Muon-You            & 67.59                & 30.20                & 86.90                & 24.35                & 6.87                 & 40.47                & 25.40                & 72.52                & 34.07                & 56.12                & 19.25                & 17.89                & 40.14                \\
Muon-PE             & 68.98                & 31.66                & 85.30                & 25.13                & 7.54                 & 40.96                & 26.40                & 71.82                & 32.44                & 54.78                & 20.88                & 16.93                & \underline{40.24}    \\
Muon-CANS           & 69.32                & 32.08                & 86.50                & 24.65                & 6.18                 & 41.04                & 25.00                & 70.84                & 33.78                & 54.22                & 17.28                & 16.46                & 39.78                \\
Muon-Turbo          & 67.17                & 31.74                & 86.20                & 24.50                & 6.81                 & 39.93                & 24.40                & 71.55                & 33.01                & 53.75                & 20.80                & 16.51                & 39.70                \\
AdaMuon             & 67.42                & 31.48                & 85.80                & 24.78                & 5.24                 & 39.66                & 25.60                & 70.89                & 33.68                & 53.04                & 20.39                & 16.90                & 39.57                \\
ROOT                & 69.95                & 31.74                & 87.70                & 25.25                & 7.68                 & 40.53                & 25.80                & 71.76                & 31.77                & 54.99                & 18.59                & 16.25                & 40.17                \\
\textbf{AMO (ours)} & 69.36                & 32.68                & 86.00                & 25.13                & 7.28                 & 41.07                & 26.40                & 71.76                & 34.00                & 56.20                & 21.13                & 18.00                & \textbf{40.75}       \\ \bottomrule
\end{tabular}
\label{tab:main_full_2}
\end{table}

\begin{table}[t]
\centering
\small
\caption{Model-average paired t-test results comparing AMO with each baseline optimizers across Llama3.1-760M/1.4B and Qwen3-0.6B/1.7B. 
$\Delta$Avg denotes the mean/ improvement in model-level average downstream accuracy.
Raw $p$-values and Holm-corrected $p$-values are reported. 
$*$ denotes $p_{\mathrm{Holm}} < 0.05$.}
\setlength{\tabcolsep}{9pt}
\begin{tabular}{lccccc}
\toprule
\textbf{Comparison} & $\boldsymbol{\Delta}$\textbf{Avg} & $\boldsymbol{t}$ & $\boldsymbol{p_{\mathrm{raw}}}$ & $\boldsymbol{p_{\mathrm{Holm}}}$ & \textbf{Significance} \\
\midrule
AMO vs AdamW      & +0.98 & +4.984 & 0.0078 & 0.0221 & * \\
AMO vs Muon-KJ    & +0.66 & +8.033 & 0.0020 & 0.0146 & * \\
AMO vs Muon-You   & +0.62 & +7.758 & 0.0022 & 0.0146 & * \\
AMO vs Muon-PE    & +0.48 & +4.150 & 0.0127 & 0.0221 & * \\
AMO vs Muon-CANS  & +0.70 & +6.416 & 0.0038 & 0.0192 & * \\
AMO vs Muon-Turbo & +0.78 & +5.636 & 0.0055 & 0.0221 & * \\
AMO vs AdaMuon    & +1.08 & +8.312 & 0.0018 & 0.0146 & * \\
AMO vs ROOT       & +0.79 & +5.611 & 0.0056 & 0.0221 & * \\
\bottomrule
\end{tabular}
\label{tab:model_avg_ttest}
\end{table}

\begin{table}[t]
\centering
\small
\caption{Full results of prolonged pre-training on Qwen3-0.6B under 2x, 5x, and 10x Chinchilla optimal token-to-parameter ratios.}
\setlength{\tabcolsep}{2.2pt}
\begin{tabular}{lccccccccccccc}
\toprule
\textbf{Optimizer}  & \textbf{AE}          & \textbf{AC}          & \textbf{SciQ}        & \textbf{MM}          & \textbf{MM-P}        & \textbf{HS}          & \textbf{OBQA}        & \textbf{PIQA}        & \textbf{RACE}        & \textbf{WG}          & \textbf{CSQA}        & \textbf{AGI}         & \textbf{Avg}         \\ \midrule
\textit{2x}         & \multicolumn{1}{l}{} & \multicolumn{1}{l}{} & \multicolumn{1}{l}{} & \multicolumn{1}{l}{} & \multicolumn{1}{l}{} & \multicolumn{1}{l}{} & \multicolumn{1}{l}{} & \multicolumn{1}{l}{} & \multicolumn{1}{l}{} & \multicolumn{1}{l}{} & \multicolumn{1}{l}{} & \multicolumn{1}{l}{} & \multicolumn{1}{l}{} \\
Muon-KJ             & 62.21                & 27.05                & 81.30                & 24.33                & 5.84                 & 35.81                & 22.60                & 68.28                & 31.20                & 52.96                & 18.76                & 17.58                & 37.33                \\
Muon-PE             & 62.25                & 27.03                & 82.00                & 25.00                & 7.77                 & 35.04                & 22.40                & 66.92                & 32.06                & 52.72                & 19.74                & 17.19                & 37.51                \\
\textbf{AMO (ours)} & 62.37                & 27.22                & 82.30                & 25.23                & 7.02                 & 35.74                & 24.00                & 69.15                & 31.58                & 52.17                & 20.33                & 17.65                & \textbf{37.90}       \\ \midrule
\textit{5x}         &                      &                      &                      &                      &                      &                      &                      &                      &                      &                      &                      &                      &                      \\
Muon-KJ             & 65.91                & 28.75                & 85.50                & 25.54                & 6.58                 & 37.29                & 24.20                & 68.88                & 32.73                & 55.01                & 19.66                & 17.32                & 38.95                \\
Muon-PE             & 66.00                & 27.30                & 85.00                & 25.00                & 7.77                 & 37.54                & 24.60                & 69.00                & 32.34                & 55.67                & 20.64                & 17.65                & 39.04                \\
\textbf{AMO (ours)} & 65.76                & 28.33                & 85.70                & 25.02                & 7.77                 & 37.61                & 25.60                & 69.21                & 32.67                & 55.23                & 20.37                & 17.98                & \textbf{39.27}       \\ \midrule
\textit{10x}        &                      &                      &                      &                      &                      &                      &                      &                      &                      &                      &                      &                      &                      \\
Muon-KJ             & 66.25                & 33.03                & 86.20                & 25.35                & 6.89                 & 38.33                & 25.40                & 69.70                & 32.73                & 55.64                & 18.84                & 17.73                & 39.67                \\
Muon-PE             & 66.98                & 32.08                & 86.00                & 25.13                & 7.54                 & 37.83                & 26.00                & 70.23                & 32.44                & 54.78                & 21.29                & 17.00                & 39.78                \\
\textbf{AMO (ours)} & 66.00                & 33.78                & 86.40                & 25.13                & 7.72                 & 38.77                & 25.60                & 69.48                & 33.49                & 55.93                & 20.64                & 17.82                & \textbf{40.06}       \\ \bottomrule
\end{tabular}
\label{tab:prolong_full}
\end{table}

\begin{table}[]
\centering
\small
\caption{Full results of continual pre-training with additional 36B tokens on a Qwen3-0.6B checkpoint pretrained on 60B tokens. }
\setlength{\tabcolsep}{2.2pt}
\begin{tabular}{@{}lccccccccccccc@{}}
\toprule
\textbf{Optimizer}  & \textbf{AE}          & \textbf{AC}          & \textbf{SciQ}        & \textbf{MM}          & \textbf{MM-P}        & \textbf{HS}          & \textbf{OBQA}        & \textbf{PIQA}        & \textbf{RACE}        & \textbf{WG}          & \textbf{CSQA}        & \textbf{AGI}         & \textbf{Avg}         \\ \midrule
\textit{Step=2000}  & \multicolumn{1}{l}{} & \multicolumn{1}{l}{} & \multicolumn{1}{l}{} & \multicolumn{1}{l}{} & \multicolumn{1}{l}{} & \multicolumn{1}{l}{} & \multicolumn{1}{l}{} & \multicolumn{1}{l}{} & \multicolumn{1}{l}{} & \multicolumn{1}{l}{} & \multicolumn{1}{l}{} & \multicolumn{1}{l}{} & \multicolumn{1}{l}{} \\
Muon-KJ             & 64.94                & 28.41                & 87.00                & 24.20                & 6.59                 & 37.05                & 23.20                & 68.50                & 33.11                & 55.01                & 20.72                & 17.60                & 38.86                \\
Muon-PE             & 66.51                & 29.61                & 86.00                & 25.00                & 7.21                 & 38.63                & 22.00                & 70.24                & 33.88                & 52.56                & 19.77                & 16.43                & 38.99                \\
\textbf{AMO (ours)} & 63.72                & 29.27                & 86.20                & 24.79                & 7.01                 & 36.52                & 24.20                & 68.99                & 33.40                & 54.62                & 22.44                & 18.07                & \textbf{39.10}       \\ \midrule
\textit{Step=4000}  &                      &                      &                      &                      &                      &                      &                      &                      &                      &                      &                      &                      &                      \\
Muon-KJ             & 65.32                & 29.61                & 86.40                & 24.63                & 6.46                 & 37.94                & 23.60                & 68.66                & 32.54                & 53.43                & 19.98                & 17.96                & 38.88                \\
Muon-PE             & 65.17                & 31.23                & 86.00                & 24.89                & 7.80                 & 37.94                & 24.00                & 68.00                & 31.53                & 54.06                & 19.77                & 17.72                & 39.01                \\
\textbf{AMO (ours)} & 65.53                & 31.23                & 86.00                & 24.55                & 5.72                 & 37.74                & 23.40                & 70.02                & 33.68                & 53.99                & 21.05                & 17.72                & \textbf{39.22}       \\ \midrule
\textit{Step=6000}  &                      &                      &                      &                      &                      &                      &                      &                      &                      &                      &                      &                      &                      \\
Muon-KJ             & 67.47                & 31.40                & 87.10                & 26.51                & 6.99                 & 39.14                & 24.40                & 69.59                & 32.73                & 54.06                & 20.15                & 17.86                & 39.78                \\
\textbf{Muon-PE}    & 68.32                & 32.08                & 87.00                & 25.00                & 6.18                 & 39.00                & 25.00                & 70.84                & 33.00                & 54.22                & 19.74                & 16.46                & 39.74                \\
\textbf{AMO (ours)} & 68.64                & 31.97                & 86.80                & 26.04                & 6.23                 & 39.05                & 25.40                & 69.70                & 33.78                & 54.96                & 20.31                & 17.60                & \textbf{40.04}       \\ \midrule
\textit{Step=8000}           &                      &                      &                      &                      &                      &                      &                      &                      &                      &                      &                      &                      &                      \\
Muon-KJ             & 66.88                & 31.74                & 86.80                & 26.21                & 7.25                 & 39.07                & 25.00                & 70.35                & 33.59                & 53.67                & 21.62                & 17.71                & 39.99                \\
Muon-PE             & 68.30                & 31.00                & 87.00                & 24.63                & 5.54                 & 39.11                & 26.00                & 72.09                & 33.78                & 55.00                & 20.72                & 17.74                & 40.08                \\
\textbf{AMO (ours)} & 68.39                & 30.29                & 86.60                & 25.06                & 7.77                 & 39.18                & 26.40                & 70.38                & 33.88                & 54.96                & 21.64                & 17.19                & \textbf{40.15}       \\ \midrule
\textit{Step=8580}           &                      &                      &                      &                      &                      &                      &                      &                      &                      &                      &                      &                      &                      \\
Muon-KJ             & 66.50                & 31.57                & 86.60                & 26.20                & 7.77                 & 39.12                & 25.00                & 70.08                & 33.40                & 54.22                & 21.70                & 17.94                & 40.01                \\
Muon-PE             & 66.00                & 32.00                & 88.00                & 25.60                & 7.99                 & 39.10                & 25.60                & 70.50                & 34.00                & 53.43                & 21.29                & 18.30                & 40.15                \\
\textbf{AMO (ours)} & 68.56                & 31.74                & 88.20                & 25.00                & 7.99                 & 39.10                & 26.00                & 70.18                & 34.45                & 54.56                & 20.23                & 17.68                & \textbf{40.31}       \\ \bottomrule
\end{tabular}
\label{tab:cpt_full}
\end{table}

\subsection{Main Experiments}
\label{app:main_full}

Full results of pre-training Llama3.1-760M/1.4B and Qwen3-0.6B/1.7B using all optimizers in this paper are provided in Tabs.~\ref{tab:main_full_1} and~\ref{tab:main_full_2}.
Moreover, paired T-test results with Holm correction for multiple comparisons are provided in Tab.~\ref{tab:model_avg_ttest}.

\subsection{Prolonged Pre-training Experiments}
\label{app:prolong_full}

Full results of prolonged pre-training are provided in Tab.~\ref{tab:prolong_full}.

\subsection{Continual Pre-training Experiments}
\label{app:cpt_full}

Full results of continual pre-training are provided in Tab.~\ref{tab:cpt_full}.

\subsection{Ablation Experiments}
\label{app:ablation_full}

Full results of ablation experiments on \textit{Observation horizon and interval}, \textit{Observation sample size}, \textit{Shrinkage}, and \textit{Transition duration} of AMO are provided in Tab.~\ref{tab:ablation_full}.

\begin{table}[t]
\centering
\small
\caption{Full downstream results of Qwen3-0.6B pre-training under four key designs of AMO: \textit{Observation horizon and interval}, \textit{Observation sample size}, \textit{Shrinkage}, and \textit{Transition duration}.}
\setlength{\tabcolsep}{2.2pt}
\begin{tabular}{lccccccccccccc}
\toprule
\textbf{Setting} & \textbf{AE} & \textbf{AC} & \textbf{SciQ} & \textbf{MM} & \textbf{MM-P} & \textbf{HS} & \textbf{OBQA} & \textbf{PIQA} & \textbf{RACE} & \textbf{WG} & \textbf{CSQA} & \textbf{AGI} & \textbf{Avg} \\ \midrule
\multicolumn{14}{l}{\textbf{Observation horizon and interval}} \\
(800, 100)  & 62.09 & 27.11 & 82.70 & 25.21 & 7.51 & 34.17 & 21.80 & 68.00 & 31.91 & 52.80 & 20.41 & 18.43 & 37.68 \\
(1200, 150) & 63.17 & 27.19 & 82.60 & 24.89 & 7.80 & 35.04 & 21.60 & 68.66 & 31.53 & 53.09 & 20.23 & 17.39 & \textbf{37.77} \\
(1600, 200) & 63.72 & 26.89 & 82.00 & 25.67 & 6.27 & 33.98 & 21.60 & 68.01 & 31.48 & 53.51 & 20.48 & 18.00 & 37.63 \\ \midrule
\multicolumn{14}{l}{\textbf{Observation sample size}} \\
4  & 62.34 & 26.91 & 82.60 & 25.40 & 7.89 & 34.15 & 21.40 & 68.01 & 31.81 & 53.28 & 20.82 & 17.24 & 37.65 \\
8  & 63.17 & 27.19 & 82.60 & 24.89 & 7.80 & 35.04 & 21.60 & 68.66 & 31.53 & 53.09 & 20.23 & 17.39 & 37.77 \\
16 & 63.15 & 27.09 & 82.80 & 25.00 & 7.45 & 35.00 & 21.80 & 68.46 & 31.86 & 53.80 & 20.23 & 17.37 & \textbf{37.83} \\ \midrule
\multicolumn{14}{l}{\textbf{Shrinkage}} \\
0   & 62.43 & 24.74 & 82.10 & 24.53 & 6.13 & 33.91 & 23.60 & 66.97 & 31.39 & 53.20 & 22.03 & 17.19 & 37.35 \\
0.5 & 63.00 & 25.17 & 80.30 & 23.25 & 7.29 & 35.95 & 24.60 & 67.03 & 30.91 & 54.54 & 22.36 & 18.02 & 37.70 \\
0.7 & 63.17 & 27.19 & 82.60 & 24.89 & 7.80 & 35.04 & 21.60 & 68.66 & 31.53 & 53.09 & 20.23 & 17.39 & \textbf{37.77} \\
1   & 63.09 & 24.74 & 82.80 & 26.21 & 7.86 & 33.84 & 24.20 & 67.14 & 30.62 & 52.09 & 20.73 & 17.88 & 37.60 \\ \midrule
\multicolumn{14}{l}{\textbf{Transition duration}} \\
1   & 60.98 & 25.68 & 81.60 & 25.24 & 6.00 & 33.91 & 23.00 & 68.41 & 31.87 & 53.20 & 19.66 & 17.68 & 37.27 \\
100 & 63.86 & 25.77 & 81.70 & 24.09 & 6.58 & 34.02 & 23.60 & 68.81 & 31.00 & 52.17 & 19.25 & 17.91 & 37.40 \\
300 & 63.17 & 27.19 & 82.60 & 24.89 & 7.80 & 35.04 & 21.60 & 68.66 & 31.53 & 53.09 & 20.23 & 17.39 & \textbf{37.77} \\
600 & 62.60 & 24.49 & 82.80 & 25.84 & 6.77 & 35.79 & 24.00 & 66.70 & 31.29 & 52.17 & 20.88 & 17.60 & 37.58 \\ \bottomrule
\end{tabular}
\label{tab:ablation_full}
\end{table}

\subsection{Budget Control Experiments}
\label{app:budget_control_full}

\paragraph{Setup.}

To study compute-quality tradeoff, we vary only the global budget ratio on \textbf{Qwen3-0.6B}. 
Specifically, we evaluate five settings with budget ratio $r \in \{0.8, 0.9, 1.0, 1.1, 1.2\}$. 
Since the baseline uses 5 NS steps for each of 7 layer types, these settings correspond to total step budgets of 28, 32, 35, 39, and 42, respectively. 
All other components are kept identical across runs, including the observation phase. 
In particular, all runs share the same model, data pipeline, optimizer hyperparameters, learning-rate schedule, observation procedure, and execution-phase settings.

All runs begin with the same observation phase under a uniform PE baseline, where each of the 7 Muon-managed layer types uses 5 NS steps. 
The baseline total budget is therefore \(7 \times 5 = 35\). 
Given a budget ratio \(r\), the total execution-phase budget is set to
\[
B = \mathrm{round}(35\,r).
\]
Thus, the five settings above correspond to \(B \in \{28, 32, 35, 39, 42\}\). 
During the observation phase, our AMO keeps the uniform 5-step schedule fixed and only collects spectral statistics. Afterward, it constructs candidate schedules for each layer type within the same step range \([3,7]\), and solves the final execution-phase allocation under the constraint
\[
\sum_t s_t = B,
\]
where \(s_t\) denotes the NS step count assigned to layer type \(t\). Hence, the only difference across runs is the total orthogonalization budget available when solving the final layer-type-level step allocation.

Changing \(r\) affects only the orthogonalization compute and leaves the rest of training unchanged. 
For a given layer type, one additional NS step appends one more quintic matrix update on the same tensor shape, so the orthogonalization FLOPs per optimizer step scale approximately linearly with the assigned step count. 
Relative to the default setting \(r=1.0\) (\(B=35\)), \(r=0.8\) and \(r=0.9\) reduce the orthogonalization budget by \(20\%\) and \(8.6\%\), while \(r=1.1\) and \(r=1.2\) increase it by \(11.4\%\) and \(20\%\), respectively.

\paragraph{Results.}

Full results are provided in Tab.~\ref{tab:budget_control_full}.

\begin{table}[]
\centering
\small
\caption{Full results of Llama3.1-760M and Qwen3-0.6B pre-training under budget control settings.}
\setlength{\tabcolsep}{1.8pt}
\begin{tabular}{lccccccccccccc}
\toprule
\textbf{\begin{tabular}[c]{@{}l@{}}Budget\\ Ratio\end{tabular}} & \textbf{AE}          & \textbf{AC}          & \textbf{SciQ}        & \textbf{MM}          & \textbf{MM-P}        & \textbf{HS}          & \textbf{OBQA}        & \textbf{PIQA}        & \textbf{RACE}        & \textbf{WG}          & \textbf{CSQA}        & \textbf{AGI}         & \textbf{Avg}         \\ \midrule
\textit{Qwen3-0.6B}                                             & \multicolumn{1}{l}{} & \multicolumn{1}{l}{} & \multicolumn{1}{l}{} & \multicolumn{1}{l}{} & \multicolumn{1}{l}{} & \multicolumn{1}{l}{} & \multicolumn{1}{l}{} & \multicolumn{1}{l}{} & \multicolumn{1}{l}{} & \multicolumn{1}{l}{} & \multicolumn{1}{l}{} & \multicolumn{1}{l}{} & \multicolumn{1}{l}{} \\
0.8                                                             & 62.25                & 26.02                & 82.40                & 24.56                & 7.95                 & 33.89                & 22.60                & 67.76                & 30.72                & 53.51                & 21.48                & 17.19                & 37.53       \\
0.9                                                             & 62.68                & 26.74                & 82.00                & 25.00                & 6.77                 & 33.93                & 22.40                & 68.20                & 31.00                & 53.70                & 21.62                & 18.30                & 37.70                \\
1.0                                                             & 63.17                & 27.19                & 82.60                & 24.89                & 7.80                 & 35.04                & 21.60                & 68.66                & 31.53                & 53.09                & 20.23                & 17.39                & 37.77                \\
1.1                                                             & 64.26                & 27.19                & 83.00                & 25.30                & 7.77                 & 34.04                & 23.00                & 67.68                & 30.05                & 53.04                & 20.97                & 17.89                & 37.85       \\
1.2                                                             & 64.39                & 27.30                & 83.80                & 25.82                & 7.95                 & 34.12                & 23.60                & 67.30                & 30.60                & 50.67                & 20.97                & 18.00                & \textbf{37.88}       \\ \midrule
\textit{Llama3.1-760M}                                          & \multicolumn{1}{l}{} & \multicolumn{1}{l}{} & \multicolumn{1}{l}{} & \multicolumn{1}{l}{} & \multicolumn{1}{l}{} & \multicolumn{1}{l}{} & \multicolumn{1}{l}{} & \multicolumn{1}{l}{} & \multicolumn{1}{l}{} & \multicolumn{1}{l}{} & \multicolumn{1}{l}{} & \multicolumn{1}{l}{} & \multicolumn{1}{l}{} \\
0.8                                                             & 59.93                & 26.74                & 81.00                & 25.00                & 7.77                 & 33.45                & 21.80                & 65.72                & 30.33                & 52.64                & 20.03                & 17.08                & 36.79                \\
0.9                                                             & 59.39                & 25.85                & 82.00                & 26.37                & 7.45                 & 33.43                & 22.00                & 66.00                & 30.53                & 52.72                & 21.00                & 17.03                & 36.98                \\
1.0                                                             & 59.43                & 26.79                & 81.50                & 26.24                & 6.76                 & 33.79                & 22.80                & 67.70                & 31.43                & 51.91                & 21.13                & 17.60                & 37.26                \\
1.1                                                             & 60.18                & 26.19                & 82.80                & 25.43                & 7.80                 & 33.60                & 22.00                & 67.03                & 31.39                & 52.51                & 20.97                & 17.42                & 37.28                \\
1.2                                                             & 60.77                & 26.79                & 82.00                & 25.40                & 7.02                 & 33.26                & 22.80                & 67.80                & 31.72                & 53.51                & 20.13                & 17.42                & \textbf{37.39}       \\ \bottomrule
\end{tabular}
\label{tab:budget_control_full}
\end{table}

\subsection{Static Coefficient Experiments}
\label{app:static_coeff}

\paragraph{Setup.}

Inspired by Muon-PE and Muon-CANS~\citep{pe,cans}, we construct the following static schedules fully offline and freeze them throughout training. 
For each Muon-managed layer type (\texttt{attn\_q}, \texttt{attn\_k}, \texttt{attn\_v}, \texttt{attn\_o}, \texttt{mlp\_gate}, \texttt{mlp\_up}, and \texttt{mlp\_down}), we first estimate an effective PE lower-bound statistic from geometry logs,
\[
\ell_{\mathrm{eff}}=\frac{\max(\sigma_{\min},\tau\sigma_{\max})}{\|M\|_F}.
\]
We then aggregate this statistic offline to obtain a fixed \(\ell_t\) for each layer type. 
The \texttt{Typical}, \texttt{Conservative}, and \texttt{Floor} variants use the \(50\)th, \(25\)th, and \(10\)th percentiles of the aggregated \(\ell_{\mathrm{eff}}\), respectively. 
Given \(\ell_t\), we build a PE schedule by optimal composition and choose the smallest number of NS steps \(s_t\in\{1,\dots,8\}\) whose simulated final approximation error satisfies \(1-l_t^{(s_t)}\le\varepsilon\), where \(\varepsilon\in\{1\times10^{-2}, 5\times10^{-2}\}\). 
The corresponding coefficient triplets \(\{(a_i,b_i,c_i)\}_{i=1}^{s_t}\) are then fixed for that layer type for the entire run. 
Additionally, \textit{llama family} or \textit{qwen family} indicates that the offline statistics are pooled only within the corresponding model family; otherwise we use the generic pooled statistics. 
All other training settings are kept identical to the corresponding main experiments.

\paragraph{Results.}

Full results are provided in Tabs.~\ref{tab:static_coeff_llama} and~\ref{tab:static_coeff_qwen}.

\begin{table*}[tb]
\centering
\small
\caption{Full results of Llama3.1-1.4B pre-training under static coefficient settings. \textit{L} denotes the observation data are only from four Llama-family models from pilot experiments.}
\setlength{\tabcolsep}{2.2pt}
\begin{tabular}{lccccccccccccc}
\toprule
\textbf{Optimizer} & \textbf{AE} & \textbf{AC} & \textbf{SciQ} & \textbf{MM} & \textbf{MM-P} & \textbf{HS} & \textbf{OBQA} & \textbf{PIQA} & \textbf{RACE} & \textbf{WG} & \textbf{CSQA} & \textbf{AGI} & \textbf{Avg}   \\ \midrule
Muon-KJ            & 65.32       & 28.41       & 85.20         & 25.47       & 6.94          & 37.32       & 25.00         & 69.36         & 32.15         & 53.67       & 19.99         & 16.87        & 38.81          \\
Muon-PE            & 66.41       & 30.03       & 84.80         & 23.81       & 7.53          & 37.54       & 24.00         & 69.15         & 32.34         & 53.67       & 20.64         & 17.24        & \textbf{38.93} \\ \midrule
Typical 1e-2       & 66.37       & 29.78       & 84.60         & 24.08       & 5.59          & 37.75       & 23.20         & 69.04         & 31.67         & 51.07       & 20.31         & 17.24        & 38.39          \\
Typical 1e-2 L     & 66.51       & 31.91       & 85.90         & 25.00       & 7.21          & 38.63       & 24.80         & 70.24         & 33.88         & 52.56       & 20.31         & 16.43        & 39.45          \\
Typical 5e-2       & 64.06       & 29.18       & 82.40         & 24.12       & 7.13          & 37.24       & 19.80         & 67.85         & 31.58         & 52.64       & 20.07         & 17.89        & 37.83          \\
Typical 5e-2 L     & 65.07       & 30.20       & 85.60         & 24.65       & 5.78          & 38.93       & 24.20         & 70.18         & 34.83         & 55.56       & 19.90         & 17.98        & 39.41          \\
Cons 1e-2          & 65.78       & 29.79       & 85.00         & 24.90       & 6.08          & 37.84       & 25.00         & 69.59         & 32.63         & 54.30       & 19.25         & 17.24        & 38.95          \\
Cons 1e-2 L        & 65.78       & 31.31       & 84.80         & 23.92       & 6.57          & 37.51       & 24.60         & 70.08         & 31.96         & 53.91       & 21.78         & 16.48        & 39.06          \\
Cons 5e-2          & 64.69       & 29.69       & 85.30         & 25.86       & 7.20          & 37.77       & 25.20         & 69.31         & 33.40         & 53.83       & 21.38         & 17.76        & 39.28          \\
Cons 5e-2 L        & 66.20       & 30.38       & 84.10         & 25.28       & 7.39          & 37.56       & 26.40         & 70.39         & 33.40         & 53.67       & 21.25         & 16.96        & 39.42          \\
Floor 1e-2         & 66.58       & 30.03       & 84.50         & 25.99       & 6.57          & 37.43       & 23.00         & 69.75         & 32.72         & 54.06       & 19.33         & 16.93        & 38.91          \\
Floor 1e-2 L       & 65.53       & 30.89       & 85.30         & 23.94       & 7.10          & 38.63       & 24.80         & 70.73         & 33.01         & 55.33       & 21.05         & 17.98        & \textbf{39.52} \\
Floor 5e-2         & 66.33       & 30.97       & 84.30         & 24.66       & 6.87          & 39.00       & 23.60         & 68.99         & 33.21         & 54.62       & 18.92         & 17.42        & 39.07          \\
Floor 5e-2 L       & 66.98       & 29.78       & 85.20         & 26.00       & 5.59          & 38.63       & 25.40         & 69.70         & 33.59         & 53.99       & 20.23         & 17.85        & 39.41          \\ \bottomrule
\end{tabular}
\label{tab:static_coeff_llama}
\end{table*}

\begin{table*}[tb]
\centering
\small
\caption{Full results of Qwen3-1.7B under static coefficient settings. \textit{Q} denotes the observation data are only from four Qwen-family models from pilot experiments.}
\setlength{\tabcolsep}{2.2pt}
\begin{tabular}{lccccccccccccc}
\toprule
\textbf{Optimizer} & \textbf{AE} & \textbf{AC} & \textbf{SciQ} & \textbf{MM} & \textbf{MM-P} & \textbf{HS} & \textbf{OBQA} & \textbf{PIQA} & \textbf{RACE} & \textbf{WG} & \textbf{CSQA} & \textbf{AGI} & \textbf{Avg}   \\ \midrule
Muon-KJ            & 68.22       & 31.74       & 85.70         & 24.80       & 5.54          & 40.71       & 26.40         & 72.09         & 33.78         & 55.00       & 20.72         & 16.45        & 40.10          \\
Muon-PE            & 68.98       & 31.66       & 85.30         & 25.13       & 7.54          & 40.96       & 26.40         & 71.82         & 32.44         & 54.78       & 20.88         & 16.93        & \textbf{40.24} \\ \midrule
Typical 1e-2       & 67.59       & 32.51       & 86.30         & 25.77       & 6.44          & 40.73       & 25.60         & 70.55         & 33.11         & 53.20       & 19.08         & 17.39        & 39.86          \\
Typical 1e-2 Q     & 69.32       & 32.68       & 87.80         & 24.09       & 7.04          & 40.99       & 27.60         & 72.52         & 34.16         & 56.91       & 18.10         & 16.93        & \textbf{40.68} \\
Typical 5e-2       & 68.31       & 32.51       & 86.20         & 24.85       & 5.36          & 40.79       & 24.60         & 71.82         & 33.59         & 56.91       & 18.35         & 17.42        & 40.06          \\
Typical 5e-2 Q     & 68.98       & 32.59       & 86.00         & 25.00       & 6.08          & 40.82       & 26.80         & 71.33         & 32.73         & 55.96       & 18.67         & 16.87        & 40.15          \\
Cons 1e-2          & 67.72       & 32.85       & 85.60         & 25.95       & 6.62          & 41.12       & 25.60         & 70.78         & 33.21         & 55.17       & 21.13         & 16.98        & 40.23          \\
Cons 1e-2 Q        & 69.36       & 32.94       & 86.50         & 24.14       & 6.74          & 40.96       & 26.80         & 72.74         & 34.35         & 55.01       & 19.25         & 17.89        & 40.56          \\
Cons 5e-2          & 69.32       & 32.51       & 86.60         & 25.02       & 7.57          & 40.90       & 26.60         & 71.43         & 33.21         & 55.81       & 18.43         & 17.08        & 40.37          \\
Cons 5e-2 Q        & 68.64       & 31.83       & 86.20         & 24.90       & 7.40          & 40.98       & 27.00         & 71.76         & 33.59         & 56.35       & 19.74         & 17.61        & 40.50          \\
Floor 1e-2         & 67.93       & 32.68       & 86.10         & 25.54       & 5.86          & 41.13       & 27.20         & 71.76         & 33.97         & 56.20       & 19.74         & 17.34        & 40.45          \\
Floor 1e-2 Q       & 69.82       & 32.59       & 86.90         & 25.23       & 7.22          & 41.07       & 27.40         & 71.87         & 32.82         & 53.00       & 20.48         & 17.37        & 40.38          \\
Floor 5e-2         & 67.76       & 31.31       & 87.10         & 24.30       & 6.80          & 40.94       & 27.20         & 70.95         & 34.26         & 56.99       & 19.49         & 16.61        & 40.31          \\
Floor 5e-2 Q       & 68.64       & 31.83       & 86.20         & 24.90       & 7.40          & 40.98       & 27.00         & 71.76         & 33.59         & 56.35       & 19.74         & 16.61        & 40.42          \\ \bottomrule
\end{tabular}
\label{tab:static_coeff_qwen}
\end{table*}

\section{Additional Experiments}

\subsection{Prolonged Pre-training}
\label{app:additional_prolong}

Additional prolonged pre-training results for Llama3.1-1.4B are provided in Tab.~\ref{tab:additional_llama}.

\begin{table}[tb]
\centering
\small
\caption{Full results of pre-training Llama3.1-1.4B under 2x Chinchilla optimal ratio.}
\setlength{\tabcolsep}{2.2pt}
\begin{tabular}{@{}lccccccccccccc@{}}
\toprule
\textbf{Optimizer}  & \textbf{AE}          & \textbf{AC}          & \textbf{SciQ}        & \textbf{MM}          & \textbf{MM-P}        & \textbf{HS}          & \textbf{OBQA}        & \textbf{PIQA}        & \textbf{RACE}        & \textbf{WG}          & \textbf{CSQA}        & \textbf{AGI}         & \textbf{Avg}         \\ \midrule
\textit{2x}         & \multicolumn{1}{l}{} & \multicolumn{1}{l}{} & \multicolumn{1}{l}{} & \multicolumn{1}{l}{} & \multicolumn{1}{l}{} & \multicolumn{1}{l}{} & \multicolumn{1}{l}{} & \multicolumn{1}{l}{} & \multicolumn{1}{l}{} & \multicolumn{1}{l}{} & \multicolumn{1}{l}{} & \multicolumn{1}{l}{} & \multicolumn{1}{l}{} \\
Muon-KJ             & 65.32                & 30.41                & 85.20                & 25.47                & 6.94                 & 37.32                & 24.60                & 69.36                & 32.15                & 53.67                & 19.99                & 17.40                & 38.99                \\
\textbf{AMO (ours)} & 67.97                & 32.13                & 86.40                & 24.66                & 6.84                 & 39.10                & 24.60                & 70.95                & 34.00                & 53.59                & 21.70                & 17.72                & \textbf{39.97}                \\ \bottomrule
\end{tabular}
\label{tab:additional_llama}
\end{table}

\subsection{NS Step Allocation}

\label{app:ns_step}

\paragraph{Setup.}

In this variant, we keep the optimizer, LR scheduler, coefficient, and training setup unchanged, and only override the number of NS steps according to the layer type for pre-training Qwen-300M/0.6B. 
Based on the observation data, we divide the seven operator types into two coarse groups. 
The \textit{hard} group contains attention query, output, and value projections, while the \textit{easy} group contains attention key and the three MLP projections. 
The five static allocation configurations are summarized in Tab.~\ref{tab:ns-step-allocation}.

The allocation sweep exposes a clear sensitivity of Muon to where the NS budget is spent. 
A1 reduces the step count of the easy group from five to four while keeping the hard group unchanged, testing whether the baseline over-orthogonalizes easier matrices. 
A2 reallocates more budget toward the hard attention projections by increasing \texttt{attn\_q/o/v} to six steps while keeping the easy group at four steps. 
A3 isolates the effect of increasing only the hard group, with all easy layers left at the default five steps.
A4 and A5 further probe the coarse hard/easy taxonomy. 
Instead of uniformly favoring \texttt{attn\_q/o/v}, they assign more steps to \texttt{attn\_v} and \texttt{attn\_k}, while reducing or preserving the MLP projections. 

\begin{table}[t]
\centering
\small
\caption{Static operator-wise NS step allocation configurations. KJ is the default baseline with five NS steps for all Muon-managed matrices.}
\label{tab:ns-step-allocation}
\begin{tabular}{llcccccc}
\toprule
\textbf{Group} & \textbf{Operator Type} & \textbf{KJ} & \textbf{A1} & \textbf{A2} & \textbf{A3} & \textbf{A4} & \textbf{A5} \\
\midrule
Hard
& \texttt{attn\_q}    & 5 & 5 & 6 & 6 & 4 & 4 \\
& \texttt{attn\_o}    & 5 & 5 & 6 & 6 & 4 & 4 \\
& \texttt{attn\_v}    & 5 & 5 & 6 & 6 & 7 & 6 \\
\midrule
Easy
& \texttt{attn\_k}    & 5 & 4 & 4 & 5 & 7 & 6 \\
& \texttt{mlp\_up}    & 5 & 4 & 4 & 5 & 4 & 5 \\
& \texttt{mlp\_gate}  & 5 & 4 & 4 & 5 & 4 & 5 \\
& \texttt{mlp\_down}  & 5 & 4 & 4 & 5 & 4 & 4 \\
\bottomrule
\end{tabular}
\end{table}

\begin{table}[!t]
\centering
\small
\setlength{\tabcolsep}{2.2pt}
\caption{Full results of NS step allocation experiments on pre-training Qwen3-300M/0.6B.}
\begin{tabular}{lccccccccccccc}
\toprule
\textbf{Optimizer}  & \textbf{AE}          & \textbf{AC}          & \textbf{SciQ}        & \textbf{MM}          & \textbf{MM-P}        & \textbf{HS}          & \textbf{OBQA}        & \textbf{PIQA}        & \textbf{RACE}        & \textbf{WG}          & \textbf{CSQA}        & \textbf{AGI}         & \textbf{Avg}   \\ \midrule
\textit{Qwen3-300M} & \multicolumn{1}{l}{} & \multicolumn{1}{l}{} & \multicolumn{1}{l}{} & \multicolumn{1}{l}{} & \multicolumn{1}{l}{} & \multicolumn{1}{l}{} & \multicolumn{1}{l}{} & \multicolumn{1}{l}{} & \multicolumn{1}{l}{} & \multicolumn{1}{l}{} & \multicolumn{1}{l}{} & \multicolumn{1}{l}{} &                \\
Muon-KJ             & 57.15                & 23.55                & 77.60                & 23.07                & 7.16                 & 30.93                & 19.00                & 64.74                & 29.00                & 51.38                & 19.99                & 17.45                & 35.09          \\
A1         & 56.06                & 21.84                & 79.60                & 23.15                & 6.64                 & 31.16                & 20.20                & 65.34                & 29.09                & 50.04                & 19.49                & 17.68                & 35.02          \\
A2                  & 56.01                & 22.44                & 78.90                & 23.16                & 7.75                 & 31.08                & 21.00                & 65.89                & 30.05                & 53.04                & 20.80                & 17.00                & \textbf{35.59} \\
A3                  & 56.61                & 22.78                & 77.60                & 23.60                & 7.15                 & 30.89                & 20.40                & 66.21                & 29.47                & 50.43                & 19.98                & 16.80                & 35.16          \\
A4                  & 56.40                & 23.21                & 78.00                & 24.43                & 7.64                 & 30.96                & 21.60                & 65.18                & 30.14                & 49.09                & 19.98                & 17.50                & 35.34          \\
A5                  & 56.90                & 23.29                & 77.20                & 22.99                & 7.12                 & 31.01                & 20.40                & 65.61                & 30.72                & 49.88                & 19.98                & 17.42                & 35.21          \\ \midrule
\textit{Qwen3-0.6B} & \multicolumn{1}{l}{} & \multicolumn{1}{l}{} & \multicolumn{1}{l}{} & \multicolumn{1}{l}{} & \multicolumn{1}{l}{} & \multicolumn{1}{l}{} & \multicolumn{1}{l}{} & \multicolumn{1}{l}{} & \multicolumn{1}{l}{} & \multicolumn{1}{l}{} & \multicolumn{1}{l}{} & \multicolumn{1}{l}{} &                \\
Muon-KJ             & 62.16                & 26.79                & 81.90                & 23.80                & 7.30                 & 34.75                & 23.20                & 67.90                & 31.67                & 51.93                & 19.00                & 16.92                & 37.28          \\
A1                  & 61.32                & 27.39                & 82.80                & 24.52                & 8.22                 & 34.99                & 24.60                & 66.76                & 32.15                & 53.91                & 20.39                & 16.46                & \textbf{37.79} \\
A2                  & 61.96                & 26.37                & 82.20                & 24.27                & 8.12                 & 35.09                & 22.40                & 67.08                & 30.24                & 52.25                & 19.57                & 18.20                & 37.31          \\
A3                  & 61.15                & 25.60                & 82.20                & 24.64                & 6.89                 & 34.93                & 21.60                & 67.57                & 31.00                & 52.49                & 21.87                & 18.02                & 37.33          \\
A4                  & 60.82                & 25.94                & 80.70                & 24.15                & 6.71                 & 34.90                & 21.60                & 67.85                & 30.14                & 52.17                & 20.88                & 17.32                & 36.93          \\
A5                  & 61.62                & 27.22                & 81.80                & 25.25                & 7.41                 & 34.78                & 22.20                & 67.41                & 32.06                & 51.62                & 18.84                & 17.81                & 37.34          \\ \bottomrule
\end{tabular}
\label{tab:ns_step_full}
\end{table}

\paragraph{Results.}

Full results are provided in Tab.~\ref{tab:ns_step_full}. 
Although this simple allocation scheme can be effective, its performance gains are not stable across variants, suggesting that static hand-crafted NS step allocation is only a coarse heuristic rather than a robust replacement for adaptive scheduling.


\end{document}